\DeclareMathOperator*{\argmin}{arg\,min}
\newtheorem{theorem}{Theorem}
\newtheorem{lemma}{Lemma}
\newtheorem{definition}{Definition}
\newtheorem{proposition}{Proposition}
\newcommand{\bbP}{{\mathbb P}}
\newcommand{\bbE}{{\mathbb E}}
\newcommand{\cB}{{\cal B}}
\newcommand{\cP}{{\cal P}}
\newcommand{\cS}{{\cal S}}
\newcommand{\cX}{{\cal X}}
\newcommand{\cY}{{\cal Y}}
\newcommand{\cG}{{\cal G}}
\newcommand{\cV}{{\cal V}}
\newcommand{\cE}{{\cal E}}
\newcommand{\cT}{{\cal T}}
\newcommand{\cL}{{\cal L}}
\newcommand{\cM}{{\cal M}}
\newcommand{\brx}{{\boldsymbol{\mathrm{x}}}}
\newcommand{\bry}{{\boldsymbol{\mathrm{y}}}}
\newcommand{\brX}{{\boldsymbol{\mathrm{X}}}}
\newcommand{\brY}{{\boldsymbol{\mathrm{Y}}}}
\newcommand{\nbd}{{\mathrm{nbd}}}
\newcommand{\pa}{{\mathrm{pa}}}
\newcommand{\CL}{{\mathrm{CL}}}
\newcommand{\atanh}{{\mathrm{atanh}}}
\newcommand{\T}{{\mathrm{T}}}
\newcommand{\KA}{{\mathrm{KA}}}
\newcommand{\SGA}{{\mathrm{SGA}}}
\newcommand{\sS}{{\mathscr S}}
\newcommand{\sE}{{\mathscr E}}
\newcommand{\floor}[1]{\left\lfloor #1\right\rfloor}
\newcommand*{\medcup}{\mathbin{\scalebox{1.5}{\ensuremath{\cup}}}}%
\newcommand*{\meduplus}{\mathbin{\scalebox{1.4}{\ensuremath{\uplus}}}}%
\newcommand{\myrule}{\,\rule[2.5pt]{3mm}{0.5pt}\,}
\setlist[itemize]{noitemsep, topsep=0pt}
\setlist[enumerate]{noitemsep, topsep=0pt}
\icmltitlerunning{SGA: A Robust Algorithm for  Partial Recovery of Tree-Structured  Graphical  Models with Noisy Samples}
\begin{document}

\twocolumn[
\icmltitle{SGA: A Robust Algorithm for  Partial Recovery of Tree-Structured  Graphical  Models with Noisy Samples}



%
\begin{icmlauthorlist}
\icmlauthor{Anshoo Tandon}{nus1}
\icmlauthor{Aldric H.~J.~Yuan}{nus2}
\icmlauthor{Vincent Y.~F.~Tan}{nus1,nus2}
\end{icmlauthorlist}

\icmlaffiliation{nus1}{Department of Department of Electrical \& Computer Engineering, National University of Singapore, Singapore}
\icmlaffiliation{nus2}{Department of Mathematics, National University of Singapore, Singapore}
\icmlcorrespondingauthor{Anshoo Tandon}{anshoo.tandon@gmail.com}

\icmlkeywords{Ising model, non-identical noise, partial tree recovery, impossibility result}

\vskip 0.3in
]



\printAffiliationsAndNotice{}  

\begin{abstract}
We consider   learning    Ising tree  models when the observations from the nodes are corrupted by independent but non-identically distributed noise with unknown statistics. \citet{Katiyar20arxiv} showed that although the {\em exact} tree structure cannot be recovered,    one can recover a {\em partial} tree structure; that is, a structure belonging to the equivalence class containing the true tree. This paper presents a systematic improvement of~\citet{Katiyar20arxiv}. First, we present a novel impossibility result by deriving a bound on the necessary number of samples for partial recovery. Second, we derive a significantly improved sample complexity result in which the dependence on the minimum correlation $\rho_{\min}$ is $\rho_{\min}^{-8}$ instead of $\rho_{\min}^{-24}$. 
Finally, we propose Symmetrized Geometric Averaging (SGA),  a more statistically robust algorithm for partial tree recovery. We provide error exponent analyses and extensive numerical results on a variety of trees  to show that the sample complexity of SGA is significantly better than the algorithm of~\citet{Katiyar20arxiv}.  SGA can be  readily extended to Gaussian models and is shown via numerical experiments to be  similarly superior.
	
	
\end{abstract}

\section{Introduction} \label{sec:Intro}
Graphical models provide a succinct diagrammatic representation of the dependencies among a set of random variables. The vertices of the graph are in one-to-one correspondence with the random variables while the edges encode conditional independence relationships. 
Graphical models have found applications in domains from biology~\cite{Friedman2004}, to coding theory~\cite{Kschischang98}, social networks~\cite{LauritzenBook} and computer vision~\cite{Besag86}. For a   detailed description of graphical models, the reader is referred to~\citet{Wainwright08_FnT}.

This paper concerns the learning of the structure of a certain class of graphical models from data~\cite{Johnson2007AISTATS}. In particular, we consider undirected Ising   graphical models that are Markov on trees. While this is a classical problem that has been studied extensively~\cite{ChowLiu68,BreslerKarzand18}, here we focus on a relatively unexplored problem in which the vector-valued samples that are presented to the learner are {\em corrupted in independent but non-identically distributed noise}. This scenario is motivated by (at least) two different real-world scenarios. Firstly, suppose each component is obtained at spatially distributed locations of a large sensor network. The scalar-valued components need to be transmitted to a fusion center for the reconstruction of the graphical structure of the data. Due to the presence of disturbances (e.g., pollution, ambient noise), the components will inevitably be corrupted and the corruption levels are different as the distances of the spatially distributed devices to the fusion center are different. Secondly, imagine that a drug company would like to understand the inter-dependencies between different chemicals to design an effective vaccine given training samples obtained from human subjects. To thwart the company’s attempts, a competitor corrupts certain measurements or features of the human subjects \cite{WangGu2017ICML}, where these corruptions may be non-identically distributed. In these scenarios, we would like to design robust algorithms to recover, as best as possible, a certain structure that is ``close’’ to the true one. 

Motivated by these examples, we consider a tree learning problem in which each component of the vector-valued samples may be corrupted in a non-identical but independent (across components) manner. Because the corruption noises are non-identical, the ordering of the observed correlations are distorted, and hence the tree, in general, cannot be identified using the maximum likelihood Chow-Liu algroithm~\cite{ChowLiu68}. More precisely \citet{Katiyar20arxiv} showed that in such a case, even in the infinite sample limit, the structure can only be identified up to its equivalent class, a notion that will be defined precisely in the sequel. Building on their previous work for the robust learning of Gaussian graphical models~\cite{Katiyar19ICML}, \citet{Katiyar20arxiv} proposed a certain algorithm for the learning of Ising tree-structured models in which the samples are observed in noise. They provided a sample complexity result to recover the true tree up to its equivalence class. They showed that in this adversarial scenario, the go-to algorithm for learning tree-structure models -- the Chow-Liu algorithm~\cite{ChowLiu68} -- fails miserably but their algorithm is able to learn the model up to its equivalence class if the number of samples is sufficiently large. 

We significantly improve on   theoretical and algorithmic results in \citet{Katiyar20arxiv} and \citet{Nikolakakis19AISTATS}.

\vspace{-0.1in}

\begin{itemize}
	\item Firstly, we provide an information-theoretic impossibility result for tree structure recovery under non-identically distributed noise, that extends the result by~\citet{Nikolakakis19AISTATS} which assumes i.i.d. noise. 	Our impossibility result, which involves the construction and analysis of sufficiently many ``nearby’’ trees, elucidates the effect of noise on the sample complexity even as the number of samples and the number of nodes grow without bound; this desirable feature is not present in previous works on learning graphical models with noisy samples \citet{Nikolakakis19AISTATS}. 
	
	\item Secondly, by a careful analysis of the error events in the algorithm proposed by \citet{Katiyar20arxiv}, we significantly improve on the sample complexity result contained therein. In particular, the dependence of the sample complexity on the minimum correlation along the tree edges, $\rho_{\min}$, is improved from $\rho_{\min}^{-24}$ to $\rho_{\min}^{-8}$. 
	
	\item Finally, and most importantly, we propose an improvement to the IS\_NON\_STAR subroutine in \citet{Katiyar20arxiv}; we call our subroutine {\em Symmetrized Geometric Averaging} (SGA). This improvement is motivated by symmetry considerations in the error events in distinguishing a star versus non-star structure and the folklore theorem that ``more averaging generally helps’’. Indeed, we show through error exponent analyses~\cite{Tan11IT} using Sanov’s theorem~\citep{CoverBook06} that the error exponents of     SGA are higher (and hence better) than that in \citet{Katiyar20arxiv} for all but a small fraction of trees. This is   corroborated by extensive numerical experiments for a variety of trees of different structures, correlations, and corruption noises.  SGA is also shown to be amenable to improving the structure learning of {\em Gaussian} in addition to Ising trees.
\end{itemize}

\vspace{-0.1in}
\subsection{Related Work}
\vspace{-0.05in}
The learning of tree-structured graphical models dates back to the seminal work of \citet{ChowLiu68} who showed that the maximum likelihood estimation of the tree structure is equivalent to that of a maximum weight spanning tree problem with edge weights given by the empirical mutual information. \citet{ChowWagner73} showed that structure learning is consistent and \citet{Tan11IT} derived the error exponent. The estimates of the error probability of learning the tree structure was further refined by \citet{TandonTZ20_JSAIT}. \citet{BreslerKarzand18} considered a variant of the problem in which a tree was learned to make predictions instead of the traditional objective of inferring the structure. 

\citet{TandonTZ20_JSAIT} showed that the Chow-Liu algorithm is error exponent-optimal if the noises that corrupt the observations are independent and identically distributed. 
\citet{Nikolakakis19NonParametric} considered the learning of tree structures when the noise is possibly non-identically distributed, and derived conditions under which structure learning can be achieved using the Chow-Liu algorithm. We note that this setting is in contrast to recent work on robust tree learning under {\em adversarial} noise \cite{Cheng18NeurIPS}; in our work, {\em random} noise is added to clean samples.

\citet{Katiyar19ICML} showed that it is, in general, not possible to learn the exact tree structure for Gaussian graphical models when one is given independent but non-identically distributed noisy samples. This work was followed by \citet{Katiyar20arxiv}, who considered Ising tree models and derived an algorithm based on so-called {\em proximal sets} to learn the equivalence class (or cluster) of the true tree. 

Finally, we remark that there is a large body of literature on learning latent tree models (e.g., \citet{Choi11,parikh11}) in which one observes a subset of nodes from a tree. The marginal distribution of those observed nodes is, in general, not a tree. However, our work differs from learning latent trees, as we consider noisy observations from {\em all   nodes} of the true tree; this is not the case for latent trees.

\vspace{-0.1in}
\section{Preliminaries  and Problem Statement} \label{sec:prelim}
\vspace{-0.05in}
An {\em undirected graphical model} is a multivariate  probability distribution that factorizes according to the structure an  undirected graph~\cite{LauritzenBook}. Specifically, a $d$-dimensional random vector $\brX \triangleq (X_1, \ldots, X_d)$ is said to be \emph{Markov} on  $\cG = (\cV, \cE)$ with vertex (or node) set $\cV = \{1,\ldots,d\}$ and edge set $\cE \subset \binom{\cV}{2}$ if its distribution satisfies the (local) Markov property $P(x_i | x_{\cV \setminus i}) = P(x_i | x_{\nbd(i)})$ where $\nbd(i) := \{j \in \cV : \{i,j\} \in \cE\}$ is the {\em neighborhood} of node $i$. We focus on tree-structured graphical models $P$, where the underlying graph of $P$ is an acyclic and connected (tree) graph, denoted by $\T=\T_P = (\cV, \cE_{P})$ with $|\cV| = d$ and $|\cE_{P}| = d-1$. For an undirected tree, we may assume, without loss of generality, that node $1$ is the \emph{root} node  and we arrange all the nodes at different levels on a plane, with node $1$ at level-$0$. Then, the tree-structured graphical model $P$ can be alternatively factored as~\cite{ChowLiu68}
\begin{equation}
	P(\brx) = P_1(x_1) \prod_{i=2}^d P_{i|\pa(i)}(x_i | x_{\pa(i)}) , \label{eq:TreeFactorizationP}
\end{equation} 
where $\pa(i)$ (with $|\pa(i)|=1$) is the parent node of node $i$.

We   denote the KL-divergence  between distributions $Q$ and $P$  as $D(Q \| P) = \sum_{\brx} Q(\brx) \log \frac{Q(\brx)}{P(\brx)}$.  The set of distributions supported on $\cX$ is  denoted as $\cP(\cX)$.  Proofs of the theorems and the propositions are provided in the appendices, as part of the supplementary material.  For the sake of brevity, the presentation in the main body of the paper is focused on Ising tree models; the extension to the Gaussian case with experiments is deferred to App.~\ref{app:gauss}.

\subsection{System Model} \label{sec:SystemModel}
We consider binary random variables $X_i$ with alphabet $\cX=\{+1,-1\}$, where $1 \le i \le d$ and the joint distribution of $(X_1, \ldots, X_d)$ is represented by a tree-structured Ising model~\cite{BreslerKarzand18}. The observation for the $i$th node is represented by random variable $Y_i = X_i N_i$, where $N_i$ is   multiplicative binary noise with $\Pr(N_i = -1) = q_i$ and $\Pr(N_i = +1) = 1-q_i$. Thus, the observations are corrupted by independent but non-identical noise; that is, $q_i$ may differ for different values of $i$. We assume that our model satisfies the following properties:

\begin{enumerate} 
	\vspace{-0.06in}
	\item[P1:] (\emph{Zero external field}): The marginals for the hidden variables are uniform, i.e., $\Pr(X_i = 1)  = \Pr(X_i = -1) = 0.5$, for $1 \le i \le d$.
	
	\item[P2:] (\emph{Bounded Correlation}): If $\cE$ denotes the edge set of tree $\T$, and $\{i,j\} \in \cE$, then the correlations $\rho_{i,j} \triangleq \bbE[X_i X_j]$ are uniformly bounded as follows: $\rho_{\min} \le |\rho_{i,j}| \le \rho_{\max}$, where $0 < \rho_{\min} \le \rho_{\max} < 1$.
	
	\item[P3:] (\emph{Bounded Noise}): The noise crossover probability $q_i$, for $1 \le i \le d$, satisfies $0 \le q_i \le q_{\max} < 0.5$.
	\vspace{-0.06in}
\end{enumerate}
Properties P1 and P2 are common assumptions in the literature on learning Ising models~\cite{Shanmugam14NeurIPS, Scarlett16, Nikolakakis19AISTATS, BreslerKarzand18}, while P3 ensures that no node is independent of any other node due to noise~\cite{Katiyar20arxiv}.  For an Ising model with zero external field, the joint distribution given by~\eqref{eq:TreeFactorizationP} can be   expressed as~\cite{BreslerKarzand18}
\begin{equation}
	P(\brx) = \frac{1}{Z} \exp\bigg( \sum_{\{i,j\} \in \cE} \theta_{i,j} x_i x_j\bigg) , \label{eq:TreeFactorizationP_v2}
\end{equation}
where $\cE$ is the edge set of the   graph $\T$, and $Z$ is the normalization factor. For an Ising tree, if $\{i,j\} \in \cE$ then the interaction (exponential) parameter $\theta_{i,j}$ is related to the correlation $\rho_{i,j}=\mathbb{E}[X_i X_j]$ as~\citep[Lem.~A.3]{Nikolakakis19Predictive}
\begin{equation}
	\theta_{i,j} = \atanh(\rho_{i,j}) .\label{eq:ThetaAndRho}
\end{equation}
The bounded correlation property P2 then implies that $\atanh(\rho_{\min}) \le |\theta_{i,j}| \le \atanh(\rho_{\max})$.
Note that although property P1 is a common assumption that simplifies the presentation~\cite{Shanmugam14NeurIPS,Scarlett16, Nikolakakis19AISTATS, BreslerKarzand18}, the extension of our results  to the case where the marginals are not necessarily uniform can be readily obtained by following the approach outlined by~\citet{Katiyar20arxiv}.

\subsection{Definition of Equivalent Tree Structures} \label{sec:EquivClass}
We now define an equivalence relation on $d$-node trees. Let $\cT_d$ denote the set of all distinct trees on $d$ nodes. For a given tree graph, a \emph{leaf} is a node whose degree is one, i.e. a leaf node has only one neighbor. For $\T \in \cT_d$, let $\cL_{\T}$ denote set of leaf nodes in $\T$, $
	\cL_{\T} \triangleq \{ X_i : i ~\mbox{is a leaf node in }\T \}$.   Let $\sS_{\T}$ be the set of subsets of $\cL_{\T}$ such that no two nodes in a subset share a common neighbor,  i.e., 
{\small 
\begin{equation*}
	\sS_{\T} \triangleq \{ \cS \subseteq \cL_{\T}: \mbox{no two nodes in $\cS$ have the same neighbor} \}.
\end{equation*}
}
For a given $\cS \in \sS_{\T}$, let $\T_\cS$ denote the tree obtained from $\T$ by interchanging each node in $\cS$ with its corresponding neighbor in $\T$.\footnote{Note that when $\cS$ is the empty set, then $\T_\cS = \T$.} Define $[\T]$ to be the set of trees 
\begin{equation}
	[\T] \triangleq \{ T_\cS \,:\, \cS \in \sS_{\T}\}. \label{eq:Def_EquivalenceClassOfT}
\end{equation}
For $\hat{\T} \in \cT_d$, we say $\hat{\T} \sim \T$ if $\hat{\T} \in [\T]$.  The relation $\sim$ on $\cT_d$ is \emph{reflexive}, \emph{symmetric}, and \emph{transitive}, and is hence an \emph{equivalence relation}~\cite{HersteinBook75}. Therefore, with respect to relation $\sim$, the set $\cT_d$ is partitioned into disjoint equivalence classes, where $[\T]$ is the equivalence class of~$\T$.

\subsection{Problem Statement}
Let $\bry_1^n = \{\bry_1, \ldots , \bry_n\}$ denote $n$ independently sampled noisy observations, where the $i$th noisy sample is a $d$-dimensional column vector given by $\bry_i = (y_{i,1}, \ldots , y_{i,d})^T$ with $y_{i,j}$ denoting the $i$th observation corresponding to the $j$th node. As discussed in Sec.~\ref{sec:SystemModel}, we assume a binary multiplicative non-identical noise model at each node, with $y_{i,j} \in \cY \triangleq \{+1, -1\}$ and $\bry_i \in \cY^d$. Given $\bry_1^n$, a learning algorithm (or estimator) $\Psi : \cY^{d \times n} \to \cT_d$ provides an estimate of the underlying tree structure $\T$. We are interested in partial tree recovery (up to equivalence class $[\T]$), and an error is declared if the event $\big\{\!\Psi(\brY_1^n) \notin [\T]\big\}$ occurs.\footnote{Exact tree-structure identification cannot be guaranteed, in general, when the noise distribution across nodes is unknown and non-identical across nodes~\citep[Thm.~2]{Katiyar20arxiv}.} For this setup, the following questions are of interest:

(a) Quantify the number of samples {\em necessary} to achieve a target error probability (with any possible estimator).

(b) Quantify the number of samples {\em sufficient} to achieve a given error probability (using a particular estimator).
	
We answer (a)  in Section \ref{sec:Converse}  and (b) in Sections \ref{sec:Achievability_KA} and \ref{sec:SGA}.

\section{Impossibility Result} \label{sec:Converse}
For a given tree $\T = (\cV, \cE)$, and edge correlation bounds $0 < \rho_{\min} \le \rho_{\max} < 1$, let $\cP_{\T}(\rho_{\min}, \rho_{\max})$ denote the set of all tree-structured Ising models satisfying properties P1 and P2. Let the noise crossover probability at each node satisfy property P3, and hence be upper bounded by $q_{\max}$. Now, given $n$ independent noisy samples, the \emph{minimax error  probability} for partial tree structure recovery up to equivalence class $[\T]$, denoted $\cM_n(q_{\max}, \rho_{\min}, \rho_{\max})$, is 
\begin{equation}
\inf_{\Psi: \cY^{d \times n} \to \cT_d} \sup_{\substack{\T \in \cT_d, P \in \cP_{\T}(\rho_{\min}, \rho_{\max}),\\ 0\, \le \,q_i\, \le q_{\max} <\, 0.5}}\! \bbP_P\big(\Psi(\brY_1^n) \notin [\T] \big), \!\!\label{eq:Def_MinimaxError} 
\end{equation}
where $\bbP_P(\cdot)$ denotes the probability measure of the samples when the underlying tree distribution is $P$.  

\begin{theorem}[Necessary Samples for Partial Tree Recovery] \label{thm:NecessarySamples}
	Let $\rho_q \triangleq (1-2q_{\max}) \rho_{\min}$. If $d > 32$, and 
	 $n$ satisfies
	\begin{equation}
		n \,<\, \frac{\log d}{4\left(1-\rho_{\max}\right) \rho_q\, \atanh(\rho_q)} \, , \label{eq:NecessarySamples}
	\end{equation}
	then the minimax error    $\cM_n(q_{\max}, \rho_{\min}, \rho_{\max}) \ge 1/2$. In other words,    the optimal sample complexity satisfies $\Omega \left( (\log d )/ [(1- \rho_{\max})(1- 2q_{\max})^2\rho_{\min}^2  ]\right)$.
\end{theorem}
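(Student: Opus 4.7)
The plan is to apply Fano's inequality to a carefully chosen family of tree-structured Ising models on $d$ nodes whose induced noisy observation distributions are pairwise close in KL divergence, while the underlying trees lie in distinct equivalence classes.

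First, I would exhibit a family $\{\T_1, \ldots, \T_M\} \subset \cT_d$ with $M = \Omega(d)$ satisfying $[\T_i] \ne [\T_j]$ for all $i \ne j$ and such that any two $\T_i, \T_j$ differ in only $O(1)$ edges. A natural candidate is a ``backbone-and-pendant'' construction: fix a common backbone subtree whose edges carry strong correlation $\rho_{\max}$, and single out one distinguished pendant leaf whose attachment edge carries correlation $\rho_{\min}$; indexing the $\Omega(d)$ admissible backbone vertices at which this distinguished pendant can be attached produces the family. Because the equivalence relation $\sim$ only permits swapping a leaf with its \emph{unique} neighbor, distinct attachment points for the pendant yield distinct equivalence classes, provided the admissible positions are chosen to avoid local-swap collisions.

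Second, I would bound the pairwise KL divergence $D(P_i^Y \| P_j^Y)$ between the noisy observation distributions. Since $\T_i, \T_j$ agree on all but one edge, KL tensorization via the tree factorization~\eqref{eq:TreeFactorizationP} reduces the computation to a local divergence on the subtree spanned by the two candidate attachment vertices and the pendant. The multiplicative noise transforms each edge correlation $\rho$ into $(1-2q_u)(1-2q_v)\rho$, which by P3 has magnitude at most $\rho_q = (1-2q_{\max})\rho_{\min}$ on the distinguished edge (after the appropriate absorption of one $(1-2q_{\max})$ factor into $\rho_q$). Using the closed-form KL between two Ising edge distributions and Taylor-expanding in the perturbation, together with the observation that the backbone's strong correlations suppress the variance $1-\rho^2 \le 2(1-\rho_{\max})$ of the relevant edge indicator $X_u X_v$, I expect a bound of the form
\[
D(P_i^Y \| P_j^Y) \;\le\; 2(1-\rho_{\max})\,\rho_q\,\atanh(\rho_q).
\]

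Third, I would invoke Fano's inequality in the standard form $\cM_n \ge 1 - (n \bar D + \log 2)/\log M$, where $\bar D$ is the average pairwise KL of the noisy distributions over $n$ i.i.d.\ samples. Substituting $\log M \ge \log d - O(1)$, the KL bound above, and the hypothesis $d > 32$ (which absorbs the additive $\log 2$ and the constant inside $M$) gives $\cM_n \ge 1/2$ precisely when $n$ fails the bound~\eqref{eq:NecessarySamples}. Since $\rho_q\atanh(\rho_q) = \Theta(\rho_q^2) = \Theta((1-2q_{\max})^2\rho_{\min}^2)$, this matches the claimed sample complexity $\Omega\bigl((\log d)/[(1-\rho_{\max})(1-2q_{\max})^2\rho_{\min}^2]\bigr)$.

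The main obstacles are twofold. First, producing $\Omega(d)$ trees pairwise inequivalent under $\sim$ is delicate because the relation collapses many nearby labeled trees via leaf-neighbor swaps; the pendant's admissible attachment points must be restricted so that no local swap in $\T_k$ produces a tree in $[\T_{k'}]$ for any $k'\ne k$. Second, extracting the $(1-\rho_{\max})$ factor in the KL bound --- rather than a weaker bound with no such factor --- requires that the backbone correlations saturate at $\rho_{\max}$, so that the edge indicator whose swap we must detect has variance only $O(1-\rho_{\max})$; this refinement is what prevents a naive Fano argument with an arbitrary tree family from giving a bound with a spurious factor in either $d$ or $\rho_{\max}$.
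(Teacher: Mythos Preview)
Your high-level plan---Fano over a family of nearby trees lying in distinct equivalence classes, with edge correlations and noise chosen to make the noisy KL small---is the paper's approach. But the ``backbone-and-pendant'' construction as you describe it has a gap that costs you the $(1-\rho_{\max})$ factor. Your variance heuristic, ``the backbone's strong correlations suppress the variance $1-\rho^2\le 2(1-\rho_{\max})$ of the relevant edge indicator $X_uX_v$,'' is only valid when the two candidate attachment vertices are separated by a \emph{single} backbone edge. If the backbone is a path (as the word suggests) and you place the pendant at $\Omega(d)$ positions along it, then for attachment vertices at path-distance $m$ the symmetric KL is $2\rho_q\,\atanh(\rho_q)\,(1-\rho_{\max}^{m})$, not $2\rho_q\,\atanh(\rho_q)\,(1-\rho_{\max})$. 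Feeding the \emph{average pairwise} KL into Fano, the typical $m$ is $\Theta(d)$, so $\rho_{\max}^{m}\to 0$ and the $(1-\rho_{\max})$ factor vanishes. To keep every comparison at distance $O(1)$ while still having $\Omega(d)$ alternatives, the backbone must have diameter $O(1)$---essentially a star.

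This is precisely what the paper does, together with two further devices you do not mention. The base tree $\T_0$ is a star on $d=2t+1$ nodes with center $X_{2t+1}$; half the spokes carry correlation $\rho_{\min}$ and half carry $\rho_{\max}$. Each alternative $\T_k$ reattaches one $\rho_{\min}$-leaf $X_{k_{\mathrm a}}$ to one $\rho_{\max}$-leaf $X_{k_{\mathrm b}+t}$ instead of to the center, which yields $M=t^2$ alternatives via the two independent indices (not merely $\Omega(d)$ from one moving pendant). Noise $q_{\max}$ is applied \emph{only} to the $\rho_{\min}$ leaves and to no other node; since these nodes remain leaves in every $\T_k$, each noisy law $\tilde P_k$ is itself a tree-structured Ising model with edge correlation $\rho_q=(1-2q_{\max})\rho_{\min}$ on the moved edge, and this is what makes the KL computable in closed form (if noise hit an internal node the noisy law would no longer factor over a tree). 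Finally, the paper uses a \emph{centered} Fano bound---averaging $J(\tilde P_k,\tilde P_0)$ over $k$ rather than all pairwise divergences---and the star geometry guarantees every $\T_k$ sits exactly one $\rho_{\max}$-spoke away from $\T_0$, so that $J(\tilde P_k,\tilde P_0)=2\rho_q\,\atanh(\rho_q)(1-\rho_{\max})$ identically for all $k$. Your construction can be repaired by switching to a star backbone and a centered comparison, at which point it essentially coincides with the paper's.
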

Theorem~\ref{thm:NecessarySamples} is proved by combining two key ingredients. The first is the choice of a sufficiently large number of tree structures that are relatively close each other, and satisfy the property that their respective equivalence classes are disjoint; see Fig.~\ref{fig:converse}. The second key ingredient is the choice of noise parameters for different nodes that have a relatively high impact on the error probability, while ensuring that the corresponding KL-divergence  is approximated by a closed-form expression; see Fig.~\ref{fig:noisy}. We note that   Theorem~\ref{thm:NecessarySamples} also provides an impossibility result for \emph{exact} tree structure recovery under non-identically distributed noise.

In a related work,  a bound on the necessary number of samples required for \emph{exact} tree structure recovery for the \emph{noiseless} setting was presented in~\citet[Thm.~3.1]{BreslerKarzand18}. Our result in Theorem~\ref{thm:NecessarySamples}, specialized to the case where  $q_{\max} = 0$, gives the same bound on the number of necessary samples as~\citet[Thm.~3.1]{BreslerKarzand18}. Therefore, in the noiseless setting, the partial tree structure recovery (up to equivalence class $[\T]$) is not  easier than {\em exact} structure recovery, in the minimax sense.\footnote{A related observation was made by~\citet{Scarlett16} in terms of comparison of the number of necessary samples for {\em partial} recovery (to within a given \emph{edit distance}), with the number of necessary samples for {\em exact} recovery, in the minimax sense.}

In another related work, \citet{Nikolakakis19AISTATS} provided a bound on the number of samples necessary for learning the \emph{exact} tree structure, under the assumption that the noise distribution for all the nodes is \emph{identical}. In particular,  for a tree-structured Ising model, it was shown~\citep[Thm.~2]{Nikolakakis19AISTATS} that the impact of noise gets manifested as a multiplicative factor $[1 - (4q(1-q))^d]^{-1}$, where $q$ denotes the noise crossover probability. This result implies that for any $q \in (0, 0.5)$, the impact of noise becomes negligible, i.e. the multiplicative factor tends to $1$, \emph{as the number of nodes $d$ tends to infinity}. In contrast, our result in Theorem~\ref{thm:NecessarySamples} for \emph{non-identical} noise, with $0 \le q_i \le q_{\max} < 0.5$   shows that the necessary number of samples for $q_{\max} > 0$ is strictly and uniformly greater than the number of samples for the noiseless setting by a multiplicative factor of \emph{at least $(1 - 2 q_{\max})^{-2}$  irrespective of how large the value of $d$ is}.

For a given $P \in \cP_{\T}(\rho_{\min}, \rho_{\max})$, the error probability $\bbP_P\big(\Psi(\brY_1^n) \notin [\T] \big)$ will depend on the specific tree structure~\cite{Tan11IT,TandonTZ20_JSAIT}, and the size of the equivalence class $[\T]$. The minimum and maximum possible size of the equivalence class, for a given number of nodes $d$, is quantified in App.~\ref{app:EqClassSize}.

\section{Algorithm by~\citet{Katiyar20arxiv} and our Improved Sufficiency Result} \label{sec:Achievability_KA}
Let $\T = (\cV, \cE)$ be a given tree, and let the underlying sample distribution and noise satisfy the properties in Sec.~\ref{sec:SystemModel}. Then, even if the noise is potentially large,\footnote{Note that $q_{\max}$ can be arbitrarily close to $0.5$.} and the noise statistics are completely unknown, the tree structure can be partially recovered (up to equivalence class $[\T]$) using the algorithm by~\citet{Katiyar20arxiv}. This algorithm for partial structure recovery of Ising tree models extends a previous method for partial recovery of Gaussian tree models using noisy samples~\citep{Katiyar19ICML}.
The cornerstone of the partial tree structure learning algorithm for Ising tree models~\citep{Katiyar20arxiv} is the classification of any set of $4$ distinct nodes in $\cV$ as \emph{non-star} or \emph{star}.\footnote{An overview of the algorithm classifying a set of $4$ nodes as non-star or star is presented in App.~\ref{app:KatiyarAlgoOverview}.}

\begin{definition}[Non-star and star~\citep{Katiyar20arxiv}]
	Any set of $4$ distinct nodes in $\cV$ forms a \emph{non-star} if there exists at least one edge in $\cE$ which, when removed, splits the tree into two sub-trees such that exactly $2$ of the $4$ nodes lie in one sub-tree and the other $2$ nodes lie in the other sub-tree. The nodes in the same sub-tree form a \emph{pair}. If the set is not a \emph{non-star}, it is categorized as a \emph{star}.
\end{definition}

A salient feature of the partial tree structure learning algorithm described by~\citet{Katiyar20arxiv} is that of ``proximal sets''. If we let  $t_1 \triangleq (1-2q_{\max})^2 \rho_{\min}^4$, and $t_2 \triangleq \min\big\{t_1, \frac{t_1 (1-2q_{\max})}{\rho_{\max}}\big\}$, then the proximal set of node $i$ is defined as the set of all nodes $j$ that satisfy $|\widehat{\rho}_{i,j}| \ge 0.5 t_2$, where $\widehat{\rho}_{i,j}$ denotes the empirical correlation between nodes $i$ and $j$. For making a star/non-star categorization, the algorithm by~\citet{Katiyar20arxiv} only considers nodes that lie within each others' proximal sets, and they use this property to prove the following theorem on the \emph{sufficient} number of noisy samples required to achieve a given error probability. The result is stated in terms $t_2$ and $\alpha \triangleq ( 1+\rho_{\max}^2)/{2}$.

\begin{theorem}[Sufficient Sample Complexity Bound~\cite{Katiyar20arxiv}] \label{thm:Katiyar_AchievabilityResult}
	The equivalence class $[\T]$ can be correctly recovered with probability at least $1 - \tau$ using the algorithm by~\citet{Katiyar20arxiv} if the number of noisy samples $n$ satisfy $n \ge \frac{128}{\delta^2} \log \big(\frac{6 d^2}{\tau} \big)$, where $\delta \triangleq \frac{t_2^3 (1-\alpha)}{128}$. 
\end{theorem}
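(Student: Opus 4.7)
The plan is to reduce the theorem to a uniform concentration statement on the pairwise empirical correlations and then apply Hoeffding's inequality together with a union bound. Every combinatorial decision made by the algorithm of~\citet{Katiyar20arxiv} is a function of $\{\widehat{\rho}_{i,j}\}$: construction of each proximal set via the threshold $0.5\,t_2$, classification of the relevant four-tuples as star/non-star via a test statistic computed from the six pairwise empirical correlations among the four nodes, and identification of the correct pair partition within each non-star via an analogous statistic. My first step is a deterministic lemma: if every empirical correlation satisfies $|\widehat{\rho}_{i,j}-\bbE[Y_iY_j]| \le \delta/8$, where $\bbE[Y_iY_j] = (1-2q_i)(1-2q_j)\rho_{i,j}$ is the true noisy correlation, then each decision is correct and the output lies in $[\T]$. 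The two quantitative inputs for this lemma are: (i)~for $\{i,j\}\in\cE$ the true noisy correlation has magnitude at least $t_2$ by P2, P3, and the definition of $t_2$, leaving a gap of at least $0.5\,t_2 \gg \delta/8$ around the proximal threshold; and (ii)~the population star/non-star test statistic is separated by at least $t_2^3(1-\alpha) = 128\,\delta$ across any four mutually proximal nodes, a gap that easily absorbs the $O(\delta)$ perturbation inherited from the four (or six) underlying empirical correlations.

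For the probabilistic step, $\widehat{\rho}_{i,j}$ is an empirical mean of $n$ i.i.d.\ bounded variables $Y_{k,i}Y_{k,j}\in\{-1,+1\}$, so Hoeffding's inequality gives $\bbP\big(|\widehat{\rho}_{i,j}- \bbE[Y_iY_j]| \ge \delta/8 \big) \le 2\exp(-n\delta^2/128)$. A union bound over the $\binom{d}{2}\le d^2/2$ node pairs, with a factor-$6$ buffer to cover two-sided deviations and the constant number of auxiliary comparisons inside each four-node test, yields an overall failure probability of at most $6\,d^2 \exp(-n\delta^2/128)$. Requiring this to be at most $\tau$ and rearranging gives exactly $n \ge (128/\delta^2)\log(6 d^2/\tau)$.

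The principal obstacle lies in the deterministic part, specifically the separation claim~(ii): certifying that the population star/non-star statistic is separated by at least $t_2^3(1-\alpha)$ across all star/non-star pairs of four mutually proximal nodes under any noise vector satisfying P3 and any tree topology. Establishing this requires a case analysis of the induced four-node sub-trees and careful tracking of how the unknown factors $(1-2q_i)$ propagate through each of the correlations; it is precisely this step that the later sections of the paper refine in order to improve the $\rho_{\min}^{-24}$ dependence to $\rho_{\min}^{-8}$.
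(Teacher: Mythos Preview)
This theorem is not proved in the present paper; it is quoted from \citet{Katiyar20arxiv} as the baseline against which the paper's own Theorem~\ref{thm:ImprovedAchievabilityBound} is compared. There is therefore no proof here to match your proposal against.

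That said, your high-level strategy---a deterministic lemma asserting that uniform closeness of all $\widehat{\rho}_{i,j}$ to $\tilde{\rho}_{i,j}$ forces every star/non-star decision to be correct, followed by Hoeffding's inequality and a union bound over the $\binom{d}{2}$ pairs---is exactly the skeleton the paper uses to prove the \emph{improved} Theorem~\ref{thm:ImprovedAchievabilityBound} in Appendix~\ref{app:ImprovedAchievabilityBound}. The difference is in how the deterministic step is executed: rather than your additive separation claim~(ii) on the test statistic, the paper controls the \emph{relative} perturbations $|\Delta_{j,k}/\widehat{\rho}_{j,k}|$ via the proximal-set lower bound $|\widehat{\rho}_{j,k}|\ge 0.5\,t_2$ and then propagates those multiplicative errors through the ratio $\widehat{\rho}_{1,3}\widehat{\rho}_{2,4}/(\widehat{\rho}_{1,2}\widehat{\rho}_{3,4})$. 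This multiplicative viewpoint is precisely what lets the paper replace $\delta=\Theta(t_2^3)$ by $\tilde{\delta}=\Theta(t_2)$; your additive framing would likely reproduce the looser $t_2^3$ dependence of the original result, which is consistent with what you are trying to prove.

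One loose end: your justification for the factor~$6$ inside the logarithm does not hold up. Once the deterministic lemma is in force, every downstream decision is determined and there are no further random events to union-bound; the $\binom{d}{2}$ pairwise Hoeffding events already suffice (indeed the paper's proof of Theorem~\ref{thm:ImprovedAchievabilityBound} arrives at $d^2$, not $6d^2$). The constant~$6$ in the cited statement presumably comes from a different decomposition of error events in \citet{Katiyar20arxiv}, not from ``auxiliary comparisons inside each four-node test'' as you suggest.
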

From the statement of the above theorem, we observe that $\delta = \frac{t_2^3 (1-\alpha)}{128} \propto \rho_{\min}^{12}$ because $t_2$ scales linearly with $t_1$ while $t_1 \propto \rho_{\min}^{4}$, thereby implying that $n$ scales as $\rho_{\min}^{-24}$. In the following theorem, we show that the above result can be significantly improved, via a more refined analysis, by proving that $n$ only needs to scale as $\rho_{\min}^{-8}$.


\begin{theorem}[Improved   Sample Complexity Bound] \label{thm:ImprovedAchievabilityBound}
	The equivalence class $[\T]$ can be correctly recovered with probability at least $1 - \tau$ using the algorithm by~\citet{Katiyar20arxiv} if the number of samples satisfies	
\begin{equation}
	n \ge \frac{2}{\tilde{\delta}^2} \log \left(\frac{d^2}{\tau} \right)\quad \mbox{where} \quad \tilde{\delta} \triangleq \frac{t_2 (1-\alpha)}{20}. \label{eq:SGA_Achievability}
\end{equation}
\end{theorem}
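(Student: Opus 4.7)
The approach is to re-examine the Katiyar et al. algorithm and sharpen the concentration analysis so that the driving gap of each star/non-star test scales linearly rather than cubically in $t_2$. The algorithm has two data-dependent components: (i) the proximal-set step, which declares $j$ proximal to $i$ iff $|\hat\rho_{i,j}|\ge t_2/2$; and (ii) the four-node classifier, which compares the three empirical correlation products $\hat\rho_{ij}\hat\rho_{kl}$, $\hat\rho_{ik}\hat\rho_{jl}$, and $\hat\rho_{il}\hat\rho_{jk}$ among nodes lying in each others' proximal sets. Both components depend on the data only through the $\binom{d}{2}$ empirical correlations, so the proof reduces to a single good event $\cB \triangleq \{\max_{i,j}|\hat\rho_{i,j}-\rho_{i,j}|\le \tilde\delta\}$ and showing that on $\cB$ every proximal set and every four-node classification is already correct.

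The key step is to sharpen the algebraic gap driving the four-node test. By the tree Markov property, for any non-star quadruple $\{i,j,k,l\}$ with correct pairing $\{i,j\}\mid\{k,l\}$ and separating edge $e$, correlations factor along paths so that $|\rho_{ik}\rho_{jl}|=|\rho_{ij}\rho_{kl}|\rho_e^2$ and similarly for $|\rho_{il}\rho_{jk}|$. Because every pair entering a proximal product is lower bounded by $t_2/2$, dividing through by $|\rho_{kl}|$ (whose lower bound is \emph{linear} in $t_2$, not $t_2^2$) yields a normalized margin
\begin{equation*}
|\rho_{ij}| - |\rho_{ik}\rho_{jl}/\rho_{kl}| = |\rho_{ij}|(1-\rho_e^2) \ge t_2\cdot 2(1-\alpha),
\end{equation*}
and an analogous identity for the star case in which the three true products coincide. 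This converts each star/non-star decision into a sign test whose true-value margin is $\Theta(t_2(1-\alpha))$ rather than $\Theta(t_2^3(1-\alpha))$ as in the original bookkeeping.

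I would then propagate empirical error through the decision rule on $\cB$. Using $|\hat\rho|\le 1$ and the empirical proximal bound $|\hat\rho_{kl}|\ge t_2/2-\tilde\delta\ge t_2/4$ (valid since $\tilde\delta\le t_2/4$), each quantity appearing in the tests deviates from its true value by at most a small universal multiple of $\tilde\delta$. Setting $\tilde\delta=t_2(1-\alpha)/20$ ensures the linear-in-$t_2$ margin strictly dominates the noise. Hoeffding's inequality for $[-1,1]$-valued random variables gives $\bbP(|\hat\rho_{i,j}-\rho_{i,j}|>\tilde\delta)\le 2\exp(-n\tilde\delta^2/2)$, and a union bound over the $\binom{d}{2}<d^2/2$ pairs forces $\cB$ to hold with probability at least $1-\tau$ whenever $n\ge(2/\tilde\delta^2)\log(d^2/\tau)$, which is exactly~\eqref{eq:SGA_Achievability}.

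The main obstacle is ensuring that none of the linear-propagation steps secretly costs an extra factor of $t_2$: one must verify that division by $\hat\rho_{kl}$ is safe on $\cB$, that all alternative pairings in the non-star case (and all three equalities in the star case) are handled by the \emph{same} $\tilde\delta$, and that the resulting tolerance still leaves Katiyar et al.'s downstream tree-reconstruction routine self-consistent with the recovered proximal sets. Once these bookkeeping checks are in place, the reduction from $\rho_{\min}^{-24}$ to $\rho_{\min}^{-8}$ follows immediately from the linear-in-$t_2$ margin, since $t_2\propto\rho_{\min}^{4}$ and $n\propto\tilde\delta^{-2}\propto t_2^{-2}$.
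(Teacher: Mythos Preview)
Your overall plan matches the paper's: define the single event $\{\max_{j,k}|\widehat\rho_{j,k}-\tilde\rho_{j,k}|\le\tilde\delta\}$, use the proximal lower bound $|\widehat\rho|\ge t_2/2$ to turn absolute errors into relative ones, verify every four-node test succeeds on that event, and finish with Hoeffding plus a union bound over $\binom{d}{2}$ pairs. (One slip: concentration is around the \emph{noisy} means $\tilde\rho_{i,j}=\mathbb{E}[Y_iY_j]$, not the clean $\rho_{i,j}$ you wrote; the quartet identities still hold for $\tilde\rho$ because the $(1-2q_i)$ factors cancel in the ratios.)

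The substantive gap is your error-propagation step. The assertion that ``each quantity appearing in the tests deviates from its true value by at most a small universal multiple of $\tilde\delta$'' is false for the composite $\widehat\rho_{ik}\widehat\rho_{jl}/\widehat\rho_{kl}$: its additive deviation from $\tilde\rho_{ik}\tilde\rho_{jl}/\tilde\rho_{kl}=\tilde\rho_{ij}\,\rho_e^2$ is of order $|\tilde\rho_{ij}|\cdot\tilde\delta/t_2$, which is $\Theta(\tilde\delta/t_2)\gg\tilde\delta$ whenever $|\tilde\rho_{ij}|$ is not itself of order $t_2$. Matching an $O(\tilde\delta)$ error against an $\Omega(t_2(1-\alpha))$ margin, as you do, is therefore unjustified. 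The paper sidesteps this by never isolating a partial quotient: writing $\tilde\rho_{j,k}=\widehat\rho_{j,k}(1+\Delta_{j,k}/\widehat\rho_{j,k})$ with $|\Delta_{j,k}/\widehat\rho_{j,k}|<\beta\triangleq\tilde\delta/(t_2/2)=0.1(1-\alpha)$, it bounds the \emph{full} ratio multiplicatively,
\[
\frac{\widehat\rho_{1,3}\,\widehat\rho_{2,4}}{\widehat\rho_{1,2}\,\widehat\rho_{3,4}}
\;<\;\frac{\tilde\rho_{1,3}\,\tilde\rho_{2,4}}{\tilde\rho_{1,2}\,\tilde\rho_{3,4}}\cdot\frac{(1+\beta)^2}{(1-\beta)^2}
\;\le\;\rho_{\max}^2\cdot\frac{(1+\beta)^2}{(1-\beta)^2},
\]
and then checks by elementary arithmetic that the last quantity is $<\alpha$ (and analogously that $(1-\beta)^2/(1+\beta)^2>\alpha$ for the second condition). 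This multiplicative bookkeeping is precisely what makes the perturbation and the margin commensurable without picking up an extra $1/t_2$; your additive route after dividing by one factor can be repaired, but only by tracking the $|\tilde\rho_{ij}|$-dependence of both the margin and the deviation, not via the $O(\tilde\delta)$ bound you asserted.
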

Compared to Theorem~\ref{thm:Katiyar_AchievabilityResult}, this significantly improved result (because the right-hand-side is $O(1/\tilde{\delta}^2)$ instead of $O(1/\delta^2)$ and  $\delta=\tilde{\delta}^3$) is obtained by refining the  probability bounds for events such as 
$\frac{\widehat{\rho}_{1,3}\,\widehat{\rho}_{2,4}}{\widehat{\rho}_{1,2}\,\widehat{\rho}_{3,4}} < \alpha$  and $\frac{\widehat{\rho}_{1,3}\,\widehat{\rho}_{2,4}}{\widehat{\rho}_{1,4}\,\widehat{\rho}_{2,3}} > \alpha$.
	
\section{Symmetrized  Geometric Averaging (SGA) } \label{sec:SGA}
We now present a modified procedure for declaring a $4$-node sub-tree as a star or non-star. This algorithm is denoted SGA\_IS\_NON\_STAR (see Algorithm~\ref{alg:Mod_StarNonStar}), and is a symmetrized variant of the corresponding algorithm by~\citet{Katiyar20arxiv} with additional geometric averaging. The motivation behind SGA can be seen by considering an example where   $\{X_1,X_2,X_3,X_4\}$ forms a  non-star with pair $\{X_1,X_2\}$. If the noisy correlations are denoted $\tilde{\rho}_{i,j} \triangleq \bbE[Y_i Y_j]=(1-2q_i)(1-2q_j) \rho_{i,j}$, we have $\frac{\tilde{\rho}_{1,3}\,\tilde{\rho}_{2,4}}{\tilde{\rho}_{1,2}\,\tilde{\rho}_{3,4}} \le \rho_{\max}^2$ and $\frac{\tilde{\rho}_{1,4}\,\tilde{\rho}_{2,3}}{\tilde{\rho}_{1,2}\,\tilde{\rho}_{3,4}} \le \rho_{\max}^2$. Hence, we would expect the following metrics, based on empirical correlations, to satisfy
\begin{equation} \label{eq:SGA_metrics}
	\mathrm{(i)} \ \, \frac{\widehat{\rho}_{1,3}\,\widehat{\rho}_{2,4}}{\widehat{\rho}_{1,2}\,\widehat{\rho}_{3,4}} < \alpha \quad\mbox{and}\quad \mathrm{(ii)} \ \, \frac{\widehat{\rho}_{1,4}\,\widehat{\rho}_{2,3}}{\widehat{\rho}_{1,2}\,\widehat{\rho}_{3,4}} < \alpha .
\end{equation}
In contrast~\citet{Katiyar20arxiv} who checks  condition $\mathrm{(i)}$ in~\eqref{eq:SGA_metrics} but ignores $\mathrm{(ii)}$, the SGA variant compares the geometric average of the metrics in $\mathrm{(i)}$ and $\mathrm{(ii)}$ against the threshold $\alpha$ for checking if nodes $\{X_1,X_2\}$ form a pair.

\begin{algorithm}[tb]
	\caption{SGA\_IS\_NON\_STAR}
	\label{alg:Mod_StarNonStar}
	\begin{algorithmic}
		\STATE Let the set of $4$ nodes be $\{X_1,X_2,X_3,X_4\}$
		
		\STATE {\bfseries Input:} Empirical correlations $\widehat{\rho}_{i,j}, \ 1 \le i < j \le 4$, \ Threshold $\alpha = (1 + \rho_{\max}^2)/2 $.
		
		\STATE Let $v_2 = \frac{\sqrt{|\widehat{\rho}_{1,3}\,\widehat{\rho}_{2,4}\, \widehat{\rho}_{1,4}\,\widehat{\rho}_{2,3}|}}{|\widehat{\rho}_{1,2}\,\widehat{\rho}_{3,4}|}$, \ $v_3 = \frac{\sqrt{|\widehat{\rho}_{1,2}\,\widehat{\rho}_{3,4}\, \widehat{\rho}_{1,4}\,\widehat{\rho}_{2,3}|}}{|\widehat{\rho}_{1,3}\,\widehat{\rho}_{2,4}|}$, \ $v_4 = \frac{\sqrt{|\widehat{\rho}_{1,2}\,\widehat{\rho}_{3,4}\, \widehat{\rho}_{1,3}\,\widehat{\rho}_{2,4}|}}{|\widehat{\rho}_{1,4}\,\widehat{\rho}_{2,3}|}$
		
		\STATE Let $ v = \min_{2 \le i \le 4} v_i$ and $ i^* = \argmin_{2 \le i \le 4} v_i$

		\IF{$v < \alpha$}
		
		\STATE Declare Non-star where $\{X_1,X_{i^*}\}$ forms a pair
		
		\ELSE
		
		\STATE Declare Star
		
		\ENDIF
		
	\end{algorithmic}
\end{algorithm}

\begin{proposition}[Sufficient Sample Complexity Bound] \label{prop:ImprovedAchievabilityBound_v2}
	The equivalence class $[\T]$ can be correctly recovered with probability at least $1 - \tau$ using the SGA\_IS\_NON\_STAR procedure in Algorithm~\ref{alg:Mod_StarNonStar} if the number of samples satisfies~\eqref{eq:SGA_Achievability}.
\end{proposition}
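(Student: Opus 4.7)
The plan is to mirror the proof of Theorem~\ref{thm:ImprovedAchievabilityBound}, but to replace the single-ratio analysis used there with a joint analysis of the three SGA statistics $v_2,v_3,v_4$ on each candidate 4-tuple. The structural ingredient that makes SGA clean is the \emph{exact} cancellation of the noise factors $(1-2q_i)$ in every $v_i$: when $v_i^{2}$ is expanded, each of the six correlations $\tilde\rho_{a,b}=(1-2q_a)(1-2q_b)\rho_{a,b}$ with $a,b\in\{1,2,3,4\}$ contributes the same total exponent in the numerator and the denominator, so $\tilde v_i$ depends only on the noiseless tree correlations. A short Markov-path computation then gives, on a non-star 4-tuple with pair $\{X_1,X_2\}$ separated from $\{X_3,X_4\}$ by an edge of correlation $\rho_e\in[\rho_{\min},\rho_{\max}]$, $\tilde v_2=\rho_e^{2}\le\rho_{\max}^{2}=2\alpha-1$ and $\tilde v_3=\tilde v_4=1/|\rho_e|\ge 1/\rho_{\max}>1$; on a star 4-tuple, $\tilde v_2=\tilde v_3=\tilde v_4=1$. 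In the population limit, Algorithm~\ref{alg:Mod_StarNonStar} therefore always produces the correct label and correct pair index $i^{*}=2$.

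The probabilistic ingredient is Hoeffding's inequality applied to the i.i.d.\ $\pm 1$ variables $y_{k,i}y_{k,j}$, giving $\bbP(|\widehat\rho_{i,j}-\tilde\rho_{i,j}|>\tilde\delta)\le 2\exp(-n\tilde\delta^{2}/2)$, combined with a union bound over the $\binom{d}{2}$ pairs. Under the hypothesis $n\ge(2/\tilde\delta^{2})\log(d^{2}/\tau)$, the event
\begin{equation*}
\cE_0 \triangleq \bigl\{\,|\widehat\rho_{i,j}-\tilde\rho_{i,j}|\le \tilde\delta\ \text{for all}\ 1\le i<j\le d\,\bigr\}
\end{equation*}
has probability at least $1-\tau$, and all subsequent steps are deterministic on $\cE_0$. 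The proximal-set screening of~\citet{Katiyar20arxiv} carries over verbatim, since it uses only the bound $|\widehat\rho_{i,j}-\tilde\rho_{i,j}|\le \tilde\delta$; in particular every correlation that SGA actually uses to make a decision satisfies $|\widehat\rho_{i,j}|\ge t_2/2$ and hence $|\tilde\rho_{i,j}|\ge t_2/2-\tilde\delta$, which is bounded away from zero because $\tilde\delta=t_2(1-\alpha)/20\le t_2/40$.

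The core deterministic step is a Lipschitz bound for $\log v_i$ viewed as a linear functional of $\bigl(\log|\widehat\rho_{a,b}|\bigr)_{a,b}$ with coefficients in $\{\pm 1,\pm 1/2\}$ whose absolute values sum to~$4$. Since $\log|\cdot|$ is Lipschitz with constant $O(1/t_2)$ on the interval containing all relevant $\widehat\rho_{a,b}$, this yields $|\log v_i-\log\tilde v_i|\le C\,\tilde\delta/t_2$ for an explicit constant $C$ and every $i\in\{2,3,4\}$. Plugging in $\tilde\delta=t_2(1-\alpha)/20$ and using the elementary inequalities $-\log\alpha\ge 1-\alpha$ and $\log(\alpha/\rho_{\max}^{2})\ge 1-\alpha$, one checks that the accumulated multiplicative error on each $v_i$ is strictly smaller than (i)~the gap $\log(\alpha/\rho_{\max}^{2})$ needed to push $v_2<\alpha$ on non-stars, (ii)~the gap $-\log\alpha$ needed to keep every $v_i\ge\alpha$ on stars, and (iii)~the gap $\log(1/\rho_{\max}^{3})$ between $\tilde v_3=\tilde v_4$ and $\tilde v_2$ that ensures $i^{*}=2$. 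Hence SGA\_IS\_NON\_STAR classifies every relevant 4-tuple correctly on $\cE_0$, and the outer loop of~\citet{Katiyar20arxiv}---which is unchanged---then recovers $[\T]$.

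The main technical obstacle is keeping the Lipschitz constant of each $v_i$ proportional to $1/t_2$ rather than to a higher negative power of $t_2$: a direct application of the quotient rule would bring in a factor $1/(\widehat\rho_{1,2}\widehat\rho_{3,4})^{2}$ and erode the $\rho_{\min}^{-8}$ scaling obtained in Theorem~\ref{thm:ImprovedAchievabilityBound}. The log-linearization above is what buys the sharp constant, because every correlation enters $v_i$ multiplicatively with a bounded exponent, so the cumulative Lipschitz constant is governed by a single factor of $1/t_2$ multiplied by the $O(1)$ sum of coefficients. Beyond this, the remainder of the argument is bookkeeping to verify that $1/20$ is large enough to absorb all three gap inequalities simultaneously.
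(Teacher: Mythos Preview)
Your proposal is correct and follows essentially the same route as the paper: reduce to the deterministic event $\cE_0$ via Hoeffding plus a union bound over $\binom{d}{2}$ pairs, then use the proximal-set guarantee $|\widehat\rho_{i,j}|\ge t_2/2$ to convert the additive error $\tilde\delta$ into a multiplicative error of order $(1-\alpha)/10$ on every correlation, and finally verify that this multiplicative slack is small enough to preserve the star/non-star decision for each 4-tuple. The only difference is cosmetic: the paper tracks the error via the factors $(1+\Delta_{j,k}/\widehat\rho_{j,k})\in(1-\beta,1+\beta)$ and simplifies $(1+\beta)^2/(1-\beta)^2$ numerically, whereas you take logarithms and phrase the same computation as a Lipschitz bound on $\log v_i$ with coefficient-sum $4$; the two are equivalent, and your remark that this log-linearization is what keeps the sensitivity at $O(1/t_2)$ (hence $\rho_{\min}^{-8}$) is exactly the point of the paper's refinement.
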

Intuitively, we expect that taking the geometric average of the metrics $\mathrm{(i)}$ and $\mathrm{(ii)}$ in~\eqref{eq:SGA_metrics} reduces the effect of   noise, hence improving robustness. Although  we obtain the same   sample complexity bound in Prop.~\ref{prop:ImprovedAchievabilityBound_v2} as Theorem~\ref{thm:ImprovedAchievabilityBound}  because the Hoeffding's inequality used is rather loose (and not distribution dependent), the error exponent analysis in Sec.~\ref{sec:ErrExpAnalysis} highlights the   advantage of SGA. Furthermore, Monte Carlo simulations with a variety of tree structures and parameters demonstrate SGA's superior robustness in Sec.~\ref{sec:NumericalResults}.


 SGA and the algorithm by \citet{Katiyar20arxiv} are applicable to wider classes of models such as Gaussian graphical models in which node observations are corrupted by independent but non-identically distributed Gaussian noise \citep{Katiyar19ICML}. That is $\mathbf{X}=(X_1,\ldots, X_d)$ follows a zero-mean Gaussian with covariance matrix $\bm{\Sigma}^*$ and $(\bm{\Sigma}^*)^{-1}$ has sparsity pattern that corresponds to a tree $\T$.  However, we observe $\mathbf{Y}=(Y_1,\ldots, Y_d)$ with covariance matrix $\bm{\Sigma}^*+ \mathbf{D}^*$ where $\mathbf{D}^*$ is an unknown non-negative diagonal  matrix. If $\mathbf{D}^*$ is non-zero, the structure of $\T$ cannot be  identified in general. However, by computing the empirical correlations in an analogous fashion, we show in App.~\ref{app:gauss} that SGA is similarly robust vis-\`a-vis algorithms proposed in  \citet{Katiyar19ICML} and \citet{Katiyar20arxiv}.

\section{Error Exponent Analyses} \label{sec:ErrExpAnalysis}
The \emph{error exponent}, also called the \emph{inaccuracy rate}~\cite{Kester86}, captures the exponential decay of the error probability with $n$ as a function of the distribution. For a given tree with $d$ nodes  $\T \in \cT_d$ and graphical model $P \in \cP_{\T}(\rho_{\min}, \rho_{\max})$, let $\tilde{P}$ denote the joint distribution of the noisy variables where the noise crossover probability at the $i$th node satisfies $0 \le q_i \le q_{\max} < 0.5$. Then,  the error exponent of a given algorithm $\Psi$ is\footnote{For the estimators considered in this paper, it can be shown that the limit of the expression on the right side of \eqref{eq:Def_ErrorExp} exists.} 
\begin{align}
	E(\Psi,\tilde{P})  &=	E(\Psi,\tilde{P},q_{\max},\rho_{\min},\rho_{\max})   \\
	&\triangleq  \liminf_{n \to \infty} -\frac{1}{n} \log \bbP_{\tilde{P}}\big(\Psi(\brY_1^n) \notin [\T] \big).\label{eq:Def_ErrorExp} 
\end{align}
We   label the estimator by~\citet{Katiyar20arxiv} as $\Psi_{\KA}$ which uses Algorithm~\ref{alg:Kat_StarNonStar} in App.~\ref{app:KatiyarAlgoOverview}  for declaring a set of~$4$ nodes as star or non-star.  We also label the estimator employing the SGA algorithm described in Algorithm~\ref{alg:Mod_StarNonStar} as $\Psi_{\SGA}$. In the following, we use Sanov's theorem~\cite{sanov57} to quantify the error exponents of $\Psi_{\KA}$ and $\Psi_{\SGA}$, and demonstrate that, in general, $\Psi_{\SGA}$ provides a better (i.e., higher) error exponent compared to $\Psi_{\KA}$.

\subsection{Error exponent using $\Psi_{\KA}$}
The performance of $\Psi_{\KA}$ depends on its ability to correctly declare a set of $4$ nodes as star or non-star (with the appropriate pairing of nodes). The following proposition characterizes the error exponent for a $4$-node tree.

\begin{proposition} \label{prop:KA_ErrExp}
	Let $P \in \cP_{\T}(\rho_{\min}, \rho_{\max})$ be a tree for $4$ nodes $\{X_1,X_2,X_3,X_4\}$, and let $\tilde{P}$ denote the joint distribution of the noisy variables. Let $\rho_{j,k}^{(Q)} \triangleq \mathbb{E}_Q[X_j X_k]$.
	
	(a) If the tree distribution $P$ corresponds to the Markov chain $X_1 \myrule X_2 \myrule X_3 \myrule X_4$, and we define
		\begin{align}
		e_1 &\triangleq \min_{Q \in \cP(\cX^4)} \Big\{D(Q \| \tilde{P}) \, : \, \frac{\rho_{1,3}^{(Q)}\,\rho_{2,4}^{(Q)}}{\rho_{1,2}^{(Q)}\,\rho_{3,4}^{(Q)}} \ge \alpha \Big\}, \label{eq:KA_e1} \\
		e_2 &\triangleq \min_{Q \in \cP(\cX^4)} \Big\{D(Q \| \tilde{P}) \, : \, \frac{\rho_{1,3}^{(Q)}\,\rho_{2,4}^{(Q)}}{\rho_{1,4}^{(Q)}\,\rho_{2,3}^{(Q)}} \le \alpha \Big\}, \label{eq:KA_e2} 
	\end{align}
\vspace{-1mm}
then we have $E(\Psi_{\KA},\tilde{P}) = \min\{e_1,e_2\}$.
	
\vspace{1mm}	
(b) If $P$ corresponds to a star tree structure, then $E(\Psi_{\KA},\tilde{P})$ can be expressed as
	{\small{
	\begin{equation}
		\min_{Q \in \cP(\cX^4)} \Big\{D(Q \| \tilde{P}) :  \frac{\rho_{1,3}^{(Q)}\,\rho_{2,4}^{(Q)}}{\rho_{1,2}^{(Q)}\,\rho_{3,4}^{(Q)}} \le \alpha, \, \frac{\rho_{1,3}^{(Q)}\,\rho_{2,4}^{(Q)}}{\rho_{1,4}^{(Q)}\,\rho_{2,3}^{(Q)}} \ge \alpha \Big\}. \label{eq:KA_ErrExpStar}
	\end{equation}
	}}
\end{proposition}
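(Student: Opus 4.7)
The plan is to express the error event of $\Psi_{\KA}$ as an event of the form $\{\widehat{P}_n \in A\}$ on the empirical joint distribution $\widehat{P}_n \in \cP(\cY^4)$ of the $n$ noisy samples $\brY_1^n$, and then invoke Sanov's theorem. Each empirical correlation $\widehat{\rho}_{j,k} = \mathbb{E}_{\widehat{P}_n}[Y_j Y_k]$ is a continuous linear functional of $\widehat{P}_n$, and the decision in Algorithm~\ref{alg:Kat_StarNonStar} (App.~\ref{app:KatiyarAlgoOverview}) is a thresholding rule on polynomial combinations of these correlations; hence the set $A$ is semi-algebraic. Property P3 guarantees that $\tilde{P}$ has full support on $\cY^4$, which provides the regularity (closure equals closure of interior) needed for Sanov's theorem to yield matching upper and lower bounds, namely
\[
\lim_{n\to\infty}-\tfrac{1}{n}\log\bbP_{\tilde P}(\widehat{P}_n \in A) = \inf_{Q \in A} D(Q\|\tilde{P}).
\]

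For part (a), with the chain $X_1\myrule X_2\myrule X_3\myrule X_4$, the true pair is $\{X_1,X_2\}$. Reading off the algorithm, an error occurs iff either the true-pair metric $\widehat{\rho}_{1,3}\widehat{\rho}_{2,4}/(\widehat{\rho}_{1,2}\widehat{\rho}_{3,4})$ fails to fall below $\alpha$ (event $A_1$, with Sanov rate $e_1$), or some wrong-pair metric falls below $\alpha$ and becomes the minimum. For the chain, the moment identities $\rho_{1,3}=\rho_{1,2}\rho_{2,3}$, $\rho_{2,4}=\rho_{2,3}\rho_{3,4}$, $\rho_{1,4}=\rho_{1,2}\rho_{2,3}\rho_{3,4}$, perturbed continuously under $\tilde{P}$, imply that the $\{X_1,X_4\}$-pair metric sits closer to the threshold $\alpha$ than the $\{X_1,X_3\}$-pair metric; the dominant wrong-pair event is therefore $A_2=\{\widehat{\rho}_{1,3}\widehat{\rho}_{2,4}/(\widehat{\rho}_{1,4}\widehat{\rho}_{2,3})\le\alpha\}$, with Sanov rate $e_2$. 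The standard exponent-of-a-union rule then yields $E(\Psi_{\KA},\tilde{P})=\min\{e_1,e_2\}$.

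For part (b), when $P$ is a star, an error corresponds to declaring any one of the three candidate pairs. By the permutation symmetry among the three leaves of the star under $P$ (and invariance of the KA decision rule under that symmetry), the three resulting Sanov rates coincide, so it suffices to analyze the event that pair $\{X_1,X_2\}$ is output. This occurs iff its metric $\widehat{\rho}_{1,3}\widehat{\rho}_{2,4}/(\widehat{\rho}_{1,2}\widehat{\rho}_{3,4})\le\alpha$ \emph{and} is smaller than the metric for the rival pair $\{X_1,X_4\}$; after cancellation, the latter rearranges to $\widehat{\rho}_{1,3}\widehat{\rho}_{2,4}/(\widehat{\rho}_{1,4}\widehat{\rho}_{2,3})\ge\alpha$, giving precisely the joint constraint in~\eqref{eq:KA_ErrExpStar}.

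The hardest part will be the careful bookkeeping of the argmin/thresholding logic of Algorithm~\ref{alg:Kat_StarNonStar}. Specifically, I need to justify that (i) in part (a), the omitted wrong-pair event (declaring pair $\{X_1,X_3\}$) has a strictly larger rate than $\min\{e_1,e_2\}$, via the chain moment identities and continuity of $D(\cdot\|\tilde{P})$; and (ii) in part (b), the two constraints in~\eqref{eq:KA_ErrExpStar} faithfully capture the ``pair $\{X_1,X_2\}$ wins'' event without double-counting the rival pair~$\{X_1,X_3\}$ (handled by symmetry). Both reductions hinge on the full-support property of $\tilde{P}$ and the smooth semi-algebraic geometry of the decision regions, which ensure the Sanov infimum is attained in the interior along an exponentially tilted distribution.
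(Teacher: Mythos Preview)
Your overall strategy—express the error event in terms of the empirical distribution and apply Sanov's theorem—is exactly what the paper does. However, you are misreading the logic of Algorithm~\ref{alg:Kat_StarNonStar}, and this misreading is what creates the ``hardest part'' you flag; in the paper's proof that part simply does not exist.

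Algorithm~\ref{alg:Kat_StarNonStar} is an \emph{ordered} \texttt{if / else if / else} cascade, not a $\min$-based rule. The very first clause tests precisely the two inequalities for the pair $\{X_1,X_2\}$. Consequently, in part~(a) (where $\{X_1,X_2\}$ is the true pair) the algorithm outputs the correct answer \emph{if and only if} that first clause is satisfied. Thus the error event is \emph{exactly}
\[
\sE_1\cup\sE_2 \;=\; \Big\{\tfrac{\widehat{\rho}_{1,3}\widehat{\rho}_{2,4}}{\widehat{\rho}_{1,2}\widehat{\rho}_{3,4}}\ge\alpha\Big\}\;\cup\;\Big\{\tfrac{\widehat{\rho}_{1,3}\widehat{\rho}_{2,4}}{\widehat{\rho}_{1,4}\widehat{\rho}_{2,3}}\le\alpha\Big\},
\]
with no third ``declare $\{X_1,X_3\}$'' event left to dominate away. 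Your proposed step~(i)—arguing via chain moment identities that the $\{X_1,X_3\}$ rate strictly exceeds $\min\{e_1,e_2\}$—is unnecessary (and, as you describe it, would not be straightforward to make rigorous anyway). Similarly, in part~(b) the event ``pair $\{X_1,X_2\}$ is declared'' is \emph{literally} the first \texttt{if} condition; the two constraints in~\eqref{eq:KA_ErrExpStar} are read off verbatim from the algorithm, not obtained by any ``cancellation'' or comparison with the rival pair $\{X_1,X_4\}$. The symmetry argument you give for the star case is the same as the paper's.

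In short: keep the Sanov framework, but re-derive the error events by taking the complement of the first \texttt{if} clause rather than by reasoning about competing minima. The proof then becomes a two-line application of Sanov plus the union rule, with none of the bookkeeping you anticipate.
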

For a tree $\T$ with $d \ge 4$ nodes, if  $\{X_{i_j}\}_{j=1}^4$ are $4$ nodes in $\T$ that form a star structure (resp.\ non-star structure with pair $\{X_{i_1},X_{i_2}\}$), and $\tilde{P}$ denotes the distribution of the noisy variables $\{Y_{i_j}\}_{j=1}^4$, then the exponent corresponding to an incorrect decision on the structure of these nodes by the procedure in Algorithm~\ref{alg:Kat_StarNonStar} is equal to expression on the right side of \eqref{eq:KA_ErrExpStar} (resp.\  equal to $\min\{e_1,e_2\}$ with $e_1$, $e_2$ defined in \eqref{eq:KA_e1}, \eqref{eq:KA_e2}), where $\rho_{j,k}^{(Q)} = \mathbb{E}_Q[X_{i_j} X_{i_k}]$.

\subsection{Error exponent using $\Psi_{\SGA}$}
The following proposition characterizes the error exponent using $\Psi_{\SGA}$ for a $4$-node tree.

\begin{proposition} \label{prop:SGA_ErrExp}
		Let $P \in \cP_{\T}(\rho_{\min}, \rho_{\max})$ be a tree for $4$ nodes $\{X_1,X_2,X_3,X_4\}$, and let $\tilde{P}$ denote the   joint distribution of the noisy variables.
	
	(a) If the tree distribution $P$ corresponds to the Markov chain $X_1 \myrule X_2 \myrule X_3 \myrule X_4$, and we define
	{\small{
		\begin{align}
		e_3 &\triangleq \! \min_{Q \in \cP(\cX^4)} \Big\{D(Q \| \tilde{P})  : \! \frac{\sqrt{|\rho_{1,3}^{(Q)}\rho_{2,4}^{(Q)}\rho_{1,4}^{(Q)}\rho_{2,3}^{(Q)}|}}{|\rho_{1,2}^{(Q)}\rho_{3,4}^{(Q)}|} \!\ge\! \alpha \Big\}, \label{eq:SGA_e3}\\
		e_4 &\triangleq \min_{Q \in \cP(\cX^4)} \left\{D(Q \| \tilde{P})  :  |\rho_{1,3}^{(Q)}\rho_{2,4}^{(Q)}| \ge |\rho_{1,2}^{(Q)}\rho_{3,4}^{(Q)}| \right\}, \label{eq:SGA_e4} \\
		e_5 &\triangleq \min_{Q \in \cP(\cX^4)} \left\{D(Q \| \tilde{P})  :  |\rho_{1,4}^{(Q)}\rho_{2,3}^{(Q)}| \ge |\rho_{1,2}^{(Q)}\rho_{3,4}^{(Q)}| \right\}, \label{eq:SGA_e5}
	\end{align}
    }}
	then we have $E(\Psi_{\SGA},\tilde{P}) = \min\{e_3,e_4,e_5\}$.
	
	\vspace{1mm}	
	(b) If $P$ corresponds to a star tree structure, then $E(\Psi_{\SGA},\tilde{P})$ can be expressed as
{\small{
	\begin{equation}
	\min_{Q \in \cP(\cX^4)} \Big\{D(Q \| \tilde{P}) \, : \, \frac{\sqrt{|\rho_{1,3}^{(Q)}\,\rho_{2,4}^{(Q)}\,\rho_{1,4}^{(Q)}\,\rho_{2,3}^{(Q)}|}}{|\rho_{1,2}^{(Q)}\,\rho_{3,4}^{(Q)}|} \le \alpha\Big\}. \label{eq:SGA_ErrExpStar}
\end{equation}
}}	
\end{proposition}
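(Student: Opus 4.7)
The plan is to apply Sanov's theorem to the empirical joint distribution $\widehat{Q}_n \in \cP(\cX^4)$ of the $n$ i.i.d.\ noisy 4-tuples drawn from $\tilde{P}$. Each empirical correlation $\widehat{\rho}_{j,k}$ is an affine functional of $\widehat{Q}_n$, so the decision regions of Algorithm~\ref{alg:Mod_StarNonStar} pull back to semi-algebraic subsets of the simplex. Each such subset coincides with the closure of its relative interior, which is the standard regularity condition under which Sanov's theorem will yield
\begin{equation*}
\lim_{n\to\infty} -\tfrac{1}{n}\log \bbP_{\tilde{P}}(\widehat{Q}_n \in A) = \inf_{Q \in A} D(Q\|\tilde{P}).
\end{equation*}
The task then reduces to identifying the correct $A$ for each claim and simplifying the defining inequalities into the forms appearing in the definitions of $e_3,e_4,e_5$.

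For part (a), I would first note that under the Markov chain $X_1\myrule X_2\myrule X_3\myrule X_4$ the unique correct output of SGA\_IS\_NON\_STAR is ``non-star with pair $\{X_1,X_2\}$'', which demands both $v_2 < \alpha$ and $i^* = 2$. The error event is therefore the union $\{v_2 \ge \alpha\} \cup \{v_3 \le v_2\} \cup \{v_4 \le v_2\}$. Next, a direct algebraic manipulation of the defining ratios—squaring $v_3 \le v_2$ and cancelling the common factor $|\widehat{\rho}_{1,4}\widehat{\rho}_{2,3}|$, and similarly for $v_4 \le v_2$—collapses the latter two sub-events to $|\widehat{\rho}_{1,3}\widehat{\rho}_{2,4}| \ge |\widehat{\rho}_{1,2}\widehat{\rho}_{3,4}|$ and $|\widehat{\rho}_{1,4}\widehat{\rho}_{2,3}| \ge |\widehat{\rho}_{1,2}\widehat{\rho}_{3,4}|$, matching the constraints of $e_4$ and $e_5$; the event $\{v_2\ge\alpha\}$ matches $e_3$ directly. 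Denoting the three sub-events $\cE_3,\cE_4,\cE_5$, Sanov's theorem then gives their individual exponents as $e_3,e_4,e_5$, and the elementary sandwich $\max_j \bbP(\cE_j) \le \bbP(\cE_3\cup\cE_4\cup\cE_5) \le 3\max_j \bbP(\cE_j)$ pinches the exponent of the union to $\min\{e_3,e_4,e_5\}$.

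For part (b), the correct SGA output under a 4-node star is ``star'', so the error event is $\{v<\alpha\}=\bigcup_{i\in\{2,3,4\}}\{v_i<\alpha\}$. A short computation will show that on any 4-node star the population values satisfy $v_i(\tilde{P})=1$ for all $i\in\{2,3,4\}$, because the factors $(1-2q_j)$ cancel in every ratio in the numerator and denominator of $v_i$, and the star factorization of the clean correlations (with the star center acting as a common ancestor) then reduces each product to $|b_2 b_3 b_4|$ in both positions. The three constraint sets $\{v_i(Q)\le\alpha\}$ are related to one another by index permutations within $\{2,3,4\}$, and I would argue that the three corresponding KL infima coincide, which gives the single expression in~\eqref{eq:SGA_ErrExpStar}. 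The hard part will be precisely this last coincidence: since $\tilde{P}$ itself is not in general invariant under permutations of $\{2,3,4\}$ when edge correlations and noise rates vary across leaves, the equality of infima cannot be extracted from a naive symmetrization of the minimizer, and must instead be obtained from the polynomial form of the $v_i$-constraints combined with the star factorization of the moments—e.g.\ by writing the KKT conditions for one of the three problems and verifying that the optimizer automatically lies in the intersection of all three constraint sets.
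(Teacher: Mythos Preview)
Your treatment of part~(a) is essentially identical to the paper's: the paper also writes the error event as the union of $\{v_2\ge\alpha\}$, $\{v_2\ge v_3\}$, $\{v_2\ge v_4\}$, simplifies the latter two to $|\widehat{\rho}_{1,3}\widehat{\rho}_{2,4}|\ge|\widehat{\rho}_{1,2}\widehat{\rho}_{3,4}|$ and $|\widehat{\rho}_{1,4}\widehat{\rho}_{2,3}|\ge|\widehat{\rho}_{1,2}\widehat{\rho}_{3,4}|$, applies Sanov to each, and takes the minimum. Nothing to add there.

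For part~(b), you and the paper again follow the same outline (error $=\{\min_i v_i<\alpha\}=\bigcup_i\{v_i<\alpha\}$, then Sanov), but you diverge precisely where you flagged the difficulty. The paper does \emph{not} carry out anything like your proposed KKT argument; it disposes of the equality of the three exponents in a single sentence: ``By symmetry of the underlying star structure, the probability of each of the three erroneous declarations is same,'' and then writes down the $v_2$-exponent as the answer. Your observation is correct that this symmetry is a property of $\tilde{P}$, not just of the constraint sets: the change of variables $Q\mapsto Q^\sigma$ sending $\{v_3(Q)\le\alpha\}$ to $\{v_2(Q)\le\alpha\}$ turns $D(Q\|\tilde{P})$ into $D(Q\|\tilde{P}^\sigma)$, and $\tilde{P}^\sigma=\tilde{P}$ only when the leaf edge-correlations and noise levels are exchangeable. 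The paper's argument is therefore complete as stated only for homogeneous stars (equal edge correlations, equal $q_i$ on the leaves), which is exactly the regime used in the paper's numerical error-exponent comparisons (Sec.~6.3, Fig.~3). So you are being more careful than the paper here; if you want the statement at the generality in which it is phrased, something like your KKT route would indeed be required---but be aware that the paper does not supply such an argument, and it is not obvious that the three infima coincide for an arbitrary non-homogeneous noisy star.
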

For a   tree $\T$ with $d \ge 4$ nodes, if $\{X_{i_j}\}_{j=1}^4$ are $4$ nodes in $\T$ that form a star   (resp.\ non-star   with pair $\{X_{i_1},X_{i_2}\}$), and $\tilde{P}$ denotes the distribution of the noisy variables $\{Y_{i_j}\}_{j=1}^4$, then the exponent corresponding to an incorrect decision on the structure of these nodes by the procedure in Algorithm~\ref{alg:Mod_StarNonStar} is equal to expression on the right side of \eqref{eq:SGA_ErrExpStar} (resp.\   equal to $\min\{e_3,e_4,e_5\}$ with $e_3$, $e_4$, $e_5$ defined in \eqref{eq:SGA_e3}, \eqref{eq:SGA_e4},  and~\eqref{eq:SGA_e5}), where $\rho_{j,k}^{(Q)} = \mathbb{E}_Q[X_{i_j} X_{i_k}]$.

\vspace{-0.1in}
\subsection{Numerical comparison of the error exponents} \label{sec:ErrExpNumerical}
\vspace{-0.05in}

Because the expressions for the error exponents in Props.~\ref{prop:KA_ErrExp} and~\ref{prop:SGA_ErrExp} are not easily comparable (since $E(\Psi_{\KA},\tilde{P})$ and $E(\Psi_{\SGA},\tilde{P})$ are non-convex optimization problems), we present  numerical comparisons of $E(\Psi_{\KA},\tilde{P})$ and $E(\Psi_{\SGA},\tilde{P})$  for $4$-node homogeneous trees. Fig.~\ref{Fig:ErrExp_Chain} compares the error exponents  for a $4$-node Markov chain $X_1 \myrule X_2 \myrule X_3 \myrule X_4$ where all the edge correlations are same, and are denoted as $\rho$. Fig.~\ref{Fig:ErrExp_Chain}(a) considers a noiseless scenario where $q_{\max}=0$, and it is observed that the error exponent for $\Psi_{\SGA}$ is \emph{significantly} higher (hence better) than that for $\Psi_{\KA}$ for small values of $\rho$; e.g., when $\rho < 0.6$. On the other hand, $\Psi_{\KA}$ has only \emph{marginally} higher exponent for higher values of $\rho$ (when $\rho > 0.71$). Fig.~\ref{Fig:ErrExp_Chain}(b) compares the error exponents for the scenario where $\rho = 0.74$ is fixed, and where the noise crossover probabilities for the nodes satisfy $q_1 = q_2 = q_3 = 0$ and $q_4 = q_{\max}$. It is seen that although $\Psi_{\KA}$ has better exponent than $\Psi_{\SGA}$ in the neighborhood of $q_{\max} = 0$, the performance of $\Psi_{\SGA}$ is slightly better for relatively higher values of $q_{\max}$.

\begin{figure}[t]
	\vskip 0.2in
	\begin{center}
		\centerline{\includegraphics[width=\columnwidth]{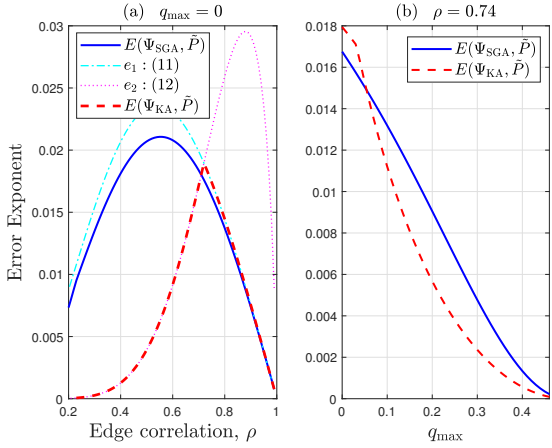}}
	\caption{Error exponents for a $4$-node homogeneous chain where all tree edges have correlation $\rho$. \ (a)~Noiseless setting, $q_{\max} = 0$, \ (b)~Error exponent versus $q_{\max}$ for fixed $\rho = 0.74$.}
	\label{Fig:ErrExp_Chain}
	\end{center}
	\vskip -0.2in
\end{figure}
\begin{figure}[t]
	\vskip 0.2in
	\begin{center}
		\centerline{\includegraphics[width=\columnwidth]{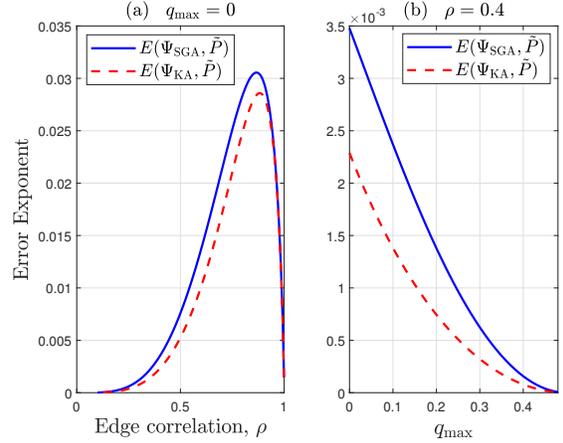}}
	\caption{Error exponents for a $4$-node star-structured tree where all edges have correlation $\rho$. \ (a)~Noiseless setting, $q_{\max} = 0$, \ (b)~Error exponent versus $q_{\max}$ for fixed $\rho = 0.4$.}
	\label{Fig:ErrExp_Star}
	\end{center}
	\vskip -0.2in
\end{figure}

Fig.~\ref{Fig:ErrExp_Star} compares the error exponents with $\Psi_{\KA}$ and $\Psi_{\SGA}$ for a $4$-node star-structured tree where all edge correlations are same and are denoted $\rho$. Fig.~\ref{Fig:ErrExp_Star}(a) considers a noiseless scenario where $q_{\max}=0$, and it is observed that the error exponent for $\Psi_{\SGA}$ is slightly higher than the error exponent for $\Psi_{\KA}$ for all values of $\rho$. Fig.~\ref{Fig:ErrExp_Star}(b) compares the error exponents for the scenario where $\rho = 0.4$ is fixed, and where the noise crossover probabilities for the nodes satisfy $q_1 = q_2 = q_3 = 0$ and $q_4 = q_{\max}$. Again, we see that the error exponent for $\Psi_{\SGA}$ is better than that for $\Psi_{\KA}$.

Monte Carlo simulations for $4$-node homogeneous trees (i.e., trees with equal correlations on the edges) that corroborate the theoretical results in this section, are presented in App.~\ref{app:4nodeHomogeneousTrees}. Even though Figs.~\ref{Fig:ErrExp_Chain} and \ref{Fig:ErrExp_Star} suggest that $\Psi_{\KA}$ sometimes outperforms $\Psi_{\SGA}$, we show  for larger trees that the performance of $\Psi_{\SGA}$ is almost always better than~$\Psi_{\KA}$. We explain why this is so in Sec.~\ref{sec:numerical_chain}. Since Sanov's theorem is also applicable to random variables with arbitrary alphabets~\citep[Ch.~3]{deuschel00}, we expect  similar error exponent performances for Gaussian models.

\vspace{-0.1in}
\section{Numerical Results} \label{sec:NumericalResults}
\vspace{-0.05in}

In this section, we present Monte Carlo simulation results for $12$-node trees with three different tree structures: (i) Chain, (ii) Hybrid, (iii) Star (see Fig.~\ref{Fig:3TreeStructures} in App.~\ref{app:3TreeStructures}). The chain and the star structures are known to be extremal in terms of the error probability~\cite{Tan10TSP,TandonTZ20_JSAIT}, while the hybrid tree structure is a combination of the chain and star structures. For a given tree structure $\T$, and $n$ noisy samples $\brY_1^n$, the error probability $\bbP\big(\Psi(\brY_1^n) \notin [\T] \big)$ for a given learning algorithm $\Psi$, is estimated using $10^5$ iterations (or runs) in the Monte Carlo simulation framework, where an error is declared if the estimated tree does not belong to the equivalence class $[\T]$. We obtain error probability results for three different   algorithms $\Psi_{\KA}$, $\Psi_{\SGA}$ and $\Psi_{\CL}$, the classical Chow-Liu  tree learning algorithm~\cite{ChowLiu68}.  
For   $\Psi_{\KA}$ and $\Psi_{\SGA}$,   knowledge of $\rho_{\min}$ and $\rho_{\max}$ is assumed. The source code to reproduce the experiments is included in our submission.
\vspace{-0.1in}
\subsection{Error Probabilities for $12$-node chain} \label{sec:numerical_chain}
\vspace{-0.05in}

Fig.~\ref{Fig:ErrProb_12Chain} compares the error probabilities for a $12$-node Markov chain, where all edge correlations are equal to $\rho$, using three learning algorithms: $\Psi_{\KA}$, $\Psi_{\SGA}$, and $\Psi_{\CL}$. Fig.~\ref{Fig:ErrProb_12Chain}(a) plots the results for the noiseless setting, $q_{\max} = 0$, with $\rho=0.8$. In this case, it is seen that the Chow-Liu algorithm $\Psi_{\CL}$ provides the minimum error probability as it the maximum-likelihood algorithm for the noiseless setting~\cite{ChowWagner73,Tan11IT}. The observation that $\Psi_{\SGA}$ has lower error probability than $\Psi_{\KA}$ for $\rho=0.8$, $q_{\max}=0$ can be intuitively explained as follows. An error event using $\Psi_{\KA}$ or $\Psi_{\SGA}$ occurs when a set of $4$ nodes in the tree is incorrectly declared as star/non-star (see Sec.~\ref{sec:Achievability_KA} and Sec.~\ref{sec:SGA}, respectively). Now, in the process of building a tree structure, these algorithms may pick a set of $4$ non-neighboring nodes (that belong to each others' proximal sets) to characterize them as star or non-star. For instance, consider the set of $4$ nodes $X_1, X_3, X_5, X_7$ that forms a sub-chain (of the $12$-node chain) where the effective edge correlation for the sub-chain is $0.8^2 = 0.64$. From Fig.~\ref{Fig:ErrExp_Chain}(a), we note that $\Psi_{\SGA}$ has a significantly higher exponent than $\Psi_{\KA}$ when the edge correlation is $0.64$, and therefore we would expect $\Psi_{\SGA}$ to have a lower error probability compared to $\Psi_{\KA}$ when characterizing these $4$ nodes as star or non-star.

\begin{figure}[t]
	\vskip 0.2in
	\begin{center}
		\centerline{\includegraphics[width=\columnwidth]{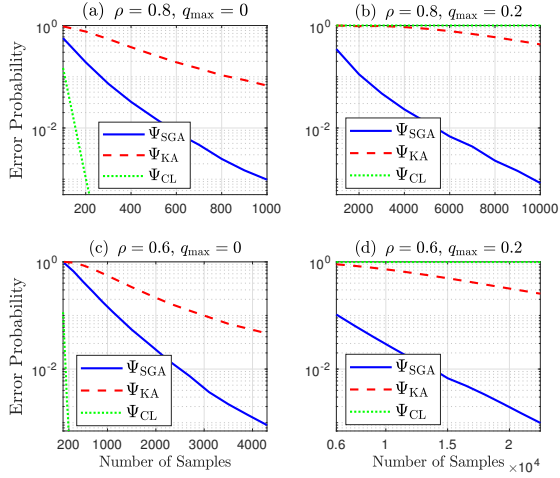}}
	\caption{Comparison of error probabilities for a $12$-node Markov chain where all edge correlations are equal to $\rho$.}
	\label{Fig:ErrProb_12Chain}
	\end{center}
	\vskip -0.2in
\end{figure}

Fig.~\ref{Fig:ErrProb_12Chain}(b) compares the error probabilities when $\rho=0.8$ for the noisy case, where noise is added to alternate nodes, i.e.,  $q_i = q_{\max}=0.2$ for $i \in  O_{12} \triangleq \{1,3,5,7,9,11\}$, while $q_j = 0$ for $j \in E_{12}\triangleq\{2,4,6,8,10,12\}$. In contrast to the noiseless case,  we observe from Fig.~\ref{Fig:ErrProb_12Chain}(b) that $\Psi_{\CL}$ performs extremely poorly with error probability roughly equal to~$1$. Such a poor performance is expected of $\Psi_{\CL}$ because $\bbE[Y_4 Y_6] = 0.64 > 0.48 = \bbE[Y_4 Y_5] = \bbE[Y_5 Y_6]$, and hence the tree estimated using $\Psi_{\CL}$ is more likely to pick the incorrect edge $\{X_4,X_6\}$ over the correct edges $\{X_4,X_5\}$ and $\{X_5,X_6\}$. Similar to Fig.~\ref{Fig:ErrProb_12Chain}(a), the plots in Fig.~\ref{Fig:ErrProb_12Chain}(b) highlight the clear superiority of $\Psi_{\SGA}$ over $\Psi_{\KA}$. A similar robust performance of $\Psi_{\SGA}$ is observed in the plots in Figs.~\ref{Fig:ErrProb_12Chain}(c)~and \ref{Fig:ErrProb_12Chain}(d) that compare the error probabilities for a $12$-node Markov chain where $\rho = 0.6$.

\vspace{-0.1in}
\subsection{Error Probabilities for $12$-node hybrid tree}
\vspace{-0.05in}

\begin{figure}[t]
	\vskip 0.2in
	\begin{center}
		\centerline{\includegraphics[width=\columnwidth]{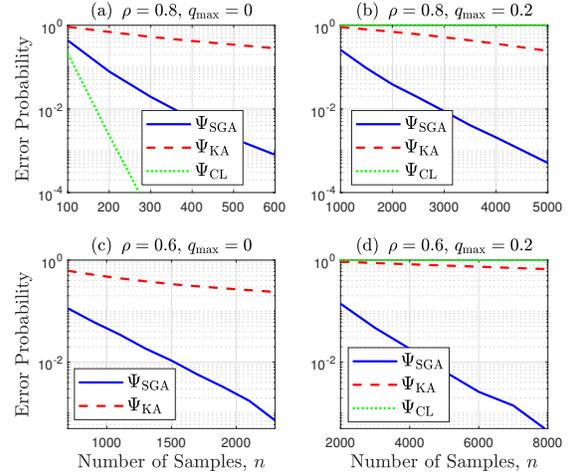}}
	\caption{Comparison of error probabilities for a $12$-node hybrid tree where all edge correlations are equal to $\rho$.}
	\label{Fig:ErrProb_12Hybrid}
	\end{center}
	\vskip -0.2in
\end{figure}

Fig.~\ref{Fig:ErrProb_12Hybrid} compares the error probabilities for a $12$-node hybrid tree where all   correlations are equal to $\rho$. Fig.~\ref{Fig:ErrProb_12Hybrid}(a) plots the results for the noiseless setting with $\rho =0.8$, while Fig.~\ref{Fig:ErrProb_12Hybrid}(b) considers the noisy case where noise is only added to even nodes, i.e. $q_i =0$ for $i \in O_{12}$, and $q_j =  q_{\max}=0.2$ for $j \in E_{12}$. For the noiseless case, as expected,   $\Psi_{\CL}$ provides the minimum error probability. However, for the noisy case, $\Psi_{\CL}$ performs poorly with error probability $\approx 1$. Again, this is expected because $\bbE[Y_3 Y_5] = 0.64 > 0.48 = \bbE[Y_3 Y_4] = \bbE[Y_4 Y_5]$, and hence the tree estimated using $\Psi_{\CL}$ is more likely to pick the incorrect edge $\{X_3,X_5\}$ over the correct edges $\{X_3,X_4\}$ and $\{X_4,X_5\}$ (see Fig.~\ref{Fig:3TreeStructures}(ii) in App.~\ref{app:3TreeStructures}). 

The plots in Fig.~\ref{Fig:ErrProb_12Hybrid}(c)~and (d)~compare the error probabilities for a hybrid tree when the edge correlation is $\rho = 0.6$. For the noiseless case in (c), the error probability using $\Psi_{\CL}$ is not plotted because it results in zero errors over $10^5$ Monte Carlo simulation runs for the given values of $n$ in Fig.~\ref{Fig:ErrProb_12Hybrid}(c). On the other hand, Fig.~\ref{Fig:ErrProb_12Hybrid}(d) considers the noisy case where noise is only added to even nodes, i.e. $q_i =0$ for $i \in O_{12}$, and $q_j =  q_{\max}=0.2$ for $j \in E_{12}$. The error probability using $\Psi_{\CL}$ is quite high for the noisy case (note, for instance, that $\bbE[Y_3 Y_5] = \bbE[Y_3 Y_4] = \bbE[Y_4 Y_5] = 0.36$). Fig.~\ref{Fig:ErrProb_12Hybrid} clearly highlights the robustness of $\Psi_{\SGA}$   over $\Psi_{\KA}$ and $\Psi_{\CL}$  when the underlying tree   is a hybrid of the chain and the star.

\vspace{-0.1in}
\subsection{Error Probabilities for $12$-node star tree}
\vspace{-0.05in}

\begin{figure}[t]
	\vskip 0.2in
	\begin{center}
		\centerline{\includegraphics[width=\columnwidth]{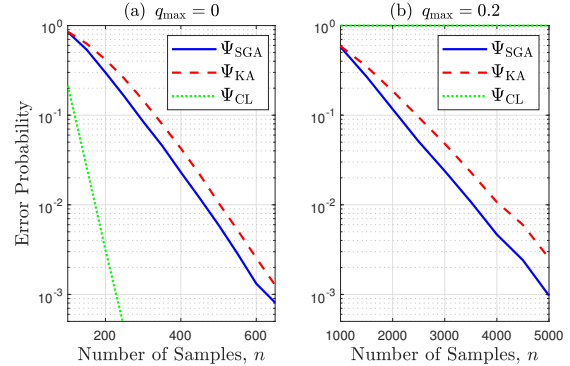}}
	\caption{Comparison of error probabilities for a $12$-node star, where all edge correlations are equal to $\rho = 0.6$.}
	\label{Fig:ErrProb_12Star_tPoint2}
	\end{center}
	\vskip -0.2in
\end{figure}

Fig.~\ref{Fig:ErrProb_12Star_tPoint2} compares the error probabilities using $\Psi_{\KA}$, $\Psi_{\SGA}$, and $\Psi_{\CL}$, for a $12$-node star where all correlations are equal to $\rho = 0.6$. Fig.~\ref{Fig:ErrProb_12Star_tPoint2}(a) plots the results for the noiseless setting ($q_{\max} = 0$), while Fig.~\ref{Fig:ErrProb_12Star_tPoint2}(b) considers the noisy case where noise is only added to odd nodes, i.e. $q_i = q_{\max} = 0.2$ for $i \in O_{12}$, while $q_j = 0$ for $j \in E_{12}$. The Chow-Liu algorithm $\Psi_{\CL}$ performs well in the noiseless setting, but fails miserably in the noisy setting. For both the noisy and noiseless settings, $\Psi_{\SGA}$ performs slightly better than   $\Psi_{\KA}$. This is justified using the error exponent result in Fig.~\ref{Fig:ErrExp_Star}(a) where it is observed that  $E(\Psi_{\SGA},\tilde{P})$ is slightly higher than that for $E(\Psi_{\KA},\tilde{P})$ when $\rho=0.6$.

The numerical results in this section demonstrate the robustness and superiority of   $\Psi_{\SGA}$ over   $\Psi_{\KA}$. 
Additional numerical results  for Gaussian trees are presented in App.~\ref{sec:gauss_num}.

\vspace{-0.1in}
\section{Discussion and Future Work}
\vspace{-0.05in}

There are several promising avenues for future research. First, SGA and the algorithm by \citet{Katiyar20arxiv} depend on the knowledge of  $\rho_{\max}$ through $\alpha$. Designing algorithms that do not depend on $\rho_{\max}$ would be of practical interest. Second, we can  tighten the sample complexity bounds so that the dependencies on the parameters $(\rho_{\min},\rho_{\max},q_{\max})$ are tightened. Finally, given noisy samples, we can endeavor to define equivalence classes (analogous to $[\T]$ here) and propose algorithms for the learning of various other graph structures such as random graphs \citep{Anandkumar12}, latent trees \citep{Choi11}, or forests   \citep{Tan11}.



%
%
%
%
\bibliography{abrv,conf_abrv,mybibfile}

\begin{thebibliography}{32}
\providecommand{\natexlab}[1]{#1}
\providecommand{\url}[1]{\texttt{#1}}
\expandafter\ifx\csname urlstyle\endcsname\relax
  \providecommand{\doi}[1]{doi: #1}\else
  \providecommand{\doi}{doi: \begingroup \urlstyle{rm}\Url}\fi

\bibitem[Anandkumar et~al.(2012)Anandkumar, Tan, Huang, and
  Willsky]{Anandkumar12}
Anandkumar, A., Tan, V. Y.~F., Huang, F., and Willsky, A.~S.
\newblock High-dimensional structure estimation of {Ising} models: Local
  separation criterion.
\newblock \emph{Ann. Statist.}, 40\penalty0 (3):\penalty0 1346--1375, 2012.

\bibitem[Besag(1986)]{Besag86}
Besag, J.
\newblock On the statistical analysis of dirty pictures.
\newblock \emph{J. Roy. Statist. Soc., Ser. B}, 48\penalty0 (3):\penalty0
  259--302, 1986.

\bibitem[Bresler \& Karzand(2020)Bresler and Karzand]{BreslerKarzand18}
Bresler, G. and Karzand, M.
\newblock Learning a tree-structured {Ising} model in order to make
  predictions.
\newblock \emph{Ann. Statist.}, 48\penalty0 (2):\penalty0 713--737, 2020.

\bibitem[Bresler et~al.(2013)Bresler, Mossel, and Sly]{Bresler13}
Bresler, G., Mossel, E., and Sly, A.
\newblock Reconstruction of {Markov} random fields from samples: {Some}
  observations and algorithms.
\newblock \emph{{SIAM Journal on Computing}}, 42\penalty0 (2):\penalty0
  563--578, 2013.

\bibitem[Cheng et~al.(2018)Cheng, Diakonikolas, Kane, and
  Stewart]{Cheng18NeurIPS}
Cheng, Y., Diakonikolas, I., Kane, D.~M., and Stewart, A.
\newblock Robust learning of fixed-structure {Bayesian} networks.
\newblock In \emph{Proc. NeurIPS 2018}, pp.\  10304--10316, Montreal, Canada,
  2018.

\bibitem[Choi et~al.(2011)Choi, Tan, Anandkumar, and Willsky]{Choi11}
Choi, M.~J., Tan, V.~Y.~F., Anandkumar, A., and Willsky, A.~S.
\newblock Learning latent tree graphical models.
\newblock \emph{J. Mach. Learn. Res.}, 12:\penalty0 1771--1812, 2011.

\bibitem[{Chow} \& {Liu}(1968){Chow} and {Liu}]{ChowLiu68}
{Chow}, C.~K. and {Liu}, C.~N.
\newblock Approximating discrete probability distributions with dependence
  trees.
\newblock \emph{{IEEE} Trans. Inform. Theory}, 14\penalty0 (3):\penalty0
  462--467, May 1968.

\bibitem[{Chow} \& {Wagner}(1973){Chow} and {Wagner}]{ChowWagner73}
{Chow}, C.~K. and {Wagner}, T.~J.
\newblock Consistency of an estimate of tree-dependent probability
  distributions.
\newblock \emph{{IEEE} Trans. Inform. Theory}, 19\penalty0 (3):\penalty0
  369--371, May 1973.

\bibitem[Cover \& Thomas(2006)Cover and Thomas]{CoverBook06}
Cover, T.~M. and Thomas, J.~A.
\newblock \emph{Elements of Information Theory}.
\newblock Wiley-Interscience, Hoboken, N.J., 2nd edition, 2006.

\bibitem[Deuschel \& Stroock(2000)Deuschel and Stroock]{deuschel00}
Deuschel, J.-D. and Stroock, D.~W.
\newblock \emph{Large Deviations}.
\newblock American Mathematical Society, 2000.

\bibitem[Friedman(2004)]{Friedman2004}
Friedman, N.
\newblock Inferring cellular networks using probabilistic graphical models.
\newblock \emph{Science}, 303\penalty0 (5659):\penalty0 799--805, 2004.

\bibitem[Herstein(1975)]{HersteinBook75}
Herstein, I.
\newblock \emph{{Topics In Algebra}}.
\newblock John Wiley and Sons, New York, 2nd edition, 1975.

\bibitem[Johnson et~al.(2007)Johnson, Chandrasekaran, and
  Willsky]{Johnson2007AISTATS}
Johnson, J.~K., Chandrasekaran, V., and Willsky, A.~S.
\newblock Learning {Markov} structure by maximum entropy relaxation.
\newblock In \emph{Proc. AISTATS 2007}, pp.\  203--210, San Juan, Puerto Rico,
  Mar. 2007.

\bibitem[Katiyar et~al.(2019)Katiyar, Hoffmann, and Caramanis]{Katiyar19ICML}
Katiyar, A., Hoffmann, J., and Caramanis, C.
\newblock Robust estimation of tree structured {G}aussian graphical models.
\newblock In \emph{Proc. ICML 2019}, pp.\  3292--3300, Long Beach, California,
  USA, Jun. 2019.

\bibitem[Katiyar et~al.(2020)Katiyar, Shah, and Caramanis]{Katiyar20arxiv}
Katiyar, A., Shah, V., and Caramanis, C.
\newblock Robust estimation of tree structured {Ising} models.
\newblock arXiv:2006.05601 [stat.ML], Jun. 2020.

\bibitem[Kester \& Kallenberg(1986)Kester and Kallenberg]{Kester86}
Kester, A.~D.~M. and Kallenberg, W.~C.~M.
\newblock Large deviations of estimators.
\newblock \emph{Ann. Statist.}, 14\penalty0 (2):\penalty0 648--664, 1986.

\bibitem[Krause(1996)]{Krause96}
Krause, E.~F.
\newblock Maximizing the product of summands; {Minimizing} the sum of factors.
\newblock \emph{Mathematics Magazine}, 69\penalty0 (4):\penalty0 270--278, Oct.
  1996.

\bibitem[{Kschischang} \& {Frey}(1998){Kschischang} and {Frey}]{Kschischang98}
{Kschischang}, F.~R. and {Frey}, B.~J.
\newblock Iterative decoding of compound codes by probability propagation in
  graphical models.
\newblock \emph{{IEEE} J. Sel. Areas Commun.}, 16\penalty0 (2):\penalty0
  219--230, 1998.

\bibitem[Lauritzen(1996)]{LauritzenBook}
Lauritzen, S.
\newblock \emph{Graphical Models}.
\newblock {Oxford Univ. Press}, {Oxford, U.K.}, 1996.

\bibitem[Nikolakakis et~al.(2019{\natexlab{a}})Nikolakakis, Kalogerias, and
  Sarwate]{Nikolakakis19AISTATS}
Nikolakakis, K.~E., Kalogerias, D.~S., and Sarwate, A.~D.
\newblock Learning tree structures from noisy data.
\newblock In \emph{Proc. AISTATS 2019}, pp.\  1771--1782, Naha, Okinawa, Japan,
  2019{\natexlab{a}}.

\bibitem[Nikolakakis et~al.(2019{\natexlab{b}})Nikolakakis, Kalogerias, and
  Sarwate]{Nikolakakis19NonParametric}
Nikolakakis, K.~E., Kalogerias, D.~S., and Sarwate, A.~D.
\newblock Non-parametric structure learning on hidden tree-shaped
  distributions.
\newblock arXiv:1909.09596v1 [stat.ML], Sep. 2019{\natexlab{b}}.

\bibitem[Nikolakakis et~al.(2019{\natexlab{c}})Nikolakakis, Kalogerias, and
  Sarwate]{Nikolakakis19Predictive}
Nikolakakis, K.~E., Kalogerias, D.~S., and Sarwate, A.~D.
\newblock Predictive learning on hidden tree-structured {Ising} models.
\newblock arXiv:1812.04700v2 [stat.ML], Feb. 2019{\natexlab{c}}.

\bibitem[Parikh et~al.(2011)Parikh, Song, and Xing]{parikh11}
Parikh, A.~P., Song, L., and Xing, E.~P.
\newblock A spectral algorithm for latent tree graphical models.
\newblock In \emph{Proc. ICML 2011}, pp.\  1065--1072, Jun. 2011.

\bibitem[Sanov(1957)]{sanov57}
Sanov, I.~N.
\newblock On the probability of large deviations of random variables.
\newblock \emph{Mat. Sbornik}, 42(84)\penalty0 (1):\penalty0 11--44, 1957.

\bibitem[{Scarlett} \& {Cevher}(2016){Scarlett} and {Cevher}]{Scarlett16}
{Scarlett}, J. and {Cevher}, V.
\newblock On the difficulty of selecting {Ising} models with approximate
  recovery.
\newblock \emph{IEEE Transactions on Signal and Information Processing over
  Networks}, 2\penalty0 (4):\penalty0 625--638, 2016.

\bibitem[{Tan} et~al.(2010){Tan}, {Anandkumar}, and {Willsky}]{Tan10TSP}
{Tan}, V.~Y.~F., {Anandkumar}, A., and {Willsky}, A.~S.
\newblock Learning {Gaussian} tree models: {Analysis} of error exponents and
  extremal structures.
\newblock \emph{{IEEE} Trans. Signal Process.}, 58\penalty0 (5):\penalty0
  2701--2714, May 2010.

\bibitem[{Tan} et~al.(2011){Tan}, {Anandkumar}, {Tong}, and {Willsky}]{Tan11IT}
{Tan}, V.~Y.~F., {Anandkumar}, A., {Tong}, L., and {Willsky}, A.~S.
\newblock A large-deviation analysis of the maximum-likelihood learning of
  {Markov} tree structures.
\newblock \emph{{IEEE} Trans. Inform. Theory}, 57\penalty0 (3):\penalty0
  1714--1735, Mar. 2011.

\bibitem[Tan et~al.(2011)Tan, Anandkumar, and Willsky]{Tan11}
Tan, V.~Y.~F., Anandkumar, A., and Willsky, A.~S.
\newblock Learning high-dimensional {Markov} forest distributions: Analysis of
  error rates.
\newblock \emph{J. Mach. Learn. Res.}, 12:\penalty0 1617--1653, 2011.

\bibitem[{Tandon} et~al.(2020){Tandon}, {Tan}, and {Zhu}]{TandonTZ20_JSAIT}
{Tandon}, A., {Tan}, V.~Y.~F., and {Zhu}, S.
\newblock Exact asymptotics for learning tree-structured graphical models with
  side information: {N}oiseless and noisy samples.
\newblock \emph{IEEE Journal on Selected Areas in Information Theory},
  1\penalty0 (3):\penalty0 760--776, Nov 2020.

\bibitem[Tandon et~al.(2014)Tandon, Shanmugam, Ravikumar, and
  Dimakis]{Shanmugam14NeurIPS}
Tandon, R., Shanmugam, K., Ravikumar, P., and Dimakis, A.~G.
\newblock On the information theoretic limits of learning {Ising} models.
\newblock In \emph{Proc. NeurIPS 2014}, pp.\  2303--2311, Montreal, Canada,
  2014.

\bibitem[Wainwright \& Jordan(2008)Wainwright and Jordan]{Wainwright08_FnT}
Wainwright, M.~J. and Jordan, M.~I.
\newblock Graphical models, exponential families, and variational inference.
\newblock \emph{Found. Trends Mach. Learn.}, 1\penalty0 (1-2):\penalty0 1--305,
  2008.

\bibitem[Wang \& Gu(2017)Wang and Gu]{WangGu2017ICML}
Wang, L. and Gu, Q.
\newblock Robust {G}aussian graphical model estimation with arbitrary
  corruption.
\newblock In \emph{Proc. ICML 2017}, pp.\  3617--3626, Sydney, Australia, Aug.
  2017.

\end{thebibliography}
\bibliographystyle{icml2020}

\appendix

\onecolumn

\section{Proof of Theorem~\ref{thm:NecessarySamples}} \label{app:NecessarySamples}
We will prove the result by judiciously choosing a sufficiently large subset of tree-structured graphs whose corresponding distributions are close enough (with respect to the KL-divergence ``metric''), and then applying   Fano's lemma~\cite{CoverBook06}. 

\begin{figure}
\centering
\begin{tabular}{cc}
\begin{overpic}[width=.5\textwidth]{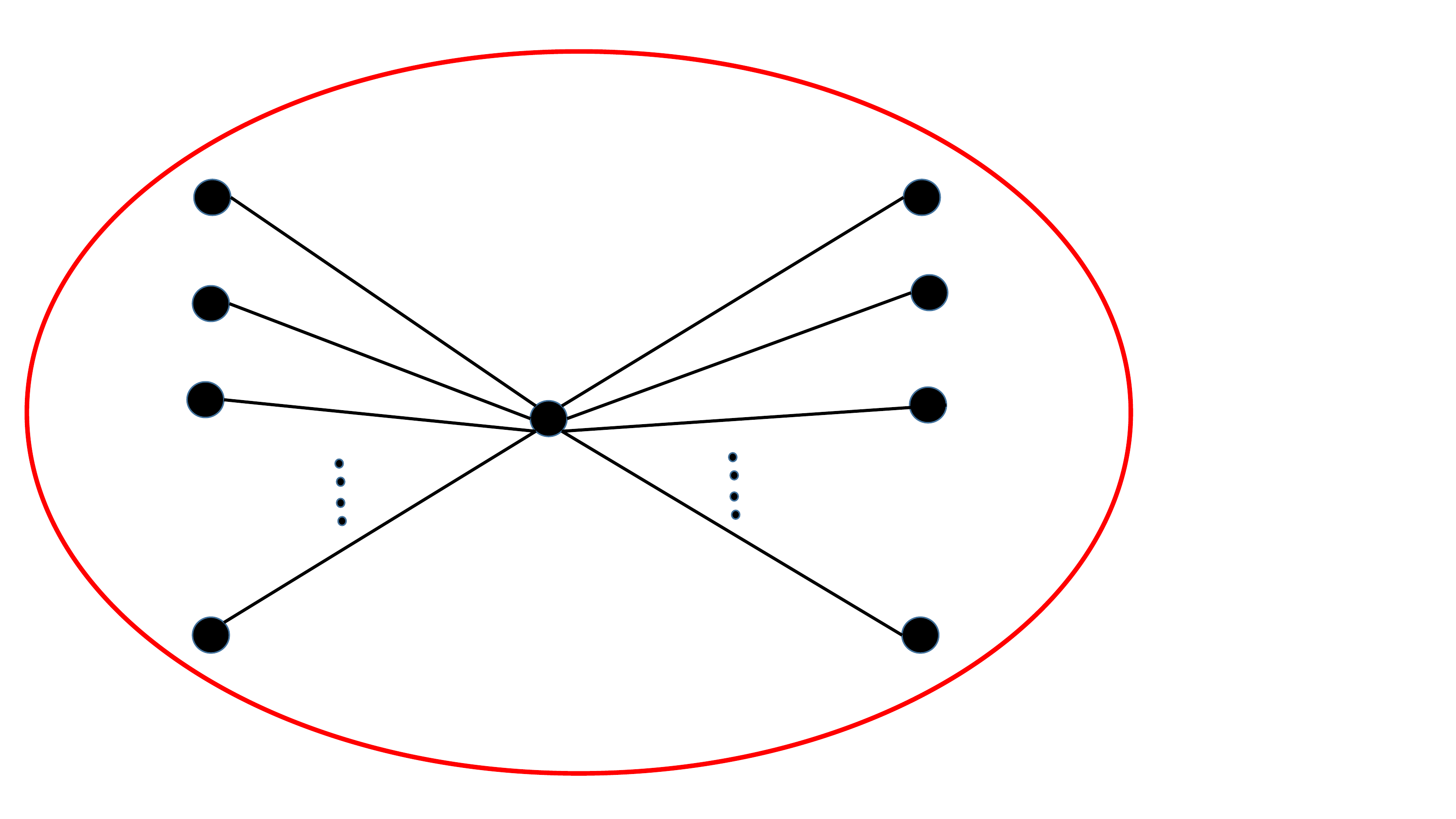}
\put(16,11){$X_t$}
\put(7,29){$X_3$}
\put(7,35){$X_2$}
\put(17,43){$X_1$}
\put(63,14){$X_{2t}$}
\put(65,29){$X_{t+3}$}
\put(65,35){$X_{t+2}$}
\put(52,43){$X_{t+1}$}
\put(34,33){$X_{2t+1}$}
\put(25,38){$\rho_{\min}$}
\put(25,16){$\rho_{\min}$}
\put(45,38){$\rho_{\max}$}
\put(45,16){$\rho_{\max}$}
\end{overpic} &
\begin{overpic}[width=.5\textwidth]{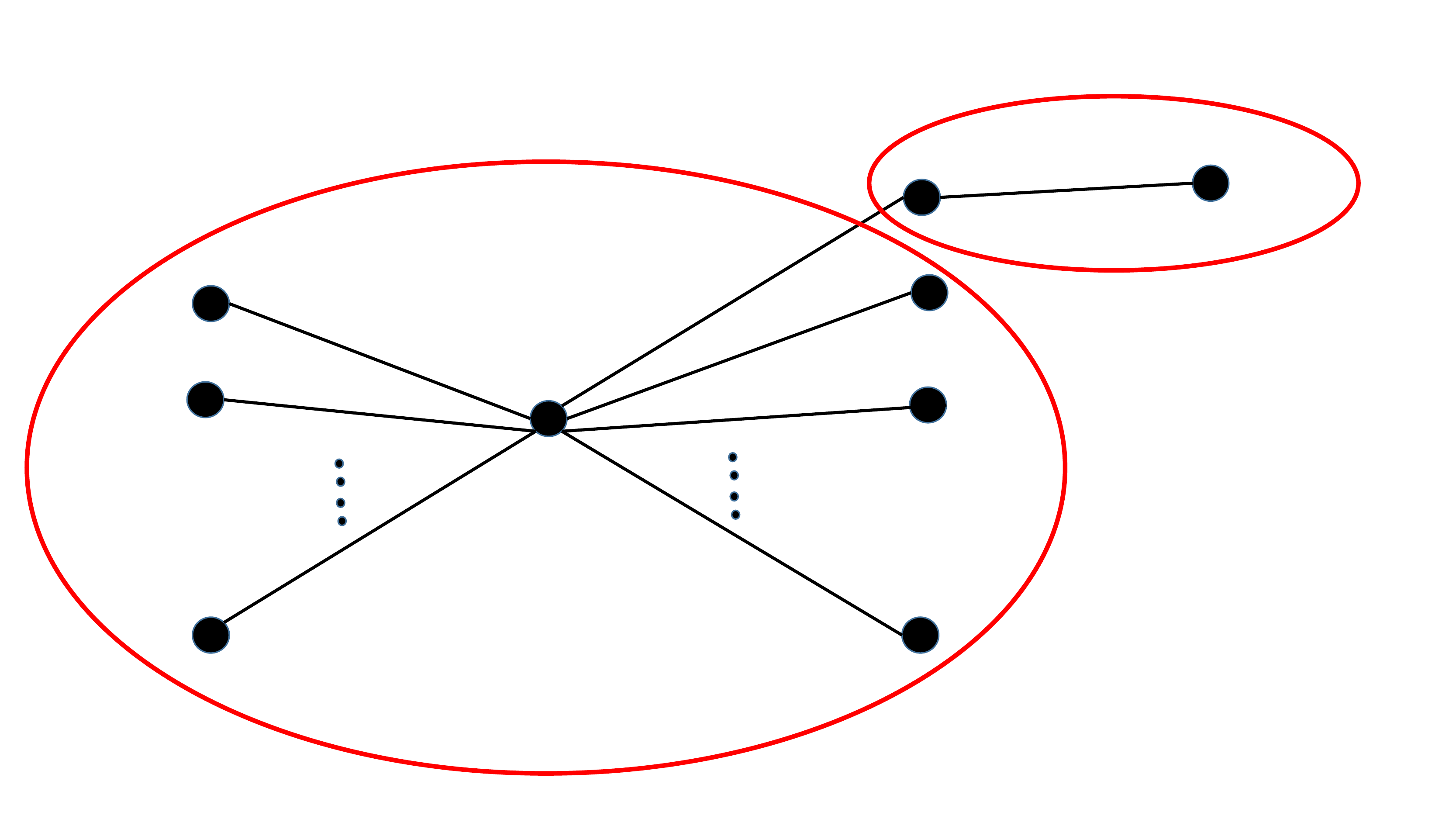}
\put(8,13){$X_t$}
\put(5,29){$X_3$}
\put(15,37){$X_2$}
\put(86,43){$X_1$}
\put(73,45){$\rho_{\min}$}
\put(63,15){$X_{2t}$}
\put(62,23){$X_{t+3}$}
\put(67,35){$X_{t+2}$}
\put(54,48){$X_{t+1}$}
\put(34,33){$X_{2t+1}$}
\put(25,33){$\rho_{\min}$}
\put(25,16){$\rho_{\min}$}
\put(45,38){$\rho_{\max}$}
\put(45,16){$\rho_{\max}$}
\end{overpic} 
\\
Tree Structure $\T_0$ & Tree Structure $\T_1$ ($k=1\;\Rightarrow k_{\mathrm{a}}=k_{\mathrm{b}}=1$) \\
\begin{overpic}[width=.5\textwidth]{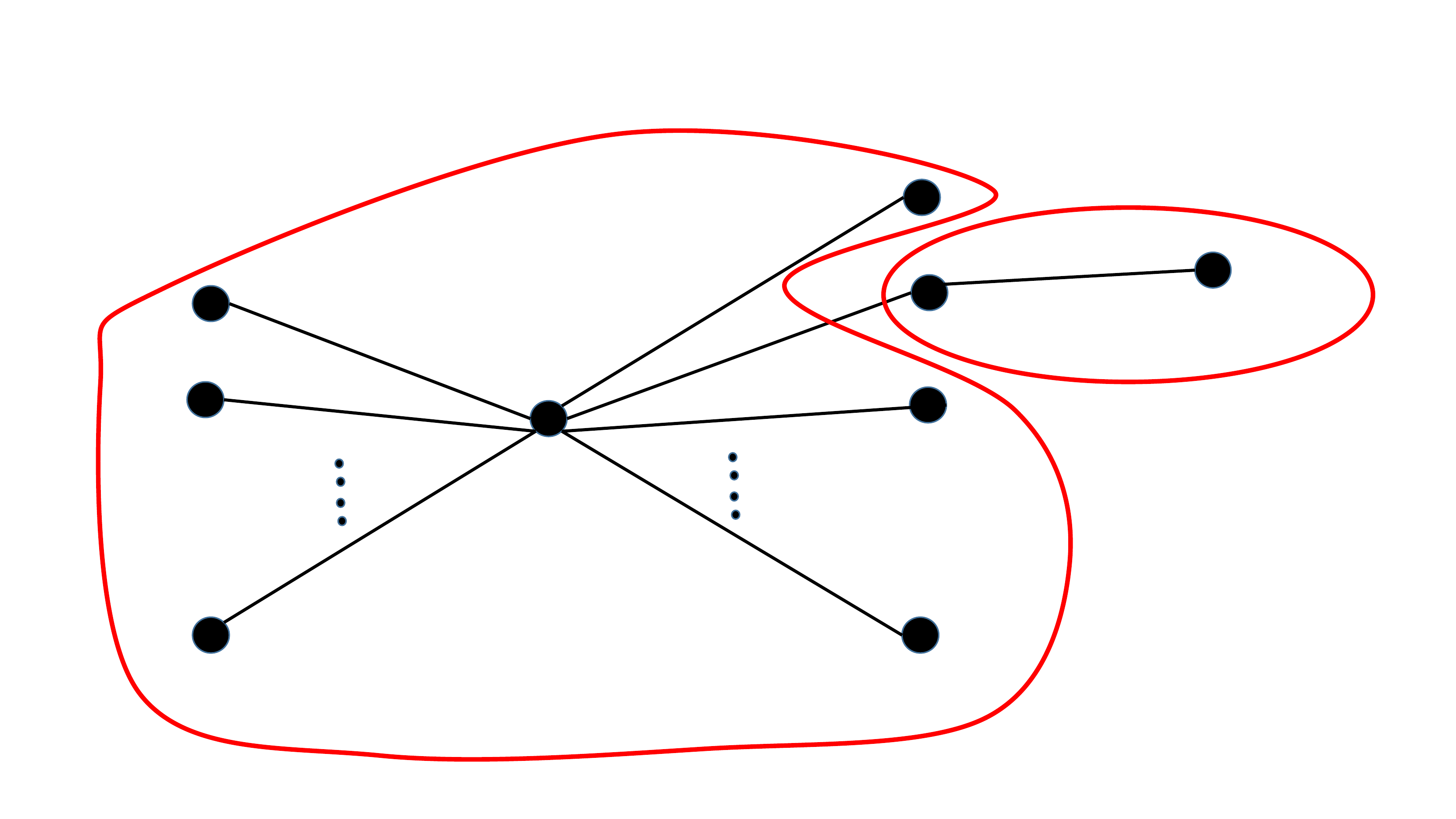}
\put(8,13){$X_t$}
\put(7,29){$X_3$}
\put(16,36){$X_2$}
\put(86,43){$X_1$}
\put(63,14){$X_{2t}$}
\put(62,23){$X_{t+3}$}
\put(65,33){$X_{t+2}$}
\put(52,43){$X_{t+1}$}
\put(34,33){$X_{2t+1}$}
\put(73,39){$\rho_{\min}$}
\put(25,33){$\rho_{\min}$}
\put(25,16){$\rho_{\min}$}
\put(45,38){$\rho_{\max}$}
\put(45,16){$\rho_{\max}$}
\end{overpic} 
 &
 \begin{overpic}[width=.5\textwidth]{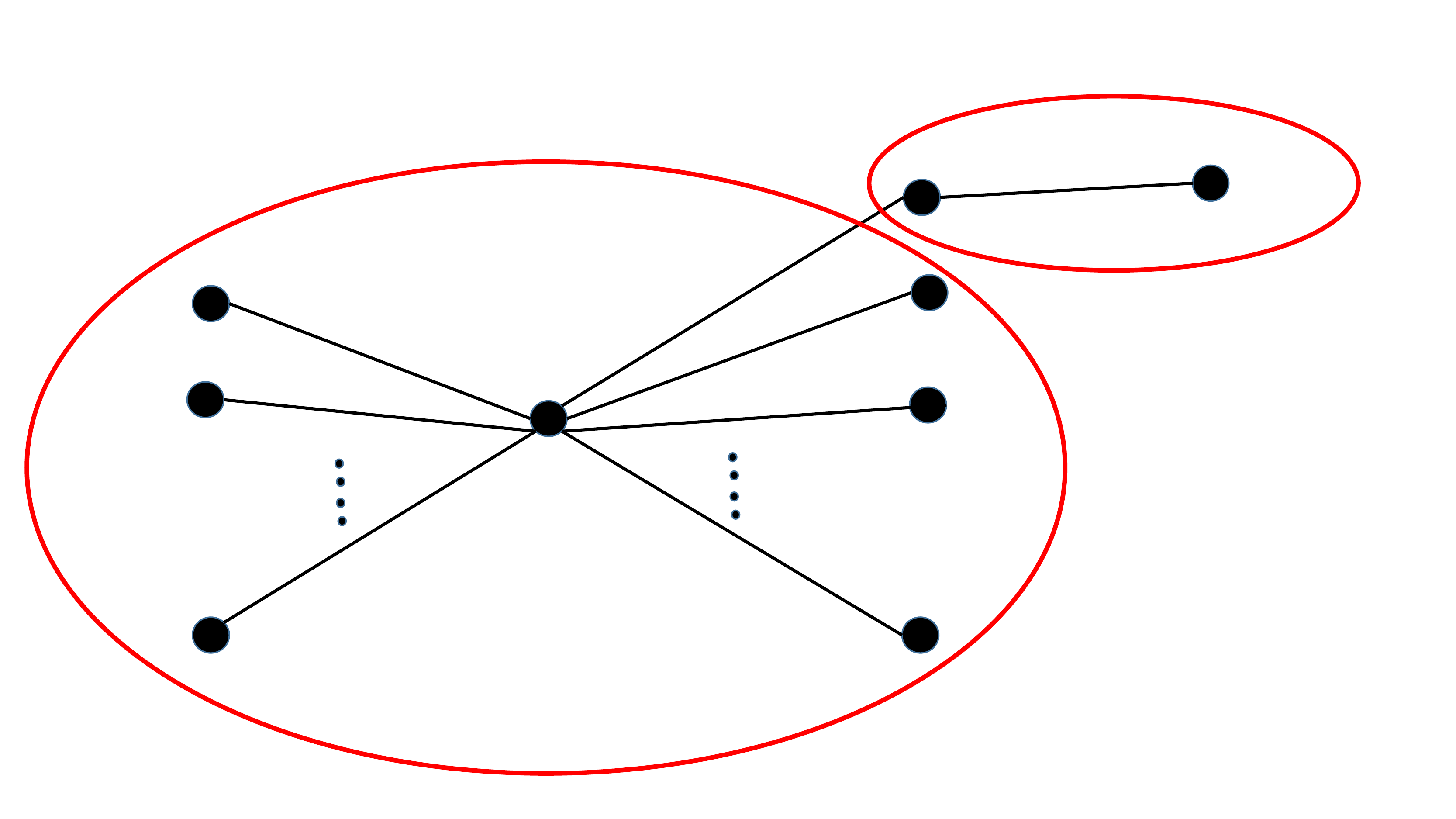}
\put(8,13){$X_t$}
\put(7,29){$X_3$}
\put(15,37){$X_1$}
\put(86,43){$X_2$}
\put(63,15){$X_{2t}$}
\put(62,23){$X_{t+3}$}
\put(65,33){$X_{t+2}$}
\put(54,48){$X_{t+1}$}
\put(34,33){$X_{2t+1}$}
\put(73,45){$\rho_{\min}$}
\put(25,33){$\rho_{\min}$}
\put(25,16){$\rho_{\min}$}
\put(45,38){$\rho_{\max}$}
\put(45,16){$\rho_{\max}$}
\end{overpic} \\
 Tree Structure $\T_2$ ($k=2\;\Rightarrow k_{\mathrm{a}}=1,k_{\mathrm{b}}=2$)  &  Tree Structure $\T_{t+1}$ ($k=t+1\;\Rightarrow k_{\mathrm{a}}=2,k_{\mathrm{b}}=1$) \\
 \begin{overpic}[width=.5\textwidth]{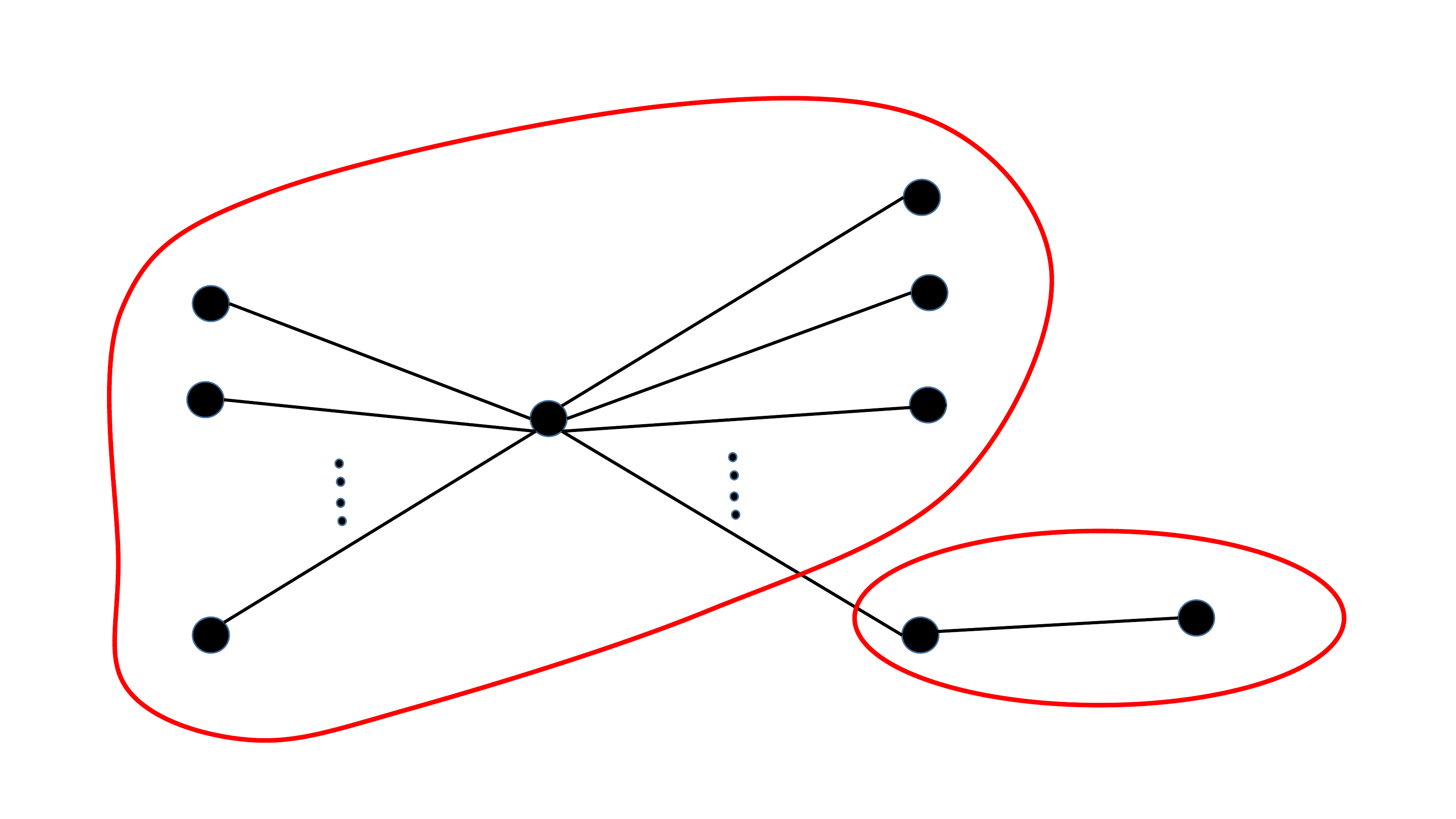}
\put(8,13){$X_t$}
\put(7.5,29){$X_3$}
\put(15,37){$X_1$}
\put(83,11){$X_2$}
\put(61,6){$X_{2t}$}
\put(56,24){$X_{t+3}$}
\put(65,33){$X_{t+2}$}
\put(52,43){$X_{t+1}$}
\put(34,33){$X_{2t+1}$}
\put(73,10){$\rho_{\min}$}
\put(25,33){$\rho_{\min}$}
\put(25,16){$\rho_{\min}$}
\put(45,38){$\rho_{\max}$}
\put(43,17){$\rho_{\max}$}
\end{overpic} 
 &
 \begin{overpic}[width=.5\textwidth]{tree_figures-5.pdf}
\put(11,8){$X_{t-1}$}
\put(7.5,29){$X_2$}
\put(15,37){$X_1$}
\put(83,11){$X_t$}
\put(61,6){$X_{2t}$}
\put(56,24){$X_{t+3}$}
\put(65,33){$X_{t+2}$}
\put(52,43){$X_{t+1}$}
\put(34,33){$X_{2t+1}$}
\put(73,10){$\rho_{\min}$}
\put(25,33){$\rho_{\min}$}
\put(25,16){$\rho_{\min}$}
\put(45,38){$\rho_{\max}$}
\put(43,17){$\rho_{\max}$}
\end{overpic} \\
 Tree Structure $\T_{2t}$ ($k=2t\;\Rightarrow k_{\mathrm{a}}=2,k_{\mathrm{b}}=t$)  &  Tree Structure $\T_{t^2}$ ($k=t^2\;\Rightarrow k_{\mathrm{a}}=t,k_{\mathrm{b}}=t$)  
\end{tabular}
\caption{Tree structures constructed in the proof of Theorem~\ref{thm:NecessarySamples}. Equivalence clusters are circled in red, where an equivalence cluster is defined as a set containing a non-leaf (internal) node and all the leaf nodes connected to it~\citep{Katiyar20arxiv}.} 
\label{fig:converse}
\end{figure} 

We assume, for simplicity, that $d$ is odd, with $d=2t+1$ for some  $t\in\mathbb{N}$. Let the edge set $\cE_0$ consist of $2t$ edges $\big\{ \{X_j, X_{2t+1}\} \big\}_{j=1}^{2t}$, and let the corresponding tree (resp.\ distribution) be denoted $\T_0$ (resp.\ $P_0$). From \eqref{eq:TreeFactorizationP_v2}  and \eqref{eq:ThetaAndRho}, we know that a tree distribution is uniquely defined by the edge correlation values. Let the edge correlations, under distribution $P_0$, be given by
\begin{equation} \label{eq:EdgeCorrelationsE0}
	\rho_{j,2t+1} = \begin{cases}
		\rho_{\min}, ~\mathrm{if~} 1 \le j \le t \\
		\rho_{\max}, ~\mathrm{if~} t+1 \le j \le 2t .
	\end{cases}
\end{equation}
Let $k$ be a positive integer satisfying $1 \le k \le t^2$, and  define
$$ k_{\mathrm{a}} \triangleq 1 + \floor{\frac{k-1}{t}}, \quad k_{\mathrm{b}} \triangleq k - (k_{\mathrm{a}}-1)t .$$
It is seen that $1 \le k_{\mathrm{a}}, k_{\mathrm{b}} \le t$, and the pair $(k_{\mathrm{a}},k_{\mathrm{b}})$ is unique for every $k$. Let $\cE_k$ denote the edge set of tree structure $\T_k$, where
\begin{equation*}
	\cE_k = \{k_{\mathrm{a}}, k_{\mathrm{b}}+t\} \cup \cE_0 \setminus \{k_{\mathrm{a}}, 2t + 1\}.
\end{equation*}
Hence, for $1 \le k \le t^2$, the edge set $\cE_k$ differs from $\cE_0$ in only one edge. 
Let $P_k$ denote the tree distribution corresponding to $\T_k$. For $\{k_{\mathrm{a}}, k_{\mathrm{b}}+t\} \in \cE_k$, let $\rho_{k_{\mathrm{a}},k_{\mathrm{b}}+t} = \rho_{\min}$, and let the edge correlation for the remaining edges in $\cE_k$ be given by~\eqref{eq:EdgeCorrelationsE0}. 

Now consider the noise model in Sec.~\ref{sec:SystemModel}, with $q_i$ denoting the crossover probability for a sample corresponding to the $i$th node. For $d=2t+1$ nodes, we fix these values as follows
\begin{equation} \label{eq:NoiseDist}
	q_i = \begin{cases}
		q_{\max}, ~&\mbox{if~} 1 \le i \le t \\
		0, ~&\mbox{else}.
	\end{cases}
\end{equation}
Let the above parameters be applicable to the noisy samples obtained from tree structure $\T_k$, where $0 \le k \le t^2$. See Fig.~\ref{fig:converse} for diagrams of the trees constructed.

\begin{figure}
\centering
\begin{tabular}{c}
\begin{overpic}[width=.5\textwidth]{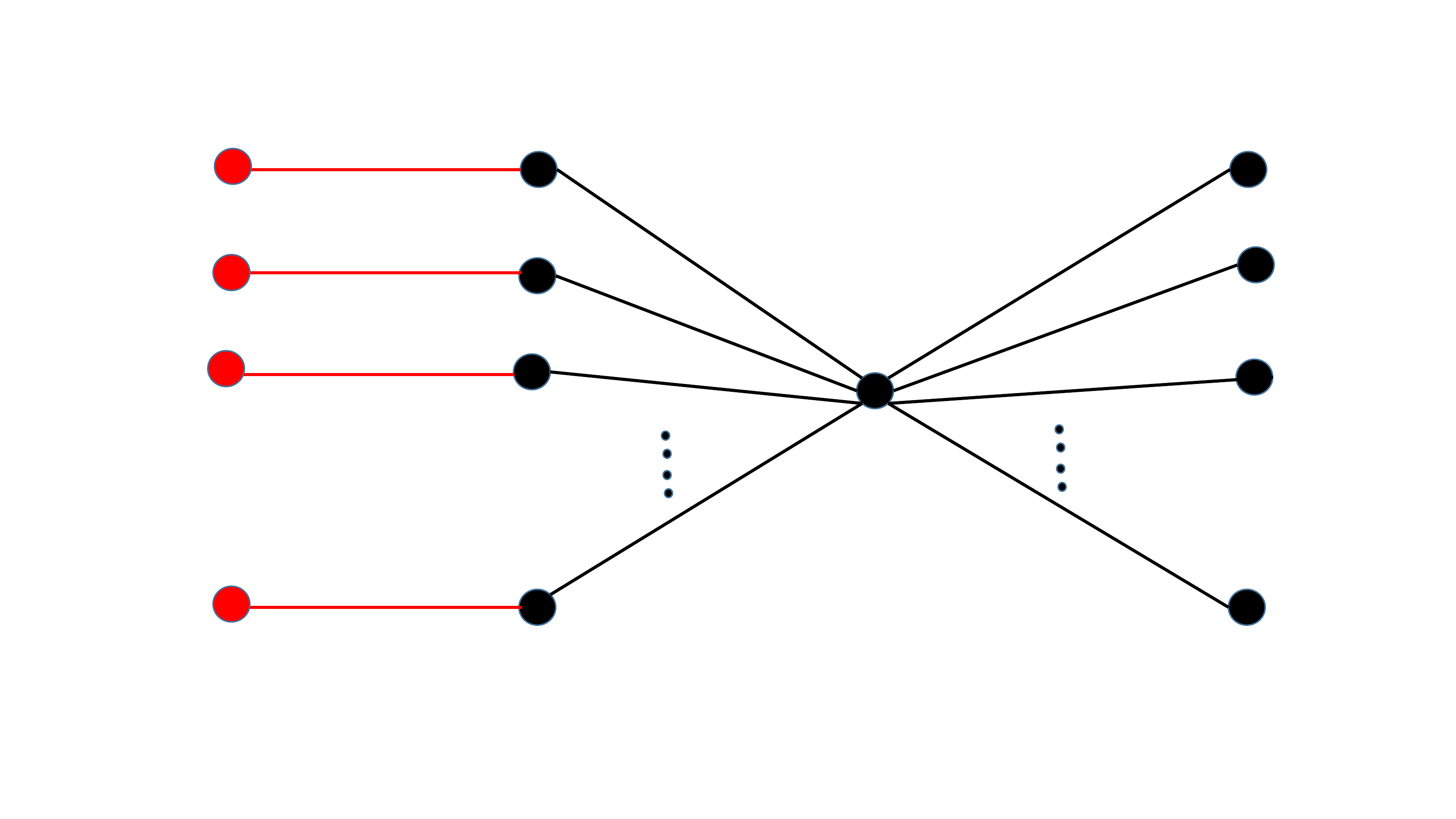}
\put(32,10){$X_t$}
\put(32,26){$X_3$}
\put(32,33){$X_2$}
\put(32,41){$X_1$}

\put(11,10){$Y_t$}
\put(11,26){$Y_3$}
\put(11,33){$Y_2$}
\put(11,41){$Y_1$}

\put(18,46){$1-2q_{\max}$}
\put(18,17){$1-2q_{\max}$}

\put(90,11){$Y_{2t}=X_{2t}$}
\put(90,29){$Y_{ t+3}=X_{t+3}$}
\put(90,35){$Y_{ t+2}=X_{t+2}$}
\put(90,43){$Y_{ t+1}=X_{t+1}$}

\put(45,42){$\rho_{\min}$}
\put(45,16){$\rho_{\min}$}
\put(70,42){$\rho_{\max}$}
\put(70,16){$\rho_{\max}$}
\end{overpic} \vspace{-.2in}
\end{tabular}
\caption{Construction of the distribution of the noisy samples $\{ Y_i\}$.}
\label{fig:noisy}
\end{figure}
 Let $Y_i$ denote the noisy sample corresponding to the $i$th node. Then, we have $\tilde{\rho}_{i,j} \triangleq \bbE[Y_i Y_j] = (1-2q_i)(1-2q_j) \rho_{i,j}$~\cite{Nikolakakis19NonParametric}.
Let $\tilde{P}_k$ denote the distribution for the noisy vector $(Y_1, Y_2, \ldots , Y_{2t+1})$. As noise is only applied to leaf nodes~\eqref{eq:NoiseDist}, we have $Y_j = X_j$ for $t+1 \le j \le 2t+1$, and the conditional independence among $Y_i$, $1 \le i \le 2t+1$ continues to remain encoded via the tree structure $\T_k$.\footnote{In general, the noisy samples do not satisfy the Ising model~\cite{Bresler13,Nikolakakis19AISTATS}. However, noisy samples from an underlying tree-structured Ising model retain the tree-structure when noise is only applied to leaf nodes.}   See Fig.~\ref{fig:noisy} for the construction of the distribution of the noisy samples.

Let $M = t^2$, and let the tree structure $\T$ be chosen uniformly from the set $\{\T_0, \T_1, \ldots , \T_M\}$. Then we have the Markov chain $\T \longrightarrow \brX_1^n \longrightarrow \brY_1^n \longrightarrow \hat{\T}$, and   Fano's inequality gives a lower bound on the error probability $\bbP\big(\Psi(\brY_1^n) \neq \T\big)$ for any estimator $\Psi$ using the multiple hypothesis testing framework. A key observation from our construction of the $M+1$ tree structures $\T_k$, $0 \le k \le M$, is that their corresponding equivalence classes (see Sec.~\ref{sec:EquivClass}) are disjoint, i.e., $[\T_i] \cap [\T_j] = \varnothing$ for $i \neq j$. When learning the underlying tree structure using the multiple hypothesis framework, this observation has the important consequence that $\bbP\big(\Psi(\brY_1^n) \notin [\T]\big) = \bbP\big(\Psi(\brY_1^n) \neq \T\big)$, and we have the following result.
\begin{lemma}[Fano's Inequality, Lemma~6.2 in \citep{BreslerKarzand18}] \label{lem:Fano}
	For $k \in \{0,1,\ldots,M\}$, let $\tilde{P}_k$ be the probability law of the noisy observation $\brY$, with the models satisfying the properties given in Sec.~\ref{sec:SystemModel}. Let $\Psi : \{+1,-1\}^{d \times n} \to \cT_d$ denote an estimator using $n$ i.i.d.\  samples $\brY_1^n$. Let the KL-divergence be $D(\tilde{P}_k \| \tilde{P}_0) \triangleq \sum_{\bry \in \{+1,-1\}^{d}} \tilde{P}_k(\bry) \log\big(\tilde{P}_k(\bry) / \tilde{P}_0(\bry) \big)$, and define the symmetric KL-divergence $J(\tilde{P}_k, \tilde{P}_0) \triangleq D(\tilde{P}_k \| \tilde{P}_0) + D(\tilde{P}_0 \| \tilde{P}_k)$. If the number of samples satisfy
	\begin{equation*}
		n \,<\, (1-\delta) \frac{\log M}{\frac{1}{M+1} \sum_{k=1}^M J(\tilde{P}_k ,\tilde{P}_0)} ,
	\end{equation*} 
	then the minimax error $\cM_n(q_{\max}, \rho_{\min}, \rho_{\max})$ in~\eqref{eq:Def_MinimaxError} is lower bounded as
	\begin{equation*}
		\cM_n(q_{\max}, \rho_{\min}, \rho_{\max}) \ge \delta - \frac{1}{\log M} .
	\end{equation*}
\end{lemma}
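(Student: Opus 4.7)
The plan is to prove Lemma \ref{lem:Fano} as a standard Fano-type multiple hypothesis testing lower bound. The three steps are: (i) reduce the minimax error to an average error over a uniform prior on the $(M{+}1)$-element family $\{\T_0,\T_1,\ldots,\T_M\}$, (ii) apply Fano's inequality to the Markov chain $\T \to \brX_1^n \to \brY_1^n \to \Psi(\brY_1^n)$, and (iii) bound the resulting mutual information in terms of the pairwise symmetric KL divergences that appear in the statement. By the construction preceding the lemma, the equivalence classes $[\T_0], [\T_1], \ldots, [\T_M]$ are pairwise disjoint, so the error event $\{\Psi(\brY_1^n)\notin[\T]\}$ coincides with the classical testing error $\{\Psi(\brY_1^n)\neq\T\}$; the minimax error therefore dominates the average Bayes risk under the uniform prior on $\{\T_0,\ldots,\T_M\}$, and it suffices to lower bound this Bayes risk.

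Standard Fano applied to a uniform $\T$ on $M{+}1$ values yields $\bbP(\Psi(\brY_1^n)\neq\T) \ge 1 - (I(\T;\brY_1^n)+\log 2)/\log M$, after absorbing the gap between $\log(M{+}1)$ and $\log M$. The mutual information is then bounded via the identity $I(\T;\brY_1^n) = \tfrac{1}{M+1}\sum_{k=0}^{M} D(\tilde{P}_k^{n} \,\|\, \bar{P}^{n})$, where $\bar{P}^n = \tfrac{1}{M+1}\sum_j \tilde{P}_j^n$. The key convexity observation is that the mixture $\bar{P}^n$ minimises $Q \mapsto \sum_k D(\tilde{P}_k^n \| Q)$, so $\bar{P}^n$ may be replaced by the reference $\tilde{P}_0^n$ to obtain $I(\T;\brY_1^n) \le \tfrac{1}{M+1}\sum_{k=1}^M D(\tilde{P}_k^n \| \tilde{P}_0^n)$; tensorisation of KL divergence for i.i.d.\ samples turns this into $\tfrac{n}{M+1}\sum_{k=1}^M D(\tilde{P}_k \| \tilde{P}_0)$, and the trivial bound $D(\tilde{P}_k\|\tilde{P}_0) \le J(\tilde{P}_k,\tilde{P}_0)$ recovers the symmetric KL divergence used in the statement.

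Plugging the hypothesis $n < (1-\delta)\log M \big/ \bigl[\tfrac{1}{M+1}\sum_{k=1}^M J(\tilde{P}_k,\tilde{P}_0)\bigr]$ into the above chain of inequalities gives $I(\T;\brY_1^n) < (1-\delta)\log M$, so Fano's inequality yields $\bbP(\Psi(\brY_1^n)\neq\T) > \delta - \log 2/\log M$, which I weaken to $\delta - 1/\log M$ to match the stated form (any $O(1/\log M)$ slack can be absorbed by tightening the Fano constant or by using the sharpened version $H(T\mid Y)\le \log 2 + P_e\log(M-1)$ with care). Since this bound holds under the uniform prior, it also holds for the worst-case (minimax) distribution, finishing the argument. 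The entire proof is a direct application of Fano; the only point requiring mild care is the convexity step that replaces the mixture $\bar{P}^n$ by the reference $\tilde{P}_0^n$, together with the observation that upper bounding $D$ by $J$ is harmless and produces the symmetric form used downstream in Theorem \ref{thm:NecessarySamples}.
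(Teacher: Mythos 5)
Your proof is correct and follows the standard Fano-type argument that underlies the cited result: the paper does not prove this lemma itself but imports it from Lemma~6.2 of Bresler and Karzand (2018), whose proof is exactly your chain (disjoint equivalence classes, reduction to exact multi-hypothesis testing, Fano's inequality, the identity $I(\T;\brY_1^n)=\frac{1}{M+1}\sum_k D(\tilde{P}_k^n\|\bar{P}^n)$ with the mixture replaced by the reference $\tilde{P}_0^n$ via convexity, tensorisation, and the harmless upper bound $D\le J$). The only point worth phrasing slightly more carefully is the reduction step: for a fixed estimator the events $\{\Psi(\brY_1^n)\notin[\T]\}$ and $\{\Psi(\brY_1^n)\neq\T\}$ are not literally equal (the former is contained in the latter), but the disjointness of $[\T_0],\ldots,[\T_M]$ lets you convert any class-estimator $\Psi$ into an index-estimator $\Psi'$ with $\{\Psi'\neq\T\}\subseteq\{\Psi\notin[\T]\}$, so the Fano lower bound on exact recovery transfers to the class-recovery error, which is all the argument needs.
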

We remark that Lemma~\ref{lem:Fano} continues to hold if the symmetric KL-divergence $J(\tilde{P}_k, \tilde{P}_0)$ is replaced by $D(\tilde{P}_k \| \tilde{P}_0)$; however, we use the given form as it is easier to quantify $J(\tilde{P}_k,\tilde{P}_0)$ for Ising models. We proceed to prove Theorem~\ref{thm:NecessarySamples} by quantifying $J(\tilde{P}_k,\tilde{P}_0)$ and applying Lemma~\ref{lem:Fano}.

As discussed previously, if $Y_j$ denotes a noisy sample corresponding to the $j$th node, then the conditional independence among $Y_j$, $1 \le j \le 2t+1$, continues to be encoded by the tree structure $\T_k$ for distribution $\tilde{P}_k$. Further, for $1 \le k \le M$, the edge set $\cE_k$ differs from $\cE_0$ in only one edge, and we have $\cE_k \setminus \cE_0 = \{k_{\mathrm{a}}, k_{\mathrm{b}} + t\}$ and $\cE_0 \setminus \cE_k = \{k_{\mathrm{a}}, 2t+1\}$. Let $\rho_{j_1,j_2}^{(k)}$ (resp. $\rho_{j_1,j_2}^{(0)}$) denote the correlation $\bbE[X_{j_1} X_{j_2}]$ with respect to the distribution $P_k$ (resp. $P_0$). Then, from the construction of $P_k$ and $P_0$ we have
\begin{align}
	\rho_{k_{\mathrm{a}},k_{\mathrm{b}}+t}^{(k)} &= \rho_{\min}, \quad \rho_{k_{\mathrm{b}}+t,2t+1}^{(k)} = \rho_{\max} , \label{eq:rho_k}\\
	\rho_{k_{\mathrm{a}},2t+1}^{(0)} &= \rho_{\min}, \quad \rho_{k_{\mathrm{b}}+t,2t+1}^{(0)} = \rho_{\max} . \label{eq:rho_0}
\end{align}

Let $\tilde{\rho}_{j_1,j_2}^{(k)}$ (resp. $\tilde{\rho}_{j_1,j_2}^{(0)}$) denote the correlation $\bbE[Y_{j_1} Y_{j_2}]$ with respect to the distribution $\tilde{P}_k$ (resp. $\tilde{P}_0$). Then, using \eqref{eq:NoiseDist}, \eqref{eq:rho_k},  and~\eqref{eq:rho_0}, and applying the correlation decay property for tree-structured Ising models~\citep[Lemma~A.2]{Nikolakakis19Predictive}, we obtain
\begin{align}
	\tilde{\rho}_{k_{\mathrm{a}},k_{\mathrm{b}}+t}^{(k)} &= (1-q_{\max})\rho_{\min}, \quad\quad~~~~ \tilde{\rho}_{k_{\mathrm{a}},2t+1}^{(k)} = (1-q_{\max})\rho_{\min} \rho_{\max}, \label{eq:tilde_rho_k}\\
	\tilde{\rho}_{k_{\mathrm{a}},k_{\mathrm{b}}+t}^{(0)} &= (1-q_{\max})\rho_{\min} \rho_{\max}, \quad \tilde{\rho}_{k_{\mathrm{a}},2t+1}^{(0)} = (1-q_{\max}) \rho_{\min} . \label{eq:tilde_rho_0}
\end{align}
Finally, using \eqref{eq:TreeFactorizationP_v2}, \eqref{eq:ThetaAndRho}, \eqref{eq:tilde_rho_k}, \eqref{eq:tilde_rho_0}, and Eqn.~(6.3) in \citet{BreslerKarzand18}, we obtain
\begin{equation}
	J(\tilde{P}_k,\tilde{P}_0) \,=\, 2\, \atanh(\rho_q) \,\rho_q \left(1-\rho_{\max}\right) , \label{eq:SymmKL}
\end{equation}
where $\rho_q = (1-q_{\max})\rho_{\min}$. Now, using \eqref{eq:SymmKL} and Lemma~\ref{lem:Fano}, we observe that if the number of samples satisfy
\begin{equation*}
	n \,<\, (1-\delta) \frac{\log M}{2\, \atanh(\rho_q) \,\rho_q \left(1-\rho_{\max}\right)} ,
\end{equation*} 
then we have $\cM_n(q_{\max}, \rho_{\min}, \rho_{\max}) \ge \delta - (1/ \log M)$. Setting $\delta = 1/2 + (1/ \log M)$, and using the fact $M = t^2$, we get that $\cM_n(q_{\max}, \rho_{\min}, \rho_{\max}) \ge 1/2$ if $n$ satisfies
\begin{equation}
	n \,<\,  \frac{\log(t) - 1}{2\, \atanh(\rho_q) \,\rho_q \left(1-\rho_{\max}\right)} . \label{eq:NecessarySamples2}
\end{equation}
The proof of Theorem~\ref{thm:NecessarySamples} is complete by using \eqref{eq:NecessarySamples2} and observing that $\log(t)-1 = \log(\frac{d-1}{2}) - 1 > \frac{1}{2} \log d$ for $d>32$.
\qed

\section{Minimum and Maximum Possible Size of $[\T]$} \label{app:EqClassSize}

The following proposition quantifies the minimum and maximum size of the equivalence class $[\T]$, for a given number of nodes $d$.
\begin{proposition} \label{prop:EqClassSize}
	Let $d \ge 4$. Then, we have 
	\begin{align}
		\min_{T \in \cT_d}\, |[\T]| &= 4 , \label{eq:MinEqClassSize} \\
		\max_{T \in \cT_d}\, |[\T]| &\le 3^{(d/3)} , \label{eq:MaxEqClassSize}
	\end{align}
	where the minimum is achieved in \eqref{eq:MinEqClassSize} for a chain tree structure, and the inequality in~\eqref{eq:MaxEqClassSize} becomes tight when $d$ is a multiple of $3$.
\end{proposition}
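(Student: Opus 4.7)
The plan is to first derive a closed-form expression for $|[\T]|$ in terms of the ``leaf-sibling'' structure of $\T$, and then optimize this expression over $\cT_d$ via two short extremal arguments.

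For Step~1, I would partition the leaves of $\T$ into sibling groups by their unique neighbour (parent), and denote the sizes of the non-empty groups by $k_1,\dots,k_m$ (so $m$ is the number of internal nodes having at least one leaf attached). The constraint defining $\sS_\T$---that no two chosen leaves share a neighbour---is exactly the condition that at most one leaf is picked from each sibling group, so $|\sS_\T| = \prod_{i=1}^{m}(k_i+1)$. Each $\cS \in \sS_\T$ corresponds to a specific permutation of node labels (swap each $\ell \in \cS$ with its parent), and if $\cS_1 \neq \cS_2$ with $\ell \in \cS_1 \triangle \cS_2$ having parent $v$, these two permutations must disagree at either $\ell$ or $v$, so they produce distinct labelled trees. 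Hence $\cS \mapsto \T_\cS$ is a bijection and
\begin{equation*}
|[\T]| \;=\; \prod_{i=1}^m (k_i+1).
\end{equation*}

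For the minimum (Step~2), I split on whether $\T$ is a star. If it is, $m=1$ and $k_1 = d-1 \ge 3$, giving $|[\T]| = d \ge 4$. Otherwise $\T$ has $\ge 2$ internal nodes; the sub-graph induced on the internal nodes is connected (a shortest path between two internal nodes in a tree cannot pass through a leaf) and therefore is itself a tree on $\ge 2$ vertices, which forces it to contain $\ge 2$ leaves of that internal sub-tree. Each such internal-sub-tree leaf has only one internal neighbour, so it must have at least one genuine leaf of $\T$ attached. This yields $m \ge 2$ and $k_i \ge 1$, and hence $|[\T]| \ge 2^m \ge 4$. The chain $X_1 \myrule X_2 \myrule \cdots \myrule X_d$ has exactly two sibling groups of size~$1$, attaining $|[\T]| = 4$.

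For the maximum (Step~3), set $a_i := k_i+1 \ge 2$. Since the $m$ leaf-parents plus the $\sum k_i$ leaves are a subset of all $d$ nodes, $\sum_{i=1}^m a_i = m + \sum k_i \le d$, so it suffices to prove
\begin{equation*}
\max\Big\{\prod_{i=1}^m a_i \,:\, m\ge 1,\; a_i\in\mathbb{Z}_{\ge 2},\; \sum_i a_i = d\Big\} \;\le\; 3^{d/3},
\end{equation*}
with equality when $3 \mid d$. This is the classical integer-break bound: any $a_i \ge 5$ is strictly improved by splitting as $2 + (a_i-2)$, any $a_i = 4$ can be replaced by $2+2$ with equal product, and three copies of $2$ can be replaced by two copies of $3$ (since $8<9$), so the optimum uses only $2$'s and $3$'s with at most two $2$'s; splitting on $d \bmod 3$ gives an optimal product of $3^{d/3}$, $2\cdot 3^{(d-2)/3}$, or $4\cdot 3^{(d-4)/3}$, of which only the first matches the target bound. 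For achievability when $3 \mid d$, I would exhibit the caterpillar whose spine is a path of $d/3$ internal vertices, each carrying two pendant leaves: this gives $m=d/3$, $k_i=2$ for all $i$, and $|[\T]| = 3^{d/3}$.

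The main obstacle is Step~1, specifically the injectivity of $\cS \mapsto \T_\cS$: once the closed-form $|[\T]| = \prod(k_i+1)$ is in hand, the minimum is an easy case-split and the maximum reduces to a textbook integer optimization. Some care is also needed in Step~2 to handle the star separately, since it is the only tree with $m=1$ and does not fit the generic lower-bound argument via the internal sub-tree.
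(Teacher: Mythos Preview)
Your proposal is correct and follows essentially the same approach as the paper: partition the leaves by their parent, derive the product formula $|[\T]| = \prod_i(k_i+1)$, then optimize. You are more thorough than the paper in several places---you argue injectivity of $\cS\mapsto\T_\cS$ (the paper simply asserts the formula), you justify $m\ge 2$ for non-stars via the internal sub-tree, and you carry out the integer-break bound explicitly (the paper cites~\cite{Krause96}) with an explicit caterpillar for achievability.
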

\begin{proof}
	For a given $d \ge 4$, and $\T \in \cT_d$,  recall that  $\cL_{\T}$ is the set of leaf nodes of $\T$. Now, partition $\cL_{\T}$ into smaller subsets, such that all elements in the same subset share a common neighbor in the tree structure $\T$. Let $\kappa$ denote the number of such distinct subsets, and let $\cL_{\T}^{(i)}$ denote the $i$th subset. Then, $\cL_{\T}$ can be expressed as a disjoint union of $\cL_{\T}^{(i)}$ as
	\begin{equation*}
		\cL_{\T} = \meduplus_{i=1}^{\kappa} \, \cL_{\T}^{(i)} .
	\end{equation*}
	Now, if $\ell_i \triangleq |\cL_{\T}^{(i)}|$, then it follows from~\eqref{eq:Def_EquivalenceClassOfT} that the size of the equivalence class is given by
	\begin{equation}
		|\,[\T]\,| = \prod_{i=1}^{\kappa} (1 + \ell_i) . \label{eq:SizeEqClass}
	\end{equation} 
	When $\kappa = 1$, then the tree structure is a \emph{star}~\cite{Tan11IT}, and we have $|\,[\T]\,| = d \ge 4$. As $\ell_i \ge 1$, it follows from~\eqref{eq:SizeEqClass} that for $\kappa \ge 2$, we have $|\,[\T]\,| \ge 4$. In particular, for a chain tree structure, we have $\kappa = 2$ with $\ell_1 = \ell_2 = 1$, and hence $|\,[\T]\,| = 4$ for a chain. 
	
	We proceed to prove the upper bound in~\eqref{eq:MaxEqClassSize}. From~\eqref{eq:SizeEqClass}, it follows that the size $|\,[\T]\,|$ can be upper bounded by the solution of the following constrained maximization problem,
	\begin{equation*}
		|\,[\T]\,| \le \max_{m_1 + \cdots + m_{\kappa} \le d}\ \prod_{i=1}^{\kappa} m_i.
	\end{equation*}
	The expression on the right side is upper bounded by $3^{d/3}$, and this bound is tight when $d \!\mod 3 = 0$~\cite{Krause96}. Thus, we have
	\begin{equation*}
		|\,[\T]\,| \le 3^{d/3} ,
	\end{equation*}
	with equality if $d$ is a multiple of $3$. Note that when $d \mod 3 = 0$, and $\kappa = d/3$ with $\ell_1 = \cdots = \ell_{\kappa} = 2$, then we have $|\,[\T]\,| = 3^{d/3}$.
\end{proof}

\section{Overview of the Algorithm by~\citet{Katiyar20arxiv} for Declaring Star/Non-star} \label{app:KatiyarAlgoOverview}

\begin{algorithm}[tb]
	\caption{IS\_NON\_STAR}
	\label{alg:Kat_StarNonStar}
	\begin{algorithmic}
		\STATE Let the set of $4$ nodes be $\{X_1,X_2,X_3,X_4\}$
		
		\STATE {\bfseries Input:} Empirical correlations $\widehat{\rho}_{i,j}, \ 1 \le i < j \le 4$, \ Threshold $\alpha = \frac{1 + \rho_{\max}^2}{2}$
		
		\IF{$\frac{\widehat{\rho}_{1,3}\,\widehat{\rho}_{2,4}}{\widehat{\rho}_{1,2}\,\widehat{\rho}_{3,4}} < \alpha$ and $\frac{\widehat{\rho}_{1,3}\,\widehat{\rho}_{2,4}}{\widehat{\rho}_{1,4}\,\widehat{\rho}_{2,3}} > \alpha$}
		
		\STATE Declare Non-star where $\{X_1,X_2\}$ forms a pair
		
		\ELSIF{$\frac{\widehat{\rho}_{1,2}\,\widehat{\rho}_{3,4}}{\widehat{\rho}_{1,3}\,\widehat{\rho}_{2,4}} < \alpha$ and $\frac{\widehat{\rho}_{1,2}\,\widehat{\rho}_{3,4}}{\widehat{\rho}_{1,4}\,\widehat{\rho}_{2,3}} > \alpha$}
		
		\STATE Declare Non-star where $\{X_1,X_3\}$ forms a pair
		
		\ELSIF{$\frac{\widehat{\rho}_{1,2}\,\widehat{\rho}_{3,4}}{\widehat{\rho}_{1,4}\,\widehat{\rho}_{2,3}} < \alpha$ and $\frac{\widehat{\rho}_{1,2}\,\widehat{\rho}_{3,4}}{\widehat{\rho}_{1,3}\,\widehat{\rho}_{2,4}} > \alpha$}
		
		\STATE Declare Non-star where $\{X_1,X_4\}$ forms a pair
		
		\ELSE
		
		\STATE Declare Star
		
		\ENDIF
		
	\end{algorithmic}
\end{algorithm}

Let $\bry_1^n = \{\bry_1, \ldots , \bry_n\}$ denote $n$ independently sampled noisy observations, where the $k$th noisy sample is a $d$-dimensional column vector $\bry_k = (y_{k,1}, \ldots , y_{k,d})^T$. The estimator in~\cite{Katiyar20arxiv} proceeds by first calculating the pairwise empirical correlations,
\begin{equation}
	\widehat{\rho}_{i,j} \triangleq \frac{1}{n} \sum_{k=1}^n y_{k,i} \, y_{k,j}, \label{def:EmpiricalCorr}
\end{equation}
where $1 \le i < j \le d$. The procedure used in~\cite{Katiyar20arxiv} to declare a set of $4$ nodes as star or non-star, based on the knowledge of empirical correlations $\widehat{\rho}_{i,j}$, is described in Algorithm~\ref{alg:Kat_StarNonStar}. The intuition behind Algorithm~\ref{alg:Kat_StarNonStar} can be roughly outlined by considering an example where the $4$ nodes form a Markov-chain $X_1 \myrule X_2 \myrule X_3 \myrule X_4$. If the noisy correlations are denoted $\tilde{\rho}_{i,j} \triangleq \bbE[Y_i Y_j]$,\footnote{Note that $\tilde{\rho}_{i,j} = (1-2q_i)(1-2q_j)\, \rho_{i,j}$.} then we have $\frac{\tilde{\rho}_{1,3}\,\tilde{\rho}_{2,4}}{\tilde{\rho}_{1,2}\,\tilde{\rho}_{3,4}} \le \rho_{\max}^2$ and $\frac{\tilde{\rho}_{1,3}\,\tilde{\rho}_{2,4}}{\tilde{\rho}_{1,4}\,\tilde{\rho}_{2,3}} = 1$, and hence we would expect the empirical correlations to satisfy the conditions $\frac{\widehat{\rho}_{1,3}\,\widehat{\rho}_{2,4}}{\widehat{\rho}_{1,2}\,\widehat{\rho}_{3,4}} < \alpha$ and $\frac{\widehat{\rho}_{1,3}\,\widehat{\rho}_{2,4}}{\widehat{\rho}_{1,4}\,\widehat{\rho}_{2,3}} > \alpha$, where $\alpha = (1 + \rho_{\max}^2)/{2}$.

The partial tree structure  learning algorithm detailed in~\citet{Katiyar20arxiv} ensures that if Algorithm~\ref{alg:Kat_StarNonStar} correctly declares any set of $4$ nodes as star or non-star (with appropriate pairing of nodes), then the equivalence class $[\T]$ is successfully detected, i.e. $\Psi(\bry_1^n) \in [\T]$. Therefore, the performance of the estimator critically depends on the accuracy of Algorithm~\ref{alg:Kat_StarNonStar}.


\section{Proof of Theorem~\ref{thm:ImprovedAchievabilityBound}} \label{app:ImprovedAchievabilityBound}
The algorithm by~\citet{Katiyar20arxiv} correctly estimates the equivalence class $[\T]$ if any set of $4$ nodes within each others \emph{proximal sets} are declared correctly as \emph{star} or \emph{non-star}. The algorithm used for declaring $4$ nodes as star or non-star is described in Alg.~\ref{alg:Kat_StarNonStar} in App.~\ref{app:KatiyarAlgoOverview}.  Let $\bry_1^n = \{\bry_1, \ldots , \bry_n\}$ denote $n$ independently sampled noisy observations, where the $i$th noisy sample is a $d$-dimensional column vector $\bry_i = (y_{i,1}, \ldots , y_{i,d})^T$. The algorithm proceeds by first calculating the pairwise empirical correlations, $\widehat{\rho}_{j,k} \triangleq \frac{1}{n} \sum_{i=1}^n y_{i,j} \, y_{i,k}$, where $1 \le j < k \le d$.

Without loss of generality, consider the set of $4$ nodes $\{X_{1}, X_{2}, X_{3}, X_{4}\}$ with the corresponding noisy variables $\{Y_{1}, Y_{2}, Y_{3}, Y_{4}\}$, and let $\tilde{\rho}_{j,k} = \mathbb{E}[Y_j\, Y_k]$. Let these nodes form a non-star with $\{X_1, X_2\}$ as a pair. From the procedure in Alg.~\ref{alg:Kat_StarNonStar} in App.~\ref{app:KatiyarAlgoOverview}, it follows that a correct decision is made if 
\begin{equation}
	\frac{\widehat{\rho}_{1,3}\,\widehat{\rho}_{2,4}}{\widehat{\rho}_{1,2}\,\widehat{\rho}_{3,4}} \;<\; \alpha   \quad \mbox{and} \quad \frac{\widehat{\rho}_{1,3}\,\widehat{\rho}_{2,4}}{\widehat{\rho}_{1,4}\,\widehat{\rho}_{2,3}} \;>\; \alpha , \label{eq:KatSuffCondCorrectDecision_v1}
\end{equation}
where $\alpha = (1 + \rho_{\max}^2)/2$. Now, as $\{X_1, X_2\}$ forms a pair, we have
\begin{equation}
	\frac{\tilde{\rho}_{1,3}\,\tilde{\rho}_{2,4}}{\tilde{\rho}_{1,2}\,\tilde{\rho}_{3,4}} \;\le\; \rho_{\max}^2  \quad \mbox{and} \quad \frac{\tilde{\rho}_{1,3}\,\tilde{\rho}_{2,4}}{\tilde{\rho}_{1,4}\,\tilde{\rho}_{2,3}} \;=\; 1 . \label{eq:KatIdealStats_v1}
\end{equation}
Define $\Delta_{j,k} \triangleq \tilde{\rho}_{j,k} - \widehat{\rho}_{j,k}$, and $\Delta = \max_{1 \le j < k \le 4} | \Delta_{j,k} |$. We will show that the inequalities in~\eqref{eq:KatSuffCondCorrectDecision_v1} are satisfied if  
\begin{equation}
	\Delta < \tilde{\delta} \triangleq \frac{t_2(1-\alpha)}{20}, \label{eq:KatSuffCondCorrectDecision_v2}
\end{equation}
where  $t_2 = \min\left\{t_1, \frac{t_1 (1-2q_{\max})}{\rho_{\max}}\right\}$ and $t_1 = (1-2q_{\max})^2 \rho_{\min}^4$. Assume the inequality in \eqref{eq:KatSuffCondCorrectDecision_v2} to be true and define $\beta \triangleq 0.1 (1-\alpha)$. Then, for $1 \le j < k \le 4$, we have 
\begin{equation}
	\left| \frac{\Delta_{j,k}}{\widehat{\rho}_{i,j}} \right| \: \overset{(a)}{\le} \frac{\Delta}{0.5 t_2} \:
	\overset{(b)}{<} \frac{\tilde{\delta}}{0.5 t_2} \:
	\overset{(c)}{=} \beta , \label{eq:Kat_BoundRatio}
\end{equation}
where $(a)$ follows from the fact that proximal sets are chosen to satisfy $|\widehat{\rho}_{i,j}| \ge 0.5 t_2$, \ $(b)$~follows from \eqref{eq:KatSuffCondCorrectDecision_v2}, and $(c)$~follows from the definitions of $\tilde{\delta}$ and $\beta$. Now, we have
\begin{align}
	\rho_{\max}^2 &\overset{(d)}{\ge} 	\frac{\tilde{\rho}_{1,3}\,\tilde{\rho}_{2,4}}{\tilde{\rho}_{1,2}\,\tilde{\rho}_{3,4}} \nonumber \\
	&= \frac{\big(\widehat{\rho}_{1,3} + \Delta_{1,3}\big) \big(\widehat{\rho}_{2,4} + \Delta_{2,4}\big)}{\big(\widehat{\rho}_{1,2} + \Delta_{1,2}\big)\big(\widehat{\rho}_{3,4} + \Delta_{3,4}\big)} \nonumber\\
	&= \frac{\widehat{\rho}_{1,3}\,\widehat{\rho}_{2,4}}{\widehat{\rho}_{1,2}\,\widehat{\rho}_{3,4}}\, \frac{\big(1 + \Delta_{1,3}/\widehat{\rho}_{1,3}\big) \big(1 + \Delta_{2,4}/\widehat{\rho}_{2,4}\big)}{\big(1 + \Delta_{1,2}/\widehat{\rho}_{1,2}\big)\big(1 + \Delta_{3,4}/\widehat{\rho}_{3,4}\big)} \nonumber \\
	&\overset{(e)}{>} \frac{\widehat{\rho}_{1,3}\,\widehat{\rho}_{2,4}}{\widehat{\rho}_{1,2}\,\widehat{\rho}_{3,4}}\, \frac{(1-\beta)^2}{(1+\beta)^2} , \label{eq:Kat_Ineq1}
\end{align}
where $(d)$ follows from~\eqref{eq:KatIdealStats_v1}, and $(e)$ follows from~\eqref{eq:Kat_BoundRatio}. We can equivalently express~\eqref{eq:Kat_Ineq1} as
\begin{align}
	\frac{\widehat{\rho}_{1,3}\,\widehat{\rho}_{2,4}}{\widehat{\rho}_{1,2}\,\widehat{\rho}_{3,4}} &< \rho_{\max}^2 \frac{(1+\beta)^2}{(1-\beta)^2} \nonumber \\
	&< \rho_{\max}^2 \frac{1+2.1\beta}{1-2\beta} \nonumber \\
	&< \rho_{\max}^2 (1+2.1\beta) (1+3\beta) \nonumber \\
	&< \rho_{\max}^2 (1+6\beta), \label{eq:Kat_Ineq2}
\end{align} 
where we have applied the fact that $\beta < 0.1$, and hence $\beta^2 < 0.1 \beta$. As $\beta = 0.1 (1-\alpha) = (1-\rho_{\max}^2)/20$, it follows from~\eqref{eq:Kat_Ineq2} that 
\begin{align}
	\frac{\widehat{\rho}_{1,3}\,\widehat{\rho}_{2,4}}{\widehat{\rho}_{1,2}\,\widehat{\rho}_{3,4}} \ &< \rho_{\max}^2 + 0.3\rho_{\max}^2(1-\rho_{\max}^2) \nonumber \\
	&<  \rho_{\max}^2 + 0.3(1-\rho_{\max}^2) \nonumber \\
	&= 0.3 + 0.7 \rho_{\max}^2 \nonumber \\
	&< 0.5 + 0.5 \rho_{\max}^2 \nonumber \\
	&= \alpha , \nonumber
\end{align}
and this proves the first inequality in~\eqref{eq:KatSuffCondCorrectDecision_v1}. To prove the second  inequality in~\eqref{eq:KatSuffCondCorrectDecision_v1}, we note that
\begin{align}
	1 &= \frac{\tilde{\rho}_{1,3}\,\tilde{\rho}_{2,4}}{\tilde{\rho}_{1,4}\,\tilde{\rho}_{2,3}} \nonumber \\
	&= \frac{\widehat{\rho}_{1,3}\,\widehat{\rho}_{2,4}}{\widehat{\rho}_{1,4}\,\widehat{\rho}_{2,3}}\, \frac{\big(1 + \Delta_{1,3}/\widehat{\rho}_{1,3}\big) \big(1 + \Delta_{2,4}/\widehat{\rho}_{2,4}\big)}{\big(1 + \Delta_{1,4}/\widehat{\rho}_{1,4}\big)\big(1 + \Delta_{2,3}/\widehat{\rho}_{2,3}\big)} \nonumber \\
	&< \frac{\widehat{\rho}_{1,3}\,\widehat{\rho}_{2,4}}{\widehat{\rho}_{1,4}\,\widehat{\rho}_{2,3}}\, \frac{(1+\beta)^2}{(1-\beta)^2} , \label{eq:Kat_Ineq3}
\end{align}
where the last inequality follows from~\eqref{eq:Kat_BoundRatio}. We can equivalently express~\eqref{eq:Kat_Ineq3} as 
\begin{align}
	\frac{\widehat{\rho}_{1,3}\,\widehat{\rho}_{2,4}}{\widehat{\rho}_{1,4}\,\widehat{\rho}_{2,3}}\, &> \frac{(1-\beta)^2}{(1+\beta)^2} 
	\nonumber \\
	&> \frac{1-2\beta}{1+2.1\beta} \nonumber \\
	&> (1-2\beta)(1-2.1\beta) \nonumber \\
	&> 1 - 5 \beta \nonumber \\
	&= 1 - 0.25(1- \rho_{\max}^2) \nonumber \\
	&= 0.75 + 0.25 \rho_{\max}^2  \nonumber \\
	&> 0.5 + 0.5 \rho_{\max}^2 \nonumber \\
	&= \alpha , \nonumber
\end{align}
thereby proving the second inequality in~\eqref{eq:KatSuffCondCorrectDecision_v1}. Thus, we have shown that if  $\{X_{1}, X_{2}, X_{3}, X_{4}\}$ form a non-star with pair $\{X_1,X_2\}$, then the condition in~\eqref{eq:KatSuffCondCorrectDecision_v2} is sufficient for the algorithm to make a correct decision. In a similar fashion, it can be shown that \eqref{eq:KatSuffCondCorrectDecision_v2} provides a sufficient condition for making the correct decision even when $\{X_{1}, X_{2}, X_{3}, X_{4}\}$ form a star or a non-star with a different pairing.

Define the event $\cB_{j,k}$ for $1 \le j < k \le d$ as
\begin{equation}
	\cB_{j,k} \triangleq \left\{ \left|\widehat{\rho}_{j,k} - \tilde{\rho}_{j,k} \right| \ge \tilde{\delta} \right\} . \label{eq:Kat_ErrorEvent_jk}
\end{equation}
Then, as \eqref{eq:KatSuffCondCorrectDecision_v2} is a sufficient condition for correct declaration as star/non-star for any set $4$ nodes that are within the proximal sets of each other, it follows that for any $P \in \cP_{\T}(\rho_{\min}, \rho_{\max})$, the error probability $\bbP_P\big(\Psi(\brY_1^n) \notin [\T] \big)$ can be upper bounded as
\begin{equation}
	\bbP_P\big(\Psi(\brY_1^n) \notin [\T] \big) \,\le\, \Pr\big( \medcup_{1 \le j < k \le d} \cB_{j,k} \big) . \label{eq:Kat_OverallErrorProb_v1}
\end{equation}
From the definition of the event $\cB_{j,k}$ in~\eqref{eq:Kat_ErrorEvent_jk}, it follows using Hoeffding's inequality that
\begin{equation}
	\Pr\big(\cB_{j,k}\big) \, \le \, 2 \exp\left(- \frac{ n\, {\tilde{\delta}}^{2}}{2} \right) . \label{eq:Kat_ErrorProb_jk}
\end{equation}
Now, using \eqref{eq:Kat_OverallErrorProb_v1}, \eqref{eq:Kat_ErrorProb_jk}, and applying the union bound over $\binom{d}{2}$ pairs of nodes, we obtain
\begin{equation}
	\bbP_P\big(\Psi(\brY_1^n) \notin [\T] \big) \,\le\, d^2 \exp\left( -\frac{ n\, {\tilde{\delta}}^{2}}{2} \right) . \label{eq:Kat_OverallErrorProb_v2} 
\end{equation}
Therefore, for the error probability to be upper bounded by $\tau$, it is sufficient for the number of samples $n$ to satisfy
\begin{equation}
	n \ge \frac{2}{\tilde{\delta}^2} \log \left(\frac{d^2}{\tau} \right). \label{eq:Kat_ImprovedSuffSampleBound}	
\end{equation}
This completes the proof.
\qed

We note that this is much improved over \citet[Theorem 3]{Katiyar20arxiv} as the right-hand-side is $O(1/\tilde{\delta}^2)$ instead of $O(1/\delta^2)$ (see Theorem~\ref{thm:Katiyar_AchievabilityResult}).  Recall that $\tilde{\delta}=\Theta(t_2)$ while $\delta=\Theta(t_2^3)$  and     $t_2 =\min\big\{t_1, \frac{t_1 (1-2q_{\max})}{\rho_{\max}}\big\}$ and $t_1 =(1-2q_{\max})^2 \rho_{\min}^4$.

\section{Proof of Proposition~\ref{prop:ImprovedAchievabilityBound_v2}} \label{app:ImprovedAchievabilityBound_v2}
The proof is similar to the proof of Theorem~\ref{thm:ImprovedAchievabilityBound} in App.~\ref{app:ImprovedAchievabilityBound}, and we focus here only on the important steps. Consider the set of $4$ nodes $\{X_{1}, X_{2}, X_{3}, X_{4}\}$ that forms a non-star with $\{X_1, X_2\}$ as a pair. From the procedure in Alg.~\ref{alg:Mod_StarNonStar}, it follows that a correct decision is made if 
\begin{equation}
	\frac{\sqrt{|\widehat{\rho}_{1,3}\,\widehat{\rho}_{2,4}\, \widehat{\rho}_{1,4}\,\widehat{\rho}_{2,3}|}}{|\widehat{\rho}_{1,2}\,\widehat{\rho}_{3,4}|} \,<\, \alpha, \;\;\; \frac{\widehat{\rho}_{1,3}\,\widehat{\rho}_{2,4}}{\widehat{\rho}_{1,2}\,\widehat{\rho}_{3,4}} \,<\, 1, \;\mbox{and}\;\; \frac{\widehat{\rho}_{1,4}\,\widehat{\rho}_{2,3}}{\widehat{\rho}_{1,2}\,\widehat{\rho}_{3,4}} \,<\, 1 , \label{eq:ModSuffCondCorrectDecision_v1}
\end{equation}
where $\alpha = (1 + \rho_{\max}^2)/2$. Now, as $\{X_1, X_2\}$ forms a pair, we have
\begin{equation}
	\frac{\sqrt{\tilde{\rho}_{1,3}\,\tilde{\rho}_{2,4}\, \tilde{\rho}_{1,4}\,\tilde{\rho}_{2,3}}}{\tilde{\rho}_{1,2}\,\tilde{\rho}_{3,4}} \,\le\, \rho_{\max}^2, \;\;\; \frac{\tilde{\rho}_{1,3}\,\tilde{\rho}_{2,4}}{\tilde{\rho}_{1,2}\,\tilde{\rho}_{3,4}} \,\le\, \rho_{\max}^2, \;\;\mbox{and}\;\; \frac{\widehat{\rho}_{1,4}\,\widehat{\rho}_{2,3}}{\widehat{\rho}_{1,2}\,\widehat{\rho}_{3,4}} \,\le\, \rho_{\max}^2 . \label{eq:ModIdealStats_v1}
\end{equation}
Define $\Delta_{j,k} \triangleq \tilde{\rho}_{j,k} - \widehat{\rho}_{j,k}$, and $\Delta = \max_{1 \le j < k \le 4} | \Delta_{j,k} |$. We will show that the inequalities in~\eqref{eq:ModSuffCondCorrectDecision_v1} are satisfied if  
\begin{equation}
	\Delta < \tilde{\delta} \triangleq \frac{t_2(1-\alpha)}{20}, \label{eq:ModSuffCondCorrectDecision_v2}
\end{equation}
where  $t_2 = \min\left\{t_1, \frac{t_1 (1-2q_{\max})}{\rho_{\max}}\right\}$ and $t_1 = (1-2q_{\max})^2 \rho_{\min}^4$. Assume the inequality in \eqref{eq:ModSuffCondCorrectDecision_v2} to be true and define $\beta \triangleq 0.1 (1-\alpha)$. Then, for $1 \le j < k \le 4$, we have 
\begin{equation}
	\left| \frac{\Delta_{j,k}}{\widehat{\rho}_{i,j}} \right| \: \overset{(a)}{\le} \frac{\Delta}{0.5 t_2} \:
	\overset{(b)}{<} \frac{\tilde{\delta}}{0.5 t_2} \:
	\overset{(c)}{=} \beta , \label{eq:Mod_BoundRatio}
\end{equation}
where $(a)$ follows from the fact that proximal sets are chosen to satisfy $|\widehat{\rho}_{i,j}| \ge 0.5 t_2$, \ $(b)$~follows from \eqref{eq:ModSuffCondCorrectDecision_v2}, and $(c)$~follows from the definitions of $\tilde{\delta}$ and $\beta$. Now, we have
\begin{align}
	\rho_{\max}^2 &\overset{(d)}{\ge} 		\frac{\sqrt{\tilde{\rho}_{1,3}\,\tilde{\rho}_{2,4}\, \tilde{\rho}_{1,4}\,\tilde{\rho}_{2,3}}}{\tilde{\rho}_{1,2}\,\tilde{\rho}_{3,4}} \nonumber \\
	&= \frac{\sqrt{\big(\widehat{\rho}_{1,3} + \Delta_{1,3}\big)\big(\widehat{\rho}_{2,4}+\Delta_{2,4}\big) \big(\widehat{\rho}_{1,4}+\Delta_{1,4}\big)\big(\widehat{\rho}_{2,3}+\Delta_{2,3}\big)}}{\big(\widehat{\rho}_{1,2}+\Delta_{1,2}\big)\big(\widehat{\rho}_{3,4}+\Delta_{3,4}\big)} \nonumber \\
	&= \frac{\sqrt{\widehat{\rho}_{1,3}\, \widehat{\rho}_{2,4}\, \widehat{\rho}_{1,4}\, \widehat{\rho}_{2,3} }}{\widehat{\rho}_{1,2}\, \widehat{\rho}_{3,4}} 
	\, \frac{\sqrt{\big(1 + \Delta_{1,3}/\widehat{\rho}_{1,3}\big)\big(1+\Delta_{2,4}/\widehat{\rho}_{2,4}\big) \big(1+\Delta_{1,4}/\widehat{\rho}_{1,4}\big)\big(1+\Delta_{2,3}/\widehat{\rho}_{2,3}\big)}}{\big(1+\Delta_{1,2}/\widehat{\rho}_{1,2}\big)\big(1+\Delta_{3,4}/\widehat{\rho}_{3,4}\big)} \nonumber \\
	&\overset{(e)}{>} \frac{\sqrt{\widehat{\rho}_{1,3}\, \widehat{\rho}_{2,4}\, \widehat{\rho}_{1,4}\, \widehat{\rho}_{2,3} }}{\widehat{\rho}_{1,2}\, \widehat{\rho}_{3,4}} 
	\, \frac{(1-\beta)^2}{(1+\beta)^2}, 	\label{eq:Mod_Ineq1}
\end{align}
where $(d)$ follows from~\eqref{eq:ModIdealStats_v1}, and $(e)$ follows from~\eqref{eq:Mod_BoundRatio}. Now, we can equivalently express~\eqref{eq:Mod_Ineq1} as 
\begin{align}
	\frac{\sqrt{\widehat{\rho}_{1,3}\, \widehat{\rho}_{2,4}\, \widehat{\rho}_{1,4}\, \widehat{\rho}_{2,3} }}{\widehat{\rho}_{1,2}\, \widehat{\rho}_{3,4}} &< \rho_{\max}^2 \frac{(1+\beta)^2}{(1-\beta)^2} \nonumber \\
	&\overset{(f)}{<} \alpha , \label{eq:Mod_Ineq2}
\end{align}
where $(f)$ follows by employing relations similar to those used around~\eqref{eq:Kat_Ineq2}, and thereby establishes the first inequality in~\eqref{eq:ModSuffCondCorrectDecision_v1}. The other two inequalities in \eqref{eq:ModSuffCondCorrectDecision_v1} can be readily proved in a similar way. Further, this approach can be repeated to prove that the condition in~\eqref{eq:ModSuffCondCorrectDecision_v2} is sufficient for making a correct decision even when the nodes $\{X_{1}, X_{2}, X_{3}, X_{4}\}$ form a star or a non-star with a different pairing. Finally, the steps in~\eqref{eq:Kat_ErrorEvent_jk}--\eqref{eq:Kat_ImprovedSuffSampleBound} can be repeated to complete the proof.
\qed

\section{Proof of Proposition~\ref{prop:KA_ErrExp}} \label{app:KA_ErrExp}

\begin{itemize}
	\item We first prove the claim given by Proposition~\ref{prop:KA_ErrExp}(a) where $P$ corresponds to a Markov chain. We observe from the chain structure that the $4$ nodes form a non-star with $\{X_1,X_2\}$ forming a pair. It follows from the procedure in Algorithm~\ref{alg:Kat_StarNonStar} (in App.~\ref{app:KatiyarAlgoOverview}) that the two events that lead to error using $\Psi_{\KA}$ are $\sE_1 = \Big\{\frac{\widehat{\rho}_{1,3}\,\widehat{\rho}_{2,4}}{\widehat{\rho}_{1,2}\,\widehat{\rho}_{3,4}} \ge \alpha\Big\}$, and $\sE_2 = \Big\{\frac{\widehat{\rho}_{1,3}\,\widehat{\rho}_{2,4}}{\widehat{\rho}_{1,4}\,\widehat{\rho}_{2,3}} \le \alpha\Big\}$. The exponents corresponding to these error events are defined as $ e_i \triangleq \lim_{n \to \infty} -\frac{1}{n} \log \Pr\big( \sE_i\big), \ i  \in \{1,2\}$. Using Sanov's theorem~\citep[Chap.~11]{CoverBook06}, it follows that these exponents are given by~\eqref{eq:KA_e1} and \eqref{eq:KA_e2}, respectively. Now, we have $\bbP_{\tilde{P}}\big(\Psi(\brY_1^n) \notin [\T] \big) = \bbP_{\tilde{P}}\big(\sE_1 \cup \sE_2\big)$, and hence it follows from the definition in~\eqref{eq:Def_ErrorExp} that $E(\Psi_{\KA},\tilde{P}) = \min\{e_1,e_2\}$.
	
	\item We now prove the claim given by Proposition~\ref{prop:KA_ErrExp}(b) where $P$ corresponds to a star structured tree. When the $4$ nodes form a star structure, then an error is made if only if the procedure in Algorithm~\ref{alg:Kat_StarNonStar} (in App.~\ref{app:KatiyarAlgoOverview}) make any one of the following incorrect declarations:
	\begin{itemize}
		\item Non-star with pair $\{X_1,X_2\}$.
		\item Non-star with pair $\{X_1,X_3\}$.
		\item Non-star with pair $\{X_1,X_4\}$.
	\end{itemize}
	By symmetry of the underlying star structure, the probability of each of the three erroneous declarations is same, and hence it is sufficient to analyze the exponent of the probability that a non-star  with pair $\{X_1,X_2\}$ is incorrectly declared, in order to characterize $E(\Psi_{\KA},\tilde{P})$. Now, it follows from Alg.~\ref{alg:Kat_StarNonStar} that a non-star  with pair $\{X_1,X_2\}$ is declared if $\frac{\widehat{\rho}_{1,3}\,\widehat{\rho}_{2,4}}{\widehat{\rho}_{1,2}\,\widehat{\rho}_{3,4}} < \alpha$, and $\frac{\widehat{\rho}_{1,3}\,\widehat{\rho}_{2,4}}{\widehat{\rho}_{1,4}\,\widehat{\rho}_{2,3}} > \alpha$. Finally, using Sanov's theorem~\citep[Ch.~11]{CoverBook06}, it follows that the exponent of the probability that these conditions are satisfied are given by~\eqref{eq:KA_ErrExpStar}.
\end{itemize}
\qed

\section{Proof of Proposition~\ref{prop:SGA_ErrExp}} \label{app:SGA_ErrExp}

\begin{itemize}
	\item We first prove the claim given by Proposition~\ref{prop:SGA_ErrExp}(a) where $P$ corresponds to a Markov chain. We observe from the chain structure that the $4$ nodes form a non-star with $\{X_1,X_2\}$ forming a pair. It follows from the procedure in Algorithm~\ref{alg:Mod_StarNonStar} that a correct decision is made if (i)~$v_2 < \alpha$, (ii)~$v_2 < v_3$, and (iii)~$v_2 < v_4$. Thus, an incorrect decision is made if any of the following events are true.
\begin{itemize}
	\item Event $\sE_3 \triangleq \Big\{\frac{\sqrt{|\widehat{\rho}_{1,3}\,\widehat{\rho}_{2,4}\, \widehat{\rho}_{1,4}\,\widehat{\rho}_{2,3}|}}{|\widehat{\rho}_{1,2}\,\widehat{\rho}_{3,4}|} \ge \alpha\Big\}$, implying $v_2 \ge \alpha$.
	\item Event $\sE_4 \triangleq \left\{|\widehat{\rho}_{1,3}\,\widehat{\rho}_{2,4}| \ge |\widehat{\rho}_{1,2}\,\widehat{\rho}_{3,4}|\right\}$, implying $v_2 \ge v_3$.\footnote{Note that we have taken a slightly pessimistic approach where an error is declared in case of a tie $v_2 = v_3$. In practice, the ties can be broken by a coin toss, but this does not affect the error exponent.}
	\item Event $\sE_5 \triangleq \left\{|\widehat{\rho}_{1,4}\,\widehat{\rho}_{2,3}| \ge |\widehat{\rho}_{1,2}\,\widehat{\rho}_{3,4}|\right\}$, implying $v_2 \ge v_4$.	
\end{itemize}

The exponents corresponding to these error events are defined as $ e_i \triangleq \lim_{n \to \infty} -\frac{1}{n} \log \Pr\big( \sE_i\big), \ i = \{3,4,5\}$. Using Sanov's theorem~\citep[Chap.~11]{CoverBook06}, it follows that these exponents are given by~\eqref{eq:SGA_e3}, \eqref{eq:SGA_e4} and \eqref{eq:SGA_e5}, respectively. Now, we have $\bbP_{\tilde{P}}\big(\Psi(\brY_1^n) \notin [\T] \big) = \bbP_{\tilde{P}}\big(\sE_3 \cup \sE_4 \cup \sE_5 \big)$, and hence it follows from the definition in~\eqref{eq:Def_ErrorExp} that $E(\Psi_{\SGA},\tilde{P}) = \min\{e_3,e_4,e_5\}$.

	\item We now prove the claim given by Proposition~\ref{prop:SGA_ErrExp}(b) where $P$ corresponds to a star structured tree. When the $4$ nodes form a star structure, then an error is made if only if the procedure in Algorithm~\ref{alg:Mod_StarNonStar} make any one of the following incorrect declarations:
\begin{itemize}
	\item Non-star with pair $\{X_1,X_2\}$.
	\item Non-star with pair $\{X_1,X_3\}$.
	\item Non-star with pair $\{X_1,X_4\}$.
\end{itemize}
By symmetry of the underlying star structure, the probability of each of the three erroneous declarations is same, and hence it follows from Algorithm~\ref{alg:Mod_StarNonStar} that $E(\Psi_{\SGA},\tilde{P})$ is equal to the exponent of the probability that $v_2 = \frac{\sqrt{|\widehat{\rho}_{1,3}\,\widehat{\rho}_{2,4}\, \widehat{\rho}_{1,4}\,\widehat{\rho}_{2,3}|}}{|\widehat{\rho}_{1,2}\,\widehat{\rho}_{3,4}|} < \alpha$. Finally, using Sanov's theorem~\citep[Chap.~11]{CoverBook06}, it follows that the exponent of the probability that this condition is satisfied is given by~\eqref{eq:SGA_ErrExpStar}.

\end{itemize}
\qed

\section{Simulation results for $4$-node homogeneous trees} \label{app:4nodeHomogeneousTrees}

Sec.~\ref{sec:ErrExpNumerical} presented numerical results comparing the error exponents using the $\Psi_{\SGA}$ and $\Psi_{\KA}$ algorithms, derived using the \emph{large deviation theory}~\citep[Sec.~11.4]{CoverBook06},  for $4$-node homogeneous trees. In this appendix, we present Monte Carlo simulation results for $4$-node homogeneous trees, that corroborate the results in Sec.~\ref{sec:ErrExpNumerical}. For any given tree structure, and any value of $n$ (number of samples), the error probability using $\Psi_{\SGA}$ or $\Psi_{\KA}$ is computed based on $10^5$ iterations (or runs) in the simulation setup.

\begin{figure}[t]
	\centering
	\includegraphics[width=0.50\textwidth, angle=0]{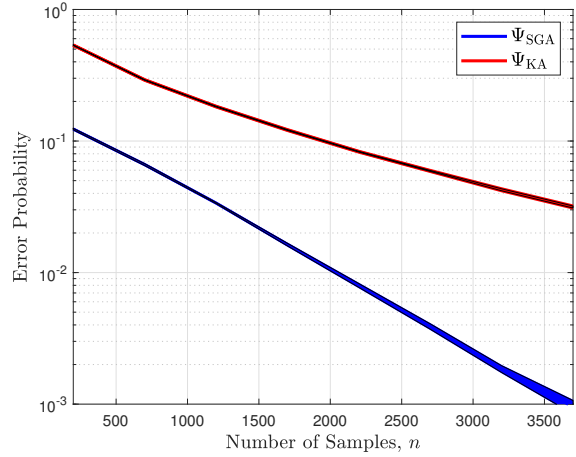}
	\caption{Comparison of error probabilities for $4$-node noiseless chains when all the edge correlation are equal to $\rho = 0.4$.}
	\label{Fig:ErrProb_4Chain_Noiseless_tPoint3}
\end{figure}

Fig.~\ref{Fig:ErrProb_4Chain_Noiseless_tPoint3} compares the error probabilities for $4$ node noiseless chains, using $\Psi_{\SGA}$ and $\Psi_{\KA}$, when all the edge correlation are equal to $\rho = 0.4$. Here, we consider all $12$ distinct chain structures using $4$ nodes, and for any given $n$ we compute the empirical mean, denoted $\mu_n$, and the empirical standard deviation, denoted $\sigma_n$, of the error probabilities using the $12$ distinct chain structures. The shaded area in blue (resp.\ red) in Fig.~\ref{Fig:ErrProb_4Chain_Noiseless_tPoint3} corresponds to the region between $\mu_n(\Psi_{\SGA}) + \sigma_n(\Psi_{\SGA})$ and $\mu_n(\Psi_{\SGA}) - \sigma_n(\Psi_{\SGA})$ (resp.\ between $\mu_n(\Psi_{\KA}) + \sigma_n(\Psi_{\KA})$ and $\mu_n(\Psi_{\KA}) - \sigma_n(\Psi_{\KA})$). The negative slope of the error probability curve is indicative of the error exponent, and Fig.~\ref{Fig:ErrProb_4Chain_Noiseless_tPoint3} demonstrates that the error exponent using $\Psi_{\SGA}$ is much higher than that using $\Psi_{\KA}$ when $\rho = 0.4$, as shown by the corresponding error exponent values in Fig.~\ref{Fig:ErrExp_Chain}(a).

\begin{figure}[t]
	\centering
	\includegraphics[width=0.50\textwidth, angle=0]{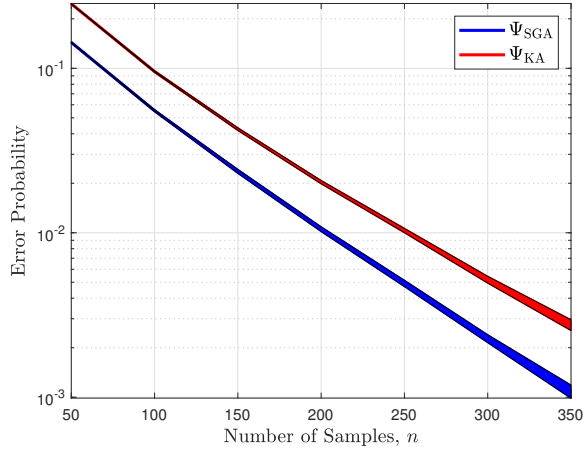}
	\caption{Comparison of error probabilities for $4$-node noiseless chains when all the edge correlation are equal to $\rho = 0.8$.}
	\label{Fig:ErrProb_4Chain_Noiseless_tPoint1}
\end{figure}

Fig.~\ref{Fig:ErrProb_4Chain_Noiseless_tPoint1} compares the error probabilities for $4$ node noiseless chains, using $\Psi_{\SGA}$ and $\Psi_{\KA}$, when all the edge correlation are equal to $\rho = 0.8$. Again, the shaded area in blue (resp.\ red) in Fig.~\ref{Fig:ErrProb_4Chain_Noiseless_tPoint1} corresponds to the region between $\mu_n(\Psi_{\SGA}) + \sigma_n(\Psi_{\SGA})$ and $\mu_n(\Psi_{\SGA}) - \sigma_n(\Psi_{\SGA})$ (resp.\ between $\mu_n(\Psi_{\KA}) + \sigma_n(\Psi_{\KA})$ and $\mu_n(\Psi_{\KA}) - \sigma_n(\Psi_{\KA})$), where $\mu_n$ denotes the empirical mean and $\sigma_n$ denotes the empirical standard deviation, for the error probabilities obtained using the $12$ distinct chain structures. Fig.~\ref{Fig:ErrProb_4Chain_Noiseless_tPoint1} shows that the slope of the error probability curves for $\Psi_{\SGA}$ and $\Psi_{\KA}$ are roughly equal when $\rho = 0.8$, as indicated by the corresponding error exponent values in Fig.~\ref{Fig:ErrExp_Chain}(a).

\begin{figure}[t]
	\centering
	\includegraphics[width=0.50\textwidth, angle=0]{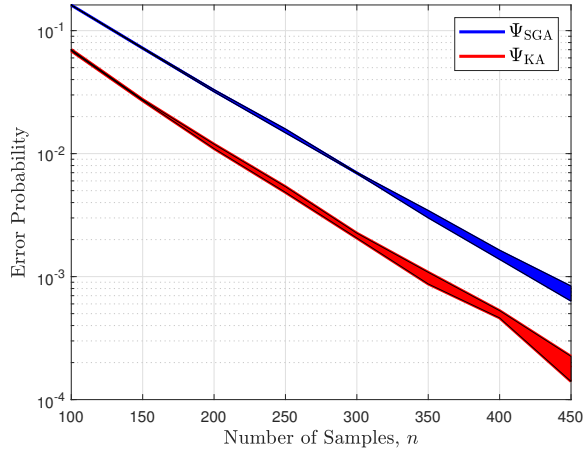}
	\caption{Comparison of error probabilities for $4$-node star structured trees when all the edge correlation are equal to $\rho = 0.6$ and $q_{\max}=0$.}
	\label{Fig:ErrProb_4Star_Noiseless_tPoint2}
\end{figure}

Fig.~\ref{Fig:ErrProb_4Star_Noiseless_tPoint2} compares the error probabilities for $4$ node star structured trees, using $\Psi_{\SGA}$ and $\Psi_{\KA}$. The shaded area in blue (resp.\   red) in Fig.~\ref{Fig:ErrProb_4Star_Noiseless_tPoint2} corresponds to the region between $\mu_n(\Psi_{\SGA}) + \sigma_n(\Psi_{\SGA})$ and $\mu_n(\Psi_{\SGA}) - \sigma_n(\Psi_{\SGA})$ (resp. \  between $\mu_n(\Psi_{\KA}) + \sigma_n(\Psi_{\KA})$ and $\mu_n(\Psi_{\KA}) - \sigma_n(\Psi_{\KA})$), where $\mu_n$ denotes the empirical mean and $\sigma_n$ denotes the empirical standard deviation, for the error probabilities obtained using the $4$ distinct star structured trees with $4$ nodes. Fig.~\ref{Fig:ErrProb_4Star_Noiseless_tPoint2} shows that the slopes of the error probability curves for $\Psi_{\SGA}$ and $\Psi_{\KA}$ are not very different when $\rho = 0.6$, as suggested by the error exponent values in Fig.~\ref{Fig:ErrExp_Star}(a).

Overall, the simulation results in Fig.~\ref{Fig:ErrProb_4Chain_Noiseless_tPoint3} and Fig.~\ref{Fig:ErrProb_4Chain_Noiseless_tPoint1} show that the error probability values do not deviate much due to the specific choice of a $4$-node chain structure, while Fig.~\ref{Fig:ErrProb_4Star_Noiseless_tPoint2} indicates a similar behavior for $4$-node star structured trees.

\section{Different $12$-node tree structures considered in Section~\ref{sec:NumericalResults}} \label{app:3TreeStructures}

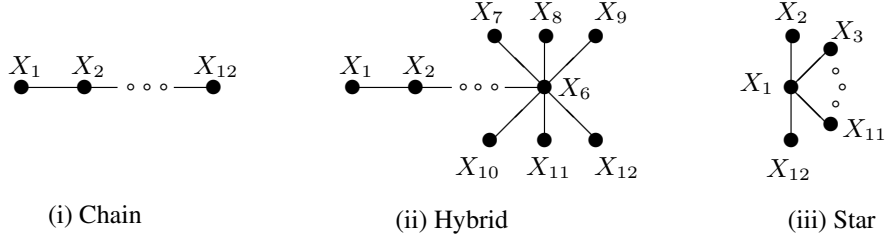
\begin{figure*}[t]
	{	\centering
		\scalebox{1.1}{
			\setlength{\unitlength}{.2mm} 	
			\begin{picture}(500,141)(20,-36)				
				\put(20,52){\circle*{8}}
				\put(23,52){\line(1,0){36}}
				\put(12,60){{\small $X_1$}}
				\put(58,52){\circle*{8}}
				\put(60,52){\line(1,0){18}}
				\put(50,60){{\small $X_2$}}
				\put(86,52){\circle{4}}
				\put(96,52){\circle{4}}
				\put(106,52){\circle{4}}
				\put(112,52){\line(1,0){21}}
				\put(136,52){\circle*{8}}
				\put(124,60){{\small $X_{12}$}}
				\put(36,-30){{\small (i) Chain}}

				\put(220,52){\circle*{8}}
				\put(223,52){\line(1,0){36}}
				\put(212,60){{\small $X_1$}}
				\put(258,52){\circle*{8}}
				\put(260,52){\line(1,0){20}}
				\put(250,60){{\small $X_2$}}
				\put(287,52){\circle{4}}
				\put(296,52){\circle{4}}
				\put(306,52){\circle{4}}
				\put(312,52){\line(1,0){21}}
				\put(336,52){\circle*{8}}
				\put(344,47){{\small $X_{6}$}}
				\put(336,52){\line(-1,1){30}}
				\put(306,83){\circle*{8}}				
				\put(291,93){{\small $X_{7}$}}
				\put(336,52){\line(0,1){30}}
				\put(337,83){\circle*{8}}				
				\put(327,93){{\small $X_{8}$}}
				\put(336,52){\line(1,1){31}}
				\put(367,83){\circle*{8}}				
				\put(366,93){{\small $X_{9}$}}
				\put(336,52){\line(-1,-1){31}}
				\put(303,20){\circle*{8}}				
				\put(282,-1){{\small $X_{10}$}}
				\put(336,52){\line(0,-1){31}}
				\put(336,20){\circle*{8}}				
				\put(325,-1){{\small $X_{11}$}}
				\put(336,52){\line(1,-1){31}}
				\put(367,20){\circle*{8}}				
				\put(366,-1){{\small $X_{12}$}}
				\put(246,-33){{\small (ii) Hybrid}}

				\put(485,52){\circle*{8}}				
				\put(454,50){{\small $X_1$}}
				\put(485,52){\line(0,1){31}}
				\put(486,83){\circle*{8}}				
				\put(475,93){{\small $X_2$}}
				\put(485,52){\line(1,1){22}}
				\put(509,75){\circle*{8}}				
				\put(509,81){{\small $X_3$}}
				\put(512,62){\circle{4}}				
				\put(516,52){\circle{4}}				
				\put(512,42){\circle{4}}				
				\put(485,52){\line(1,-1){22}}
				\put(509,30){\circle*{8}}				
				\put(516,22){{\small $X_{11}$}}
				\put(485,52){\line(0,-1){31}}
				\put(485,20){\circle*{8}}				
				\put(470,-2){{\small $X_{12}$}}
				\put(483,-33){{\small (iii) Star}}
			\end{picture}
		}
		\caption{Three different $12$-node tree structures}
		\label{Fig:3TreeStructures}
	}
\end{figure*}
Fig.~\ref{Fig:3TreeStructures} presents $12$-node trees with three different tree structures: (i) Chain, (ii) Hybrid, (iii) Star. The chain and the star structures are known to be extremal tree structures in terms of the error probability~\cite{Tan10TSP,TandonTZ20_JSAIT}, while the hybrid tree structure is a combination of the chain and star structures.

\section{Extension of \citet{Katiyar20arxiv} and SGA to Gaussian trees with numerical results} \label{app:gauss}
 We compare the error probabilities of $\Psi_{\SGA}$ and $\Psi_{\KA}$ in recovering the trees from tree-structured Gaussian graphical models with $d=10$ nodes. The tree structures used for comparison are similar to those in the Ising model experiments in Sec.~\ref{sec:NumericalResults}. Each observation of $X_i \in \mathbb{R}$ is corrupted by independent but non-identically distributed Gaussian noise such that the observed variable $Y_i= X_i + N_i$, where the noise $N_i\sim \mathcal{N}(0,\sigma^2_i)$ for some $\sigma_i >0$.
\subsection{Experiment setup}
Generating samples for Gaussian models begins with choosing a tree structure $\T_P  = (\mathcal{V}, \mathcal{E}_P)$ where the number of nodes $d=10$. We then generate the inverse covariance matrix $(\mathbf{\Sigma}^*)^{-1}$, by setting  its $(i,j)^{\text{th}}$ entry as
\begin{equation} \label{eq:Def_w}
  [(\mathbf{\Sigma}^*)^{-1}]_{i,j} =
  \begin{cases}
    w, & \text{if $(i,j)\in \mathcal{E}_P$;} \\
    1, & \text{if $i = j$;}\\
    0 & \text{otherwise} \\
  \end{cases}
\end{equation}
for some parameter $w\in\mathbb{R}$. This matrix is then inverted to get the covariance matrix $\mathbf{\Sigma}^*$ of the distribution $P$.  The correlation matrix $\textbf{K}^*$ is calculated from $\mathbf{\Sigma}^*$ using the formula $\textbf{K}^*= (\text{diag}(\mathbf{\Sigma}^*))^{-\frac{1}{2}}\mathbf{\Sigma}^*(\text{diag}(\mathbf{\Sigma}^*))^{-\frac{1}{2}}$. This is used to compute the minimum and maximum correlation coefficients $\rho_{\min}$ and $\rho_{\max}$. The parameter $w\in \mathbb{R}$ is   chosen so as to ensure that $\rho_{\max} \approx 0.8$ from the resulting $\textbf{K}^*$. 
Additionally, for the noiseless case, the diagonal matrix $\textbf{D}^*$ is taken to be the zero matrix, and for the noisy case $[\textbf{D}^*]_{i,i}=2$ for $i \in \{1,3,5,7,9\}$; thus Gaussian noise of variance $2$ is directly added to the node observations for nodes with odd indices. Finally, samples were generated from the joint Gaussian distribution $\tilde{P}(\mathbf{y})=\mathcal{N}(\mathbf{y};\mathbf{0}, \mathbf{\Sigma}^*+\textbf{D}^*)$.

\subsection{Modifications to the Algorithm}
Even though \citet{Katiyar19ICML} proposed an algorithm for the partial learning of Gaussian graphical models (given noisy observations), it is not directly implementable to the case in which we have a {\em finite} number of samples $n$. An algorithm for learning trees with noisy samples up to their equivalence classes was proposed by \citet{Katiyar20arxiv} but the algorithm provided therein was originally used to recover {\em Ising  models}. Hence, some modifications to the algorithm had to be made so that it is amenable to  Gaussian graphical models.  In particular, there are two thresholds given to form the proximal set of any node $i$ in the Ising case. The first, $t_1 \triangleq (1-2q_{\max})^2\rho^4_{\min}$, gives a lower bound for the correlation between $i$ any other node with distance at most 4 between them. The second, $t_2 \triangleq \min\big\{t_1, \frac{t_1(1-2q_{\max})}{\rho_{\max}} \big\}$, gives another lower bound for the correlation of $i$ and the first node in the path $i$ to some other node $j$ where $\tilde{\rho}_{i,j}\geq t_1$.

A similar idea can be used to construct proximal sets for the Gaussian case using the correlation decay property. First, note that the correlation  coefficient between the variable $X_i$ and its noisy counterpart $Y_i=X_i+N_i$  is
\begin{align}
    \rho_{X_iY_i} = \frac{\mathbb{E}[X_i Y_i]}{\sqrt{\mathbb{E}[X^2_i]\mathbb{E}[ Y^2_i]}} &=  \frac{\mathbb{E}[X_i(X_i+N_i)]}{\sqrt{\mathbb{E}[X^2_i] \mathbb{E}[(X_i+N_i)^2]}}=\frac{\mathbb{E}[X_i^2 ]}{\sqrt{\mathbb{E}[X^2_i] (\mathbb{E}[ X_i^2] +\mathbb{E}[N_i^2] ) }}= \frac{1}{\sqrt{1 + \frac{\sigma^2_i}{\mathbb{E}[X^2_i]}}}. \label{eqn:defS}
\end{align}
Let $S_i \triangleq {\sigma^2_i}/{\mathbb{E}[X^2_i]}$ and $S_{\max} \triangleq \max_{1\leq i \leq d} S_i$. With the correlation decay property for Gaussian tree models~\citep[Eqn.~(18)]{Tan10TSP}, any node $j$ within radius $4$ of $i$ will have its noisy correlation coefficient bounded as follows
\begin{align}
    \rho_{Y_iY_j} &= \rho_{Y_iX_i}\cdot \rho_{X_iX_j}\cdot \rho_{X_jY_j}
    \geq   \frac{1}{\sqrt{1 + \frac{\sigma^2_i}{\mathbb{E}[X^2_i]}}}\cdot\rho_{\min}^4\cdot \frac{1}{\sqrt{1 + \frac{\sigma^2_j}{\mathbb{E}[X^2_j]}}}
    \geq \rho_{\min}^4\cdot \frac{1}{1+S_{\max}}.
\end{align}
In this way, we obtain the first threshold $h_1 \triangleq \frac{\rho_{\min}^4}{1+S_{\max}}$. Similarly, if we let node $k$ be the first node in the path from $j$ to $i$ (i.e., $k$ is at distance $1$ from $j$), then it holds that 
\begin{equation}
\frac{\rho_{Y_i Y_j}}{\rho_{X_k  X_j}} = \underbrace{ \rho_{X_i Y_i}\cdot\rho_{X_i X_k}\cdot\rho_{X_k Y_k}}_{\rho_{Y_iY_k}}\cdot\frac{\rho_{X_jY_j}}{\rho_{X_k Y_k}},
\end{equation}
whence, by \eqref{eqn:defS}  and the definition of $S_i$, the correlation coefficient  between $Y_i$ and $Y_k$ can be bounded as follows
\begin{align}
    \rho_{Y_iY_k} = \frac{\rho_{Y_iY_j}}{\rho_{X_kX_j}}\cdot \sqrt{ \frac{1+S_j}{1+S_k} }
    \geq \frac{h_1}{\rho_{\max}\sqrt{1+S_{\max}}}.
\end{align}
Thus, we get our second threshold $h_2 \triangleq \min\big\{h_1, \frac{h_1}{\rho_{\max} \sqrt{1+S_{\max}}}\big\}$. So the proximal set of node $i$ is defined as the set of all nodes $j$ that satisfy $|\hat{\rho}_{i,j}|\geq 0.5h_2$.

\subsection{Numerical Results}\label{sec:gauss_num}
We now present numerical results for Gaussian tree models, comparing the performance of $\Psi_{\SGA}$ and $\Psi_{\KA}$ for three different tree structures with $d=10$ nodes: (i)~Chain, (ii)~Hybrid, and (iii)~Star. For a given tree structure $\T$, and $n$ noisy samples $\brY_1^n$, the error probability $\bbP\big(\Psi(\brY_1^n) \notin [\T] \big)$ for a given learning algorithm $\Psi$, is estimated using $10^5$ iterations (or runs) in the Monte Carlo simulation framework, where an error is declared if the estimated tree does not belong to the equivalence class $[\T]$. For the noisy case, we set $[\textbf{D}^*]_{i,i}=2$ for $i \in \{1,3,5,7,9\}$.

\begin{figure}[t]
	\centering
	\includegraphics[width=0.60\textwidth, angle=0]{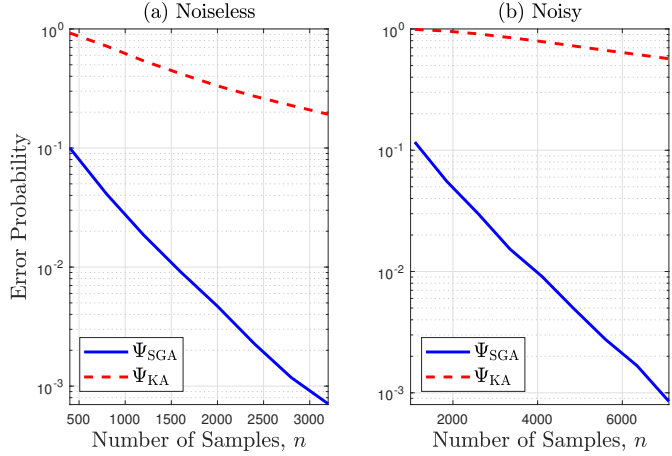}
	\caption{Comparison of error probabilities for a $10$-node chain, with $w=0.5$ (see~\eqref{eq:Def_w}).}
	\label{Fig:Gaussian_ErrProb_10Chain_wPoint5}
\end{figure}

\subsubsection{$10$-node chain}

Fig.~\ref{Fig:Gaussian_ErrProb_10Chain_wPoint5} plots the results for a $10$-node chain for the (a)~noiseless and (b)~noisy cases, with $w=0.5$ (see~\eqref{eq:Def_w}). It is seen that $\Psi_{\SGA}$ significantly outperforms $\Psi_{\KA}$ for the Gaussian chain; a similar trend was observed for the Ising chain in Fig.~\ref{Fig:ErrProb_12Chain}.

\begin{figure}[t]
	\centering
	\includegraphics[width=0.60\textwidth, angle=0]{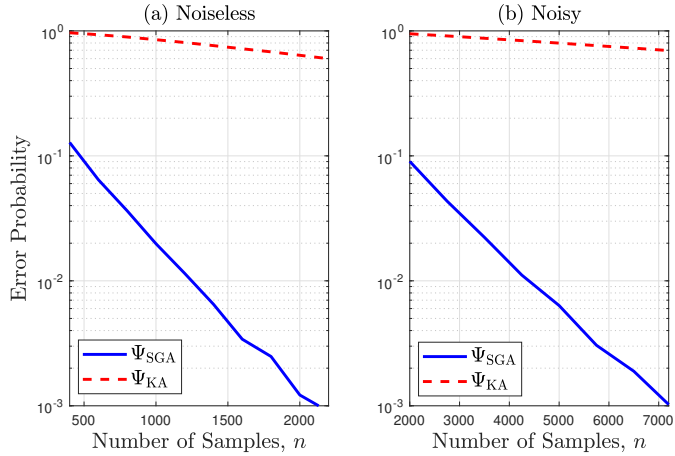}
	\caption{Comparison of error probabilities for a $10$-node hybrid tree structure, with $w=0.38$.}
	\label{Fig:Gaussian_ErrProb_10Hybrid_wPoint38}
\end{figure}

\subsubsection{$10$-node hybrid tree structure}

Fig.~\ref{Fig:Gaussian_ErrProb_10Hybrid_wPoint38} plots the results for a $10$-node hybrid tree structure for the (a)~noiseless and (b)~noisy cases, with $w=0.38$ (see~\eqref{eq:Def_w}). The hybrid tree structure is a combination of chain and star structures where nodes $1$ to $5$ are linked in the form of a chain, while nodes $6$ to $10$ are directly connected to node $5$. Similar to the performance comparison for the Ising hybrid tree in Fig.~\ref{Fig:ErrProb_12Hybrid}, it is seen that $\Psi_{\SGA}$ significantly outperforms $\Psi_{\KA}$ for the Gaussian hybrid tree.

\subsubsection{$10$-node star}

\begin{figure}[t]
	\centering
	\includegraphics[width=0.60\textwidth, angle=0]{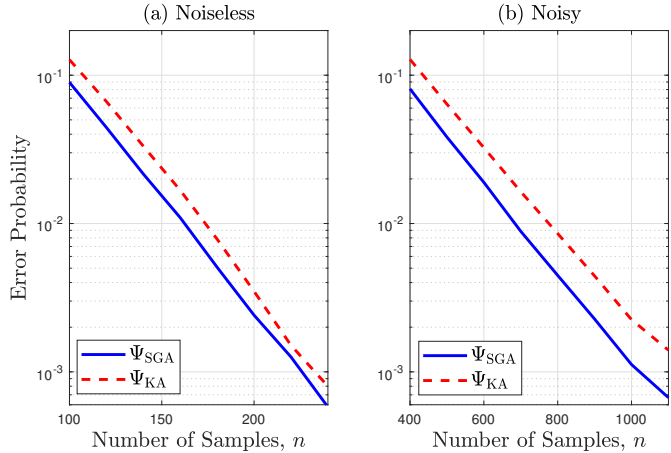}
	\caption{Comparison of error probabilities for a $10$-node star tree, with $w=0.325$.}
	\label{Fig:Gaussian_ErrProb_10Star_wPoint325}
\end{figure}

Fig.~\ref{Fig:Gaussian_ErrProb_10Star_wPoint325} plots the results for a $10$-node star tree structure for the (a)~noiseless and (b)~noisy cases, with $w=0.325$ (see~\eqref{eq:Def_w}), where nodes $2$ to $10$ are directly connected to node $1$. It is seen that the performance of $\Psi_{\SGA}$ is only slightly better than that of $\Psi_{\KA}$; a similar trend was observed for the Ising star tree in Fig.~\ref{Fig:ErrProb_12Star_tPoint2}. This is corroborated by the error exponent results in Fig.~\ref{Fig:ErrExp_Star}, which is for Ising models but we expect the same behavior for Gaussian models. 

These experiments for Gaussian tree models learned using noisy samples demonstrate that the behavior of SGA and the algorithm by \citet{Katiyar20arxiv} are qualitatively very similar to the Ising case as detailed in Sec.~\ref{sec:NumericalResults}. In particular, the performance of SGA is far superior to that of \citet{Katiyar20arxiv} for chains and hybrid trees (i.e., trees with moderate to large diameter). SGA's performance is comparable to the algorithm by \citet{Katiyar20arxiv} for stars (i.e., trees with small diameter), though in this case, SGA's performance is still marginally better. 

\end{document}